\newcommand*{\COLT}{}
\newcommand*{\CAMREADY}{}
\global\long\def\bra#1{\left\langle #1\right|}
\global\long\def\ket#1{\left|#1\right\rangle }
\global\long\def\braket#1#2{\left\langle #1|#2\right\rangle }
	\newtheorem{claim}[theorem]{Claim}
	\newtheorem{fact}[theorem]{Fact}
	\newtheorem{lemma}{Lemma}
	\newtheorem{theorem}{Theorem}
	\newtheorem{claim}{Claim}
\def\be{\begin{equation}}
\def\ee{\end{equation}}
\def\beas{\begin{eqnarray*}}
\def\eeas{\end{eqnarray*}}
\def\bea{\begin{eqnarray}}
\def\eea{\end{eqnarray}}
\newcommand{\h}{{\mathbf h}}
\newcommand{\x}{{\mathbf x}}
\newcommand{\uu}{{\mathbf u}}
\newcommand{\vv}{{\mathbf v}}
\newcommand{\w}{{\mathbf w}}
\newcommand{\aaa}{{\mathbf a}}
\newcommand{\A}{{\mathcal A}}
\newcommand{\B}{{\mathcal B}}
\newcommand{\G}{{\mathcal G}}
\newcommand{\X}{{\mathcal X}}
\newcommand{\Y}{{\mathcal Y}}
\newcommand{\R}{{\mathbb R}}
\newcommand{\N}{{\mathbb N}}
\newcommand{\abs}[1]{\left\lvert#1 \right\rvert}
\DeclareMathOperator*{\argmax}{argmax}
\DeclareMathOperator*{\argmin}{argmin}
\newcommand{\cupdot}{\mathbin{\mathaccent\cdot\cup}}
\newcommand{\mat}[1]{\llbracket#1\rrbracket}
\newcommand{\tmm}{sharir2016tensorial}
	\newcommand*{\ABBR}{}
	\newcommand*{\ABBR}{}
	\newcommand*{\ABBR}{}
	\newcommand*{\ABBR}{}
	\newcommand{\eg}{\emph{e.g.}}
	\newcommand{\ie}{\emph{i.e.}}
	\newcommand{\wrt}{w.r.t.}
\begin{document}


\ifdefined\ICLR
	\title{Deep Learning and Quantum Entanglement: Fundamental Connections with Implications to Network Design}
	\author{}
	\maketitle
\fi
\ifdefined\COLT
	\title[Deep Learning and Quantum Entanglement]{Deep Learning and Quantum Entanglement:\\ Fundamental Connections with Implications to Network Design}
	\coltauthor{\Name{Yoav Levine} \Email{yoavlevine@cs.huji.ac.il}\\
	\Name{David Yakira} \Email{davidyakira@cs.huji.ac.il}\\
	\Name{Nadav Cohen} \Email{cohennadav@cs.huji.ac.il}\\
	\Name{Amnon Shashua} \Email{shashua@cs.huji.ac.il}\\
	\addr The Hebrew University of Jerusalem}
	\maketitle
\fi

\begin{abstract}
Deep convolutional networks have witnessed unprecedented success in various machine learning applications. Formal understanding on what makes these networks so successful is gradually unfolding, but for the most part there are still significant mysteries to unravel. The inductive bias, which reflects prior knowledge embedded in the network architecture, is one of them. In this work, we establish a fundamental connection between the fields of quantum physics and deep learning. We use this connection for asserting novel theoretical observations regarding the role that the number of channels in each layer of the convolutional network fulfills in the overall inductive bias. Specifically, we show an equivalence between the function realized by a deep convolutional arithmetic circuit (ConvAC) and a quantum many-body wave function, which relies on their common underlying tensorial structure. This facilitates the use of quantum entanglement measures as well-defined quantifiers of a deep network's expressive ability to model intricate correlation structures of its inputs. Most importantly, the construction of a deep convolutional arithmetic circuit in terms of a Tensor Network is made available. This description enables us to carry a graph-theoretic analysis of a convolutional network, tying its expressiveness to a min-cut in the graph which characterizes it. Thus, we demonstrate a direct control over the inductive bias of the designed deep convolutional network via its channel numbers, which we show to be related to the min-cut in the underlying graph. This result is relevant to any practitioner designing a convolutional network for a specific task. We theoretically analyze convolutional arithmetic circuits, and empirically validate our findings on more common convolutional networks which involve ReLU activations and max pooling. Beyond the results described above, the description of a deep convolutional network in well-defined graph-theoretic tools and the formal structural connection to quantum entanglement, are two interdisciplinary bridges that are brought forth by this work.

\end{abstract}

\section{Introduction} \label{sec:intro}

A central factor in the application of machine learning to a given task is the restriction of the hypothesis space of learned functions known as \emph{inductive bias}.
The restriction posed by the inductive bias is necessary for practical learning, and reflects prior knowledge regarding the task at hand.
In deep convolutional networks, prior knowledge is embedded in architectural features such as number of layers, number of channels per layer, the pattern of pooling, various schemes of connectivity and convolution kernel defined by size and stride (see \cite{LeCun:2015dt} for an overview). Formal understanding of the inductive bias behind convolutional networks is limited~--~the assumptions encoded into these models, which seem to form an excellent prior knowledge for imagery data (\eg~\cite{Krizhevsky:2012wl,simonyan2014very,Szegedy:2014tb,he2015deep}) are for the most part a mystery.

A key observation that facilitates reasoning about inductive bias, is that the influence of an architectural attribute (such as connectivity, number of channels per layer) can be measured by its contribution to the effectiveness of the representation of correlations between regions of the input. In this regard, one considers different partitions that divide input regions into disjoint sets, and asks how far the function (realized by the network) is from being separable with respect to these partitions. For example, \cite{cohen2017inductive} show that when separability is measured through the algebraic notion of separation-rank, deep convolutional arithmetic circuits (ConvACs) support exponentially high separation ranks for certain input partitions, while being limited to polynomial or linear (in network size) separation ranks for others. They show that the network's pooling geometry effectively determines which input partitions are favored in terms of separation rank, \ie~which partitions enjoy the possibility of exponentially high separation rank with polynomial network size, and which require the network to be exponentially large.

In this work, we draw upon formal similarities between how physicists represent a system of many-particles as a quantum mechanical wave function, to how machine learning practitioners map a many-regions image to a set of output labels through a deep network. In particular, we show that there is a one-to-one structural equivalence between a function modeled by a ConvAC (\cite{cohen2016expressive}) and a many-body quantum wave function.
This allows employment of the well-established physical notion of quantum entanglement measures (see overview in \cite{plenio2005introduction}),  which subsumes other algebraic notions of separability such as the separation rank mentioned above, for the analysis of correlations modeled by deep convolutional networks.

Moreover, and most importantly, quantum entanglement is used by physicists as a prior knowledge to form compact representations of the many-body wave functions in what is known as {\it Tensor Networks} (TNs), (\cite{ostlund1995thermodynamic,verstraete2004renormalization,vidal2008class}). In machine learning, a network in the form of a ConvAC is effectively a compact representation of a multi-dimensional array containing the convolutional weights. The function realized by the network is analyzed via tensor decompositions --- where the representations are based on linear combinations of outer-products of lower-order tensors. Such analyses of a ConvAC via tensor decompositions follows several recent works utilizing tensor decompositions for theoretical studies of deep learning (see for example~\cite{Janzamin:2015uz,sedghi2016training}), and in particular builds on the equivalence between hierarchical tensor decompositions and convolutional networks established in~\cite{cohen2016expressive} and~\cite{cohen2016convolutional}. A TN, on the other hand, is a way to compactly represent a higher-order tensor through contractions (or inner-products) among lower-order tensors. A TN also has the important quality of a representation through an underlying graph. Although the fundamental language is different, we show that a ConvAC can be mapped to a TN.

Once a ConvAC is described in the language of a TNs we obtain a substantial advantage that today is lacking from the tool set of machine learning. A convolutional network, and ConvAC in particular, is sometimes described in the language of nodes and edges but those descriptions are merely illustrations since a deep network is not a graph in the graph-theoretic sense. A variety of works study active-learning by associating the input of the network with a graph, e.g. \cite{blum1998combining,blum2004semi,argyriou2005combining,guillory2009label,gu2012towards,dasarathy2015s2}, and \cite{henaff2015deep} show how to construct deep convolutional networks that fit graph-structured Data. \cite{bruna2013spectral} use spectral measures to present an alternative construction of deep neural networks that utilizes the connectivity of graphs, and \cite{niepert2016learning} propose methods for applying convolutional networks to graph-based learning problems. Graph theoretic measures and tools are not widely used for the analysis of the function realized by a deep convolutional network, as for example the notion of an edge-cut and flow have no meaning in the context of the network structure. A TN, on the other hand, is a graph in the graph-theoretic sense and in particular notions of max-flow, min-cut convey important meaning.

This brings us back to the inductive bias mentioned above. Using the fact that max-flow over an edge-cut set describes the expressivity of the network per partition of the input, we obtain upper and lower bounds on this expressivity using a min-cut analysis. Specifically, the ability of a ConvAC to represent correlations between input regions is upper-bounded by a min-cut over all edge-cut sets that separate the corresponding nodes in the associated TN. Furthermore, we show that under a quite general setting of the number of channels per layer the bound is also tight. This kind of result enables one to avoid bottle-necks and adequately tailor the design of a network architecture through application of prior knowledge. Our results are theoretically proven for a deep ConvAC architecture, and their applicability to a conventional deep convolutional network architecture (ConvNet) which involves ReLU activations and max pooling is demonstrated through experiments.

Generally, the bounds we derive connect the inductive bias to the number of channels in each layer, and imply how these should be optimally set in order to satisfy given prior knowledge on the task at hand.  Some empirical reasoning regarding the influence of the number of channels has been suggested (e.g. \cite{szegedy2016rethinking}), mainly regarding the issue of bottle-necks which is naturally explained via our theoretical analysis below. Those bounds and insights on the architectural design of a deep network are new to the machine learning literature, yet rely on known bounds on TNs (albeit only recently) in the physics literature --- those are known as `quantum min-cut max-flow' bounds introduced by \cite{cui2016quantum}. The mapping we present between ConvACs to TNs opens many new possibilities for the use of graph-theory in deep networks, where min-cut analysis could be just the beginning. Additionally, the connections we derive to quantum entanglement may open the door to further well established physical insights regarding correlation structures modeled by deep networks.

The connections between Physics and deep neural networks cover a spectrum of contributions, among which are \cite{beny2013deep} who discussed similarities between deep learning and the renormalization group (RG),   \cite{mehta2014exact} who connected RG to deep learning architectures based on Restricted Boltzmann Machines (RBMs), and \cite{lin2016does} who related basic physical properties such as symmetry, locality and others to the operation of neural networks. The use of TNs in machine learning has appeared in an empirical context where \cite{stoudenmire2016supervised} trained a matrix product state (MPS) TN architecture to preform supervised learning tasks on the MNIST data-set (\cite{lecun1998mnist}) using a sweeping optimization method inspired by the density matrix renormalization group (DMRG) algorithm (\cite{white1992density}), renowned in the numerical physics community for its ability to obtain a good approximation for the physical attributes of many-body systems. Additionally, There is a growing interest in the physics community in RBM based forms for a variational many-body wave functions (\cite{carleo2017solving,torlai2016learning,deng2016exact,huang2017accelerated}). \cite{chen2017equivalence} have presented a theoretical mapping between RBMs and TNs in the form of MPS and the resembling two-dimensional projected entangled pair state (PEPS), with which they propose to remove redundancies in RBMs in one direction by using a canonical representation of the MPS, and more compactly represent a TN wave function by an RBM in the other direction. Moreover, this construction allows them to connect the entanglement bounds of the Tensor Network state to the expressiveness of the corresponding RBM.

The remainder of this paper is organized as follows. In sec.~\ref{sec:prelim} we provide preliminary background in the field of tensor analysis, and present the ConvAC architecture. In sec.~\ref{sec:PhysIntro} we establish the formal connection between the function realized by a ConvAC and a many-body quantum wave function. In sec.~\ref{sec:CorrelationEntanglement} we present the concept of quantum entanglement measures, describe correlations in the context of machine learning and use the connection above to define quantitative correlation measures for a convolutional network. In sec.~\ref{sec:TensorNetworks} we provide an introduction to TNs and describe the tensor decompositions that are used in the analysis of the ConvAC architecture. In sec.~\ref{sec:translations} we construct the TN architecture that is equivalent to a ConvAC. Our analysis of the effect of the number of channels on the correlations modeled by a ConvAC is given in sec.~\ref{sec:mincutclaim}, followed by experiments empirically extending our findings to ConvNets in sec.~\ref{sec:experiments}.  Finally, sec.~\ref{sec:discussion} concludes.

\section{Preliminaries} \label{sec:prelim}

The analyses of ConvACs and TNs that are carried out in this paper rely on concepts and results from the field of tensor analysis. We cover below in sec.~\ref{sec:prelim:TensorAnalysis} basic background on tensors, tensor products, and tensor matricization. Then, we describe in sec.~\ref{sec:prelim:ConvAC} ConvACs which form a family of convolutional networks described using the language of tensors. ConvACs can be simply thought of as regular convolutional networks (ConvNets), but with linear activations and product pooling layers, instead of the more common non-linear activations (e.g. ReLU) and average/max pooling.

\subsection{Concepts from Tensor Analysis} \label{sec:prelim:TensorAnalysis}

We provide below minimal background on tensors which is sufficient to follow the definitions and arguments of this paper. For a comprehensive introduction it is worthwhile to refer to \cite{Hackbusch-book}.
The core concept in tensor analysis is a \emph{tensor}, which may be thought of as a multi-dimensional array.
The \emph{order} of a tensor is defined to be the number of indexing entries in the array, which are referred to as \emph{modes}.
The \emph{dimension} of a tensor in a particular mode is defined as the number of values that may be taken by the index in that mode. If $\A$ is a tensor of order $N$ and dimension $M_i$ in each mode $i\in[N]:=\{1,\ldots,N\}$, the space of all configurations it can take is denoted, quite naturally, by $\R^{M_1{\times\cdots\times}M_N}$.

An important concept we will make use of is \emph{matricization}, which is essentially the rearrangement of a tensor as a matrix.
Suppose $\A$ is a tensor of order $N$ and dimension $M_i$ in each mode $i\in[N]$, and let $(I,J)$ be a partition of $[N]$, \ie~$I$ and~$J$ are disjoint subsets of $[N]$ whose union gives~$[N]$.
We may write $I=\{i_1,\ldots,i_{\abs{I}}\}$ where $i_1<\cdots<i_{\abs{I}}$, and similarly $J=\{j_1,\ldots,j_{\abs{J}}\}$ where $j_1<\cdots<j_{\abs{J}}$.
The \emph{matricization of $\A$ \wrt~the partition $(I,J)$}, denoted $\mat{\A}_{I,J}$, is the $\prod_{t=1}^{\abs{I}}M_{i_t}$-by-$\prod_{t=1}^{\abs{J}}M_{j_t}$ matrix holding the entries of $\A$ such that $\A_{d_1{\ldots}d_N}$ is placed in row index $1+\sum_{t=1}^{\abs{I}}(d_{i_t}-1)\prod_{t'=t+1}^{\abs{I}}M_{i_{t'}}$ and column index $1+\sum_{t=1}^{\abs{J}}(d_{j_t}-1)\prod_{t'=t+1}^{\abs{J}}M_{j_{t'}}$.

A fundamental operator in tensor analysis is the \emph{tensor product}, which we denote by $\otimes$.
It is an operator that intakes two tensors $\A\in\R^{M_1{\times\cdots\times}M_P}$ and $\B\in\R^{M_{P+1}{\times\cdots\times}M_{P+Q}}$ (orders $P$ and $Q$ respectively), and returns a tensor $\A\otimes\B\in\R^{M_1{\times\cdots\times}M_{P+Q}}$ (order $P+Q$) defined by: $(\A\otimes\B)_{d_1{\ldots}d_{P+Q}}=\A_{d_1{\ldots}d_P}\cdot\B_{d_{P+1}{\ldots}d_{P+Q}}$.
Notice that in the case $P=Q=1$, the tensor product reduces to the standard outer product between vectors, \ie~if $\uu\in\R^{M_1}$ and $\vv\in\R^{M_2}$, then $K=\uu\otimes\vv$ is no other than the rank-$1$ matrix $K=\uu\vv^\top\in\R^{M_1{\times}M_2}$, whose entries hold the value: $K_{ij}=u_iv_j$. A generalization to the tensor product of $N$ vectors $\vv^{(j)}\in\R^{M_j}$ for $j\in[N]$, results in an order $N$ tensor $\A_{d_1...d_N}^{\textrm{(rank 1)}}=\vv^{(1)}\otimes\cdots\otimes\vv^{(N)}$, whose entries hold the values:
\be
\A_{d_1...d_N}^{\textrm{(rank 1)}}=\prod_{j=1}^{N} v_{d_j}^{(j)}.
\label{eq:rank1tensor}
\ee
Tensors of this form are regarded as having \emph{rank-$1$} (assuming $\vv^{(j)} \neq 0~~\forall j$).


\subsection{Convolutional Arithmetic Circuits } \label{sec:prelim:ConvAC}
\begin{figure}
\centering
\includegraphics[width=\linewidth]{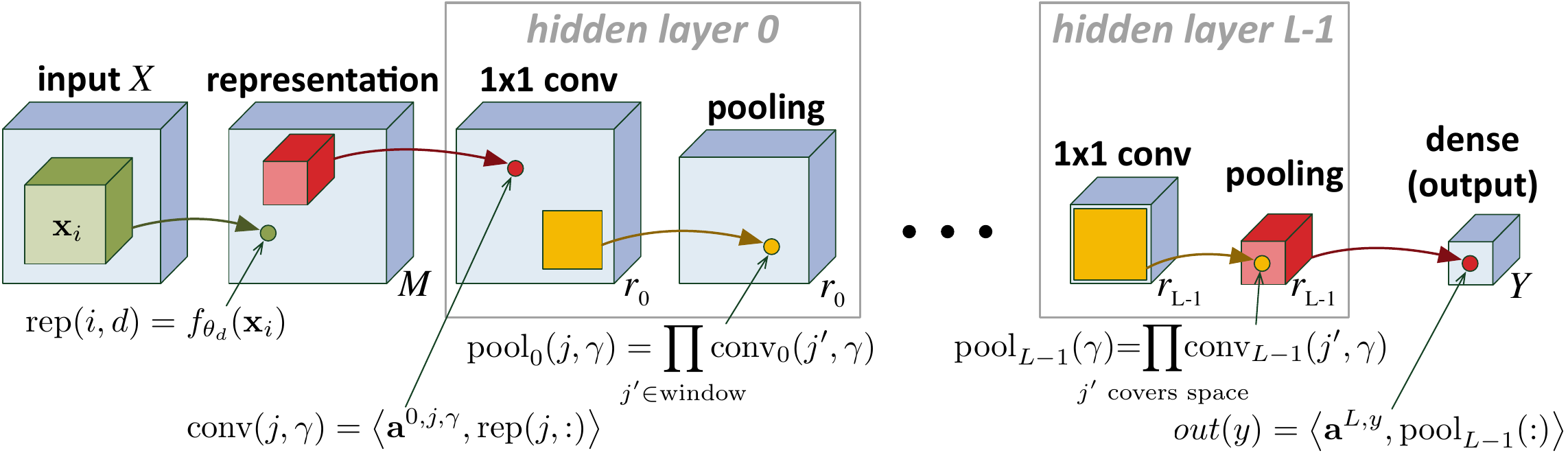}
\caption{The original Convolutional Arithmetic Circuits network as presented by \cite{cohen2016expressive}.}
\label{fig:original_convac}
\end{figure}

Our construction will relate to the convolutional arithmetic circuit (ConvAC) architecture introduced by \cite{cohen2016expressive}. The ConvAC network, illustrated in fig.~\ref{fig:original_convac}, is a deep convolutional network that operates exactly as a regular ConvNet, only with linear activations and product pooling layers (which introduce the non-linearity) instead of the more common non-linear activations (e.g. ReLU) and average/max pooling. From an empirical perspective, ConvACs work well in many practical settings, e.g.
for optimal classification with missing data (\cite{\tmm}), and for compressed networks (\cite{cohen2016deep}). Furthermore, their underlying operations lend themselves to mathematical analyses based on measure theory and tensor analysis --- the depth efficiency of deep convolutional networks was shown using this framework. Importantly, through the concept of generalized tensor decompositions, ConvACs can be transformed to standard ConvNets
with ReLU activations and average/max pooling, which laid the foundation for extending its proof methodologies
to general ConvNets (\cite{cohen2016convolutional}). This deep learning architecture was chosen for our analysis below due to its underlying tensorial structure which resembles the quantum many-body wave function, as we show in sec.~\ref{sec:PhysIntro}. 

In the ConvAC architecture (see fig.~\ref{fig:original_convac}), each point in the input space of the network, denoted by $X=(\x_1,\ldots,\x_N)$,
is represented as an $N$-length sequence of $s$-dimensional vectors $\x_1,\ldots,\x_N \in \R^s$. $X$ is typically
thought of as an image, where each $\x_j$ corresponds to a local patch from that image and $s$ is the number of pixels in each patch. The first layer of the
network is referred to as the representation layer, which involves the application of $M$ representation functions
$f_{\theta_1},\ldots,f_{\theta_M}:\R^s \to \R$ on each local patch $\x_j$, giving rise to $M$ feature maps.
Under the common setting, where the representation functions are selected to
be $f_{\theta_d}(\x) = \sigma(\w_d^T\x + b_d)$ for some point-wise activation $\sigma(\cdot)$ and
parameterized by $\theta_d = (\w_d,b_d) \in \R^s \times \R$, the representation layer reduces to the
standard convolutional layer. Other possibilities, e.g. gaussian functions with diagonal covariances,
have also been considered in \cite{cohen2016expressive}. Following the representation layer, are
hidden layers indexed by $l=0,\ldots,L-1$, each beginning with a $1\times1$ \emph{conv} operator,
which is just an $r_{l-1}\times1\times1$ convolutional layer with $r_{l-1}$ input channels and $r_l$
output channels. Following each \emph{conv} layer is a spatial pooling,
that takes products of non-overlapping two-dimensional windows covering the output of the previous layer,
where for $l=L-1$ the pooling window is the size of the entire spatial dimension (i.e. global pooling),
reducing its output's shape to a $r_{L-1}\times 1 \times 1$, i.e. an $r_{L-1}$-dimensional vector.
The final $L$ layer maps this vector with a dense linear layer into the $Y$ network outputs, denoted by
$\h_y(\x_1,\ldots,\x_N)$, representing score functions classifying each $X$ to one of the classes through:
$y^* = \argmax_y \h_y(\x_1,\ldots,\x_N)$. As shown in \cite{cohen2016expressive}, these functions have the
following form:
\begin{align}\label{eq:convac}
\h_y(\x_1,\ldots,\x_N) = \sum_{d_1,\ldots,d_N=1}^M \A^y_{d_1...d_N} \prod_{j=1}^N f_{\theta_{d_j}}(\x_j),
\end{align}
where $\A^y$, called the \emph{convolutional weights tensor}, is a tensor of order $N$ and dimension $M$ in each mode, with entries given by polynomials in the network's convolutional
weights.

Several decompositions of the convolutional weights tensor were introduced in previous works. Any tensor can be expressed as a sum of rank-$1$ tensors, and the decomposition of $\A$ in such a manner is referred to as the CANDECOMP/PARAFAC decomposition of $\A$, or in short the \emph{CP decomposition}. This corresponds to a network such as depicted in fig.~ \ref{fig:original_convac} with one hidden layer collapsing the entire spatial structure through global pooling. Another decomposition which was shown to be universal, is the Hierarchical Tucker tensor decomposition which we refer to in short as \emph{HT decomposition}. A restricted version of this decomposition (which includes same channel pooling; to be presented more thoroughly below), corresponds to a network such as depicted in fig.~ \ref{fig:original_convac} with $L=\log_{4}N$ hidden layers which pool over size-$2\times 2$ windows. This deep convolutional network was shown to have an exponential advantage in expressiveness over the shallow one realizing the CP decomposition (\cite{cohen2016expressive}) (our analysis below reproduces this result as a by-product, see sec.~\ref{sec:mincutclaim:implications_depth}). We will discuss such tensors decompositions extensively in sec.~\ref{sec:TensorNetworks:decomp} as we will tie them to Tensor Networks, a tool used by physicists when describing many-body quantum wave functions, leading us to the main results of this work.

Finally, it is instructive for our purposes to define $N$ vectors $\vv^{(j)}\in\R^{M}$ for $j\in[N]$ that hold the values $v_{d}^{(j)}=f_{\theta_{d}}(\x_j)$ for $d\in[M]$. This construction implies that the result of applying the $d^{th}$ representation function on the $j^{th}$ image patch is stored in the $d^{th}$  entry of $\vv^{(j)}$. Using the form of the rank $1$ tensor presented in eq.~\ref{eq:rank1tensor}, the score functions may then be written as:
\begin{align}\label{eq:convacrank1}
\h_y(\x_1,\ldots,\x_N) = \sum_{d_1,\ldots,d_N=1}^M \A^y_{d_1...d_N} \A_{d_1...d_N}^{\textrm{(rank 1)}}(\x_1,\ldots,\x_N),
\end{align}
where as described above, $\A_{d_1...d_N}^{\textrm{(rank 1)}}(\x_1,\ldots,\x_N)$ is obtained by applying the representation functions on the input patches  and  $\A^y_{d_1...d_N}$ represents the network's convolutional weights.

In a way, ConvACs form a bridge between ConvNets and TNs as they are described using the language of tensors, similarly to TNs. The tensor language underlying ConvACs is based on rank decompositions through linear combinations of outer-products of low-order tensors (see sec.~\ref{sec:TensorNetworks:decomp}) whereas TNs are described by contractions over low-order tensors (sec.~\ref{sec:TensorNetworks:intro}). Before we can make this connection, we describe below the quantum mechanical language necessary to map between the two worlds, and show a structural equivalence between ConvACs and many-body wave functions.

\section{Quantum Wave Functions and Convolutional Networks} \label{sec:PhysIntro}

When describing the quantum mechanical properties of a system composed of many interacting particles, referred to as a \emph{many-body} quantum system, physicists are required to employ functions which are able to express an elaborate relation between many inputs to an output. Similarly, machine learning tasks such as supervised or unsupervised learning, require functions with the ability to express a complex relation between many inputs, e.g. many pixels in an image, to an output. In this section, we will formulate this analogy.
After a short introduction to the notation used by physicists when describing quantum mechanical properties of a many-body system, we show how the function realized by a ConvAC, given in eqs.~\ref{eq:convac} and~\ref{eq:convacrank1}, is mathematically equivalent to a quantum wave function of $N$ particles. This construction, which constitutes a solid structural connection between the two seemingly unconnected fields of machine learning and quantum physics, is enabled via the tensorial description of a deep convolutional network that is brought forth by the ConvAC.
We follow relevant derivations in \cite{preskill1998lecture} and refer the interested reader to \cite{hall2013quantum} for  a comprehensive mathematical introduction to quantum mechanics.

\subsection{The Quantum Many-Body Wave Function} \label{sec:PhysIntro:SingleParticle}
A state of a system, which is a complete description of a physical system, is given in quantum mechanics as a \emph{ray} in a Hilbert space (to be defined below). Relevant Hilbert spaces in quantum mechanics are vector spaces over the complex numbers. We will restrict our discussion to vector spaces over $\R$, as the properties related to complex numbers are not required for our analysis and do not affect it. Physicists denote such vectors in the `ket' notation, in which a vector $\psi$ is denoted by: $\left|\psi\right\rangle \in{\cal H}$. The Hilbert space ${\cal H}$ has an inner product denoted by $\braket{\phi}{\psi}$, that maps a pair of two vectors in ${\cal H}$ to a scalar. This inner product operation is also referred to as `projecting $\ket{\psi}$ onto $\ket{\phi}$'. A ray is an equivalence class of vectors that differ by multiplication by a nonzero scalar. For any nonzero ray, a representative of the class, $\ket{\psi}$, is conventionally chosen to have a unit norm: $\braket{\psi}{\psi}=1$. A `bra' notation $\bra{\phi}$, is used for the `dual vector' which formally is a linear mapping between vectors to scalars defined as $\ket{\psi}\mapsto\braket{\phi}{\psi}$. We can intuitively think of a `ket' as a column vector and `bra' as a row vector.

Relevant Hilbert spaces can be infinite dimensional or finite dimensional. We will limit our discussion to quantum states which reside in finite dimensional Hilbert spaces, as eventually these will be at the heart of our analogy to convolutional networks. Besides being of interest to us, these spaces are extensively investigated in the physics community as well. For example, the spin component of a spinful particle's wave function resides in a finite dimensional Hilbert space. One can represent a general state $\ket{\psi}\in \cal H$, where $\textrm{dim}({\cal H})=M$, as a linear combination of some orthonormal basis vectors:
\be
\ket{\psi}=\sum_{d=1}^{M}v_d\ket{\psi_d},
\label{eq:singleWf}
\ee
where $\vv\in\mathbb{R}^{M}$ is the vector of coefficients compatible with the basis $\{\ket{\psi_d}\}_{d=1}^{M}$ of $\cal H$, each entry of which can be calculated by the projection: $v_d=\braket{\psi_d}{\psi}$.	

The extension to the case of $N$ particles, each with a wave function residing in a local finite dimensional Hilbert space ${\cal{H}}_j$ for $j\in[N]$ (e.g. $N$ spinful particles), is readily available through the tool of a tensor product. In order to define a Hilbert space which is the tensor product of the local Hilbert spaces: ${\cal H}:=\otimes_{j=1}^{N}{\cal{H}}_j$, we will specify its scalar product. Denote the scalar product in ${\cal{H}}_j$ by $\braket{\cdot}{\cdot}_j$, then the scalar product of the finite dimensional Hilbert space ${\cal H}=\otimes_{j=1}^{N}{\cal{H}}_j$ between $\ket{\psi}:=\otimes_{j=1}^{N}\ket{\psi^{(j)}}\in{\cal H}$ and $\ket{\phi}:=\otimes_{j=1}^{N}\ket{\phi^{(j)}}\in{\cal H} $ is defined by: $\braket{\phi}{\psi}:=\prod_{j=1}^{N}\braket{\phi^{(j)}}{\psi^{(j)}}_j~,~\forall \ket{\psi^{(j)}},\ket{\phi^{(j)}}\in {\cal{H}}_j$.

For simplicity, we set the dimensions of the local Hilbert spaces ${\cal{H}}_j$ to be equal for all $j$, i.e. $\forall j: \textrm{dim}({\cal{H}}_j)=M$. Physically, this means that the particles have the same spin, e.g. for $N$  electrons (spin $1/2$), $M=2$. Denoting the orthonormal basis of the local Hilbert space by $\{\ket{\psi_{d}}\}_{d=1}^{M}$, the many-body quantum wave function $\left|\psi\right\rangle \in{\cal H}=\otimes_{j=1}^{N}{\cal{H}}_j$ can be written as:
\be
\left|\psi\right\rangle =\sum_{d_{1}...d_{N}=1}^{M}\mathcal{A}_{d_{1}...d_{N}}\left|\psi_{d_1}\right\rangle \otimes\cdots\otimes\left|\psi_{d_N}\right\rangle ,
\label{eq:manyWf}
\ee
where $\left|\psi_{d_1}\right\rangle \otimes\cdots\otimes\left|\psi_{d_N}\right\rangle $
is a basis vector of the $M^{N}$ dimensional Hilbert space ${\cal H}$,
and $\mathcal{A}_{d_{1}...d_{N}}$ is the tensor holding the corresponding
coefficients.


\subsection{Equivalence to a Convolutional Network } \label{sec:PhysIntro:EquivalenceWFCAC}

We will tie between the function realized by a ConvAC, given in eqs.~\ref{eq:convac} and~\ref{eq:convacrank1}, and the many-body quantum wave function given in eq.~\ref{eq:manyWf}. First, we consider a special case of $N$ particles  which exhibit no quantum correlations (to be formulated in sec.~\ref{sec:CorrelationEntanglement} below). The state of such a system is called a \emph{product state}, and can be written down as a single tensor product of local states $\ket{\phi_j}\in {\cal{H}}_j$: $\ket{\psi^{~\textrm{ps}}}=\left|\phi_{1}\right\rangle \otimes\cdots\otimes\left|\phi_{N}\right\rangle$. Let $\{\ket{\psi_{d_j}}\}_{d_j=1}^{M}$  be an orthonormal basis for ${\cal{H}}_j$. By expanding each local state in this basis:
\be
\ket{\phi_{j}}=\sum_{d_j=1}^{M} v^{(j)}_{d_j}\ket{\psi_{d_{j}}},
\ee
the product state assumes a form similar to eq.~\ref{eq:manyWf}:
\be
\ket{\psi^{~\textrm{ps}}}=\sum_{d_{1}...d_{N}=1}^{M}\A^{~\textrm{ps}}_{d_{1}...d_{N}}\left|\psi_{d_1}\right\rangle \otimes\cdots\otimes\left|\psi_{d_N}\right\rangle,
\label{productState}
\ee
with the underlying rank $1$ coefficients tensor $\A^{~\textrm{ps}}_{d_{1}...d_{N}} = \prod_{j=1}^N v^{(j)}_{d_j}$. In a similar construction as one presented in sec~\ref{sec:prelim:ConvAC}, if we compose each local state $\ket{\phi_j}$ s.t. its projection on the local basis vector would equal $v_{d}^{(j)}= \braket{\psi_{d}}{\phi_j}=f_{\theta_{d}}(\x_j)$, then the projection of the many-body quantum
state $\left|\psi\right\rangle $ onto the product state $\ket{\phi}$
is equal to\footnote{The bases were chosen to be orthonormal and the representation functions are only linearly independent and not necessarily orthogonal. Similar to the argument presented in \cite{cohen2017inductive},
as the linear independence of the representation functions implies that the dimension of $span\{f_{\theta_1}{\ldots}f_{\theta_M}\}$ is~$M$, upon transformation of the conv weights in hidden layer~$0$ the overall function~$\h_y$ remains unchanged. }:
\be
\label{eq:equivalence}
\braket{\psi^{~\textrm{ps}}}{\psi}=\sum_{d_{1}...d_{N}=1}^{M}\mathcal{A}_{d_{1}...d_{N}}\prod_{j=1}^{N}f_{\theta_{d_{j}}}\left(\x_{j}\right)=\sum_{d_{1}...d_{N}=1}^{M}\mathcal{A}_{d_{1}...d_{N}}\A^{~\textrm{ps}}_{d_{1}...d_{N}}\left(\x_{1},...,\x_{N}\right),
\ee
reproducing eqs.~\ref{eq:convac} and \ref{eq:convacrank1} for a single class $y$, as $\A^{~\textrm{ps}}_{d_{1}...d_{N}}=\A^{~\textrm{(rank 1)}}_{d_{1}...d_{N}}$ by construction. This result ties between the function realized by a convolutional network to that which a many-body wave function models. Specifically, the tensor holding the convolutional weights is analogous to the coefficients tensor of the many-body wave function, while the input to the convolutional network is analogous to the constructed separable state. In the following sections, we will use this analogy to acquire means of describing and analyzing the expressiveness of the convolutional network via properties of its underlying tensor.

\section{Measures of Entanglement and Correlations} \label{sec:CorrelationEntanglement}

The formal connection between the many-body wave function and the function realized by a ConvAC, given in eq.~\ref{eq:equivalence}, creates an opportunity to employ well-established physical insights and tools for the analysis of convolutional networks. Physicists pay special attention to the inter-particle correlation structure characterizing the many-body wave function, as it has broad implications regarding the physical properties of the examined system. Though this issue has received less attention in the machine learning domain, it can be intuitively understood that correlations characterizing the problem at hand, e.g. the correlations between pixels in a typical image from the data-set, should be taken into consideration when addressing a machine learning problem. We shall see that the demand for `expressiveness' of a function realized by a convolutional network, or equivalently of a many-body wave function, is in fact a demand which relates to the ability of the function to model the relevant intricate correlation structure. In this section, we begin by presenting the means with which physicists quantify correlations, and then move on to discuss how to transfer such means to analyses in the machine learning domain.

\subsection{Measures of Quantum Entanglement } \label{sec:CorrelationEntanglement:quantumentanglement}

Physical correlations are discussed in several related contexts. Here, we will focus on a type of correlation referred to as \emph{quantum entanglement}\footnote{Besides discussing the  quantum entanglement that characterizes the wave function itself, physicists describe related correlations via the concept of quantum operators, which we did not introduce here for conciseness.}.  Consider a partition of the above described system of $N$ particles labeled by integers $[N]:=\{1,\ldots,N \}$, which splits it into two disjoint subsystems $A \cupdot B=[N]$ such that $A = \{a_1, \ldots, a_{|A|}\}$ with $a_1<...<a_{|A|}$ and $B = \{b_1, \ldots, b_{|B|}\}$ with $b_1<...<b_{|B|}$. Let ${\cal{H}} ^{A}$ and ${\cal{H}} ^{B}$ be the Hilbert spaces in which the many-body wave functions of the particles in subsystems $A$ and $B$ reside, respectively, with ${\cal H} ={\cal{H}} ^{A}\otimes{\cal{H}} ^{B}~$\footnote{Actually, ${\cal H} \cong{\cal{H}} ^{A}\otimes{\cal{H}} ^{B}$ with equality obtained upon a permutation of the local spaces that is compliant with the partition $(A,B)$.} . The many-body wave function in eq.~\ref{eq:manyWf} can be now written as:
\be
\ket{\psi}=\sum_{\alpha=1}^{\textrm{dim}({\cal{H}} ^{A})}\sum_{\beta=1}^{\textrm{dim}({\cal{H}} ^{B})}(\mat{{\cal A}}_{A,B})_{\alpha,\beta}\ket{\psi_{\alpha}^{A}}\otimes\ket{\psi_{\beta}^{B}},
\label{eq:BipartitionWf}
\ee
where $\{\ket{\psi_{\alpha}^{A}}\}_{\alpha=1}^{\textrm{dim}({\cal{H}} ^{A})}$ and $\{|\psi_{\beta}^{B}\rangle\}_{\beta=1}^{\textrm{dim}({\cal{H}} ^{B})}$ are bases for ${\cal{H}} ^{A}$ and ${\cal{H}}^{B}$, respectively\footnote{It is possible to write $\ket{\psi_{\alpha}^{A}}=\left|\psi_{d_{a_1}}\right\rangle \otimes\cdots\otimes|\psi_{d_{a_{|A|}}}\rangle $ and $\ket{\psi_{\beta}^{B}}=|\psi_{d_{b_1}}\rangle \otimes\cdots\otimes|\psi_{d_{b_{|B|}}}\rangle $ with some mapping from  $\{a_1,\ldots,a_{|A|}\}$ to $\alpha$ and from $\{b_1,\ldots,b_{|B|}\}$ to $\beta$ which corresponds to the matricization formula given in sec.~\ref{sec:prelim:TensorAnalysis}.}, and $\mat{{\cal A}}_{A,B}$ is the matricization of $\A$ w.r.t. the partition $(A,B)$. Let us denote the maximal rank of $\mat{{\cal A}}_{A,B}$ by $r:=\textrm{min}(\textrm{dim}({\cal{H}} ^{A}),\textrm{dim}({\cal{H}} ^{B}))$. A singular value decomposition on $\mat{{\cal A}}_{A,B}$ results in the following form (also referred to as the Schmidt decomposition):
\be
\left|\psi\right\rangle =\sum_{\alpha=1}^{r}\lambda_\alpha\ket{\phi_{\alpha}^{A}}\otimes\ket{\phi_{\alpha}^{B}},
\label{eq:SchmidtDecomp}
\ee
where $\lambda_1 \geq \cdots \geq \lambda_r $ are the singular values of $\mat{{\cal A}}_{A,B}$ , and $\{\ket{\phi_{\alpha}^{A}}\}_{\alpha=1}^{r}$, $\{\ket{\phi_{\alpha}^{B}}\}_{\alpha=1}^{r}$ are $r$ vectors in new bases for ${\cal{H}} ^{A}$ and ${\cal{H}} ^{B}$, respectively, obtained by the decomposition. It is noteworthy that since $\ket\psi$ is conventionally chosen to be normalized, the singular values uphold $\sum_{\alpha}\abs{\lambda_\alpha}^2=1$, however this constraint can be relaxed for our needs below. Eq.~\ref{eq:SchmidtDecomp} represents the $N$ particle wave function in terms of a tensor product between two disjoint parts of it. Each summand in eq.~\ref{eq:SchmidtDecomp} is a separable state w.r.t the partition $(A,B)$, which is defined as a state that can be written down as a single tensor product: $\ket{\psi^{{\textrm {sp} (A,B)}}}=\ket{\phi^{A}}\otimes\ket{\phi^{B}}$ where $\ket{\phi^{A}}\in{\cal{H}} ^{A}$ and $\ket{\phi^{B}}\in{\cal{H}} ^{B}$. Intuitively, the more correlated these two parts are the more `complicated' the function describing their relation should be. We will now present the formulation of these notions as physicists address them, in terms of quantum entanglement.

Several \emph{measures of entanglement} between subsystems $A$ and $B$ can be defined using the singular values $\lambda_\alpha$. A measure of entanglement is a well-defined concept which we will not present in full here, we refer the interested reader to (\cite{plenio2005introduction}).  Essentially, a measure of entanglement w.r.t the partition $(A,B)$ is a quantity that represents the difference between the state in question to a state separable with respect to the partition $(A,B)$.

The \emph{entanglement entropy} is the most conventional measure of entanglement among physicists, and is defined as $S=-\sum_{\alpha}\abs{\lambda_\alpha}^2\ln\abs{\lambda_\alpha}^2$. The minimal entanglement entropy, $S=0$, is received when the rank of $\mat{{\cal A}}_{A,B}$ is $1$. When $\mat{{\cal A}}_{A,B}$ is fully ranked, the entanglement entropy can obtain its maximal value of $\ln(r)$ (upon normalization of the singular values).
An additional measure of entanglement is the \emph{geometric measure}, defined as the $L^2$ distance of $\ket{\psi}$ from the set of separable states: $\min_{\ket{\psi^{\textrm {sp} (A,B)}}} |\braket{\psi^{\textrm {sp} (A,B)}}{\psi}|^2$ which can be shown (e.g. \cite{cohen2017inductive}) to be: $D=\sqrt{1-\frac{|\lambda_1|^2}{\sum_{\alpha=1}^{r}\abs{\lambda_\alpha}^2}}$.
A final measure of entanglement we mention is the \emph{Schmidt number}, which is simply the rank of $\mat{{\cal A}}_{A,B}$, or the number of its non-zero singular values. All of these measures are minimal for states which are separable w.r.t. the partition $(A,B)$ (also said to be \emph{unentangled} w.r.t. this partition), and increase when the correlation between sub-systems $A$ and $B$ is more complicated. It is noteworthy that a product state is unentangled under any partition.

\subsection{Correlations Modeled by a Convolutional Network } \label{sec:CorrelationEntanglement:MLcorrelation}

\begin{figure}
\centering
\includegraphics[scale=0.28]{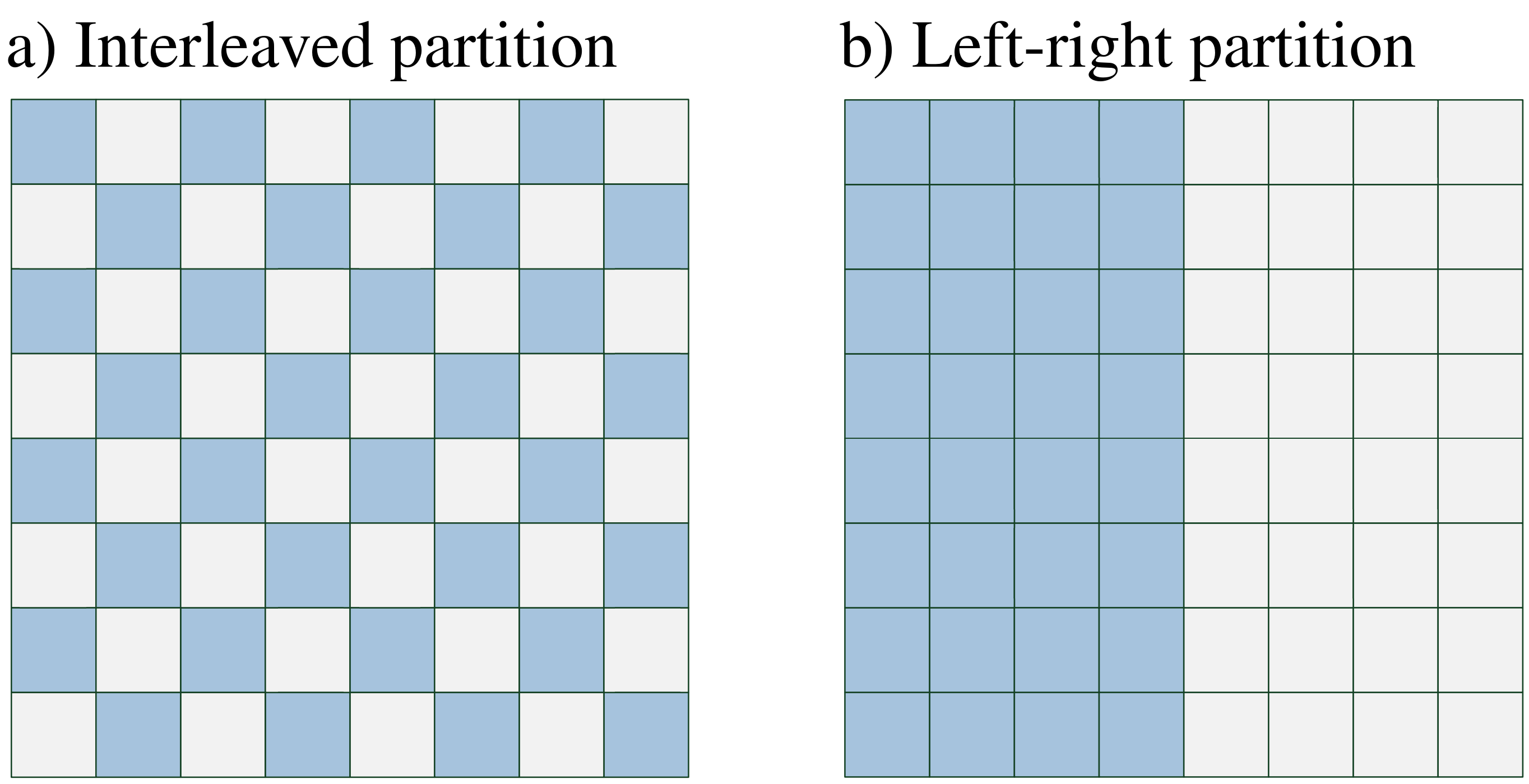}
\caption{\label{fig:partitions} An illustration of a) the interleaved partition and b) the left-right partition for an $8\times 8$ example. The network should support high entanglement measures for the left-right partition if one wishes to model an intricate correlations structure between the two sides of the image (e.g. for face images), and for the interleaved partition if one wishes to do so for neighboring pixels (e.g. for natural images). In sec.~\ref{sec:mincutclaim}, we show how this control over the inductive bias of the convolutional network can be achieved by adequately tailoring the number of channels in each of its layers.}
\end{figure}

The above defined methods of quantifying correlations can now be transferred into the machine learning domain in a straightforward manner, with their role kept fully intact. Utilizing the analogy that was established in sec.~\ref{sec:PhysIntro:EquivalenceWFCAC}, the measures of entanglement provide us an instrument with which we can describe the correlations a convolutional network can model. Specifically, an entanglement measure can be identically defined using the singular values of a matricization of the convolutional weights tensor $\A_{d_{1}...d_{N}}$ according to a  partition of interest.

In accordance with the physical entanglement case, a network which supports high entanglement measures w.r.t a certain partition of its inputs, is in fact able to model a more elaborate correlation structure between the respective two groups of inputs. This notion can be intuitively related to statistical dependence by observing the limit of minimum entanglement with respect to a certain partition. In this case, the overall function can be decomposed into a single multiplication of two disjoint parts of it, which in a statistical setting implies independence between the two parts.

This connection places the analysis of correlations as a key ingredient in the proper harnessing of the inductive bias when constructing a deep network architecture. If one is able to identify a characteristic correlation structure of the inputs, it is clear that constructing the network such that these correlations are given high entanglement measures is advisable. For example, in a natural image, pixels which are closer to each other are more correlated than far away pixels. Therefore, the relevant partition to favor when the inputs are natural images is the interleaved partition, presented in fig.~\hyperref[fig:partitions]{~\ref{fig:partitions}(a)}, which splits the image in a checkerboard manner such that $A$ is composed of the pixels located in blue positions and $B$ is composed of the pixels located in yellow positions. This partition will split many pixels which are correlated to one another. Intuitively, this correlation manifests itself in the following manner: given an image composed only of the pixels in the blue positions, one would still have a good idea of what is in the picture. Thus, for natural images, constructing the network such that it supports exponentially high entanglement measures for the interleaved partition is optimal. Similarly, if the input is composed of symmetric images, such as human face images for example, one expects pixels positioned symmetrically around the middle to be highly correlated. Accordingly, constructing the network such that it supports exponentially high entanglement measures for the left-right partition, shown in fig.~\hyperref[fig:partitions]{~\ref{fig:partitions}(b)}, is advisable in this case.

It is noteworthy, that while the notion of entanglement measures naturally originates in the quantum physics discourse, \cite{cohen2017inductive} have discussed the expressiveness of the ConvAC architecture by using the geometric measure and the Schmidt measure, without explicitly identifying them by these names. They showed that a polynomially sized deep network supports exponentially high matricization ranks (the Schmidt measure) for certain partitions of the input, while being limited to polynomial ranks for others. Moreover, they showed that by altering architectural features of the deep network (specifically the pooling scheme), one can control for which partitions of the input the network will support exponentially high ranks. We are now able to view this approach as motivated by well established physical notions, and proceed to extend it.

To conclude this section, we see that the coefficients or convolutional weights tensor $\mathcal{A}_{d_{1}...d_{N}}$, which has $M^{N}$ entries, fully encapsulates the information regarding the correlations of the many-body quantum wave function or of the function realized by a ConvAC. The curse of dimensionality manifests itself in the exponential dependence on the number of particles or image patches. In a general quantum many-body setting, this renders impractical the ability to investigate or even store a wave function of more than a few dozens of interacting particles. A common tool used to tackle this problem in the physics community is a Tensor Network, which allows utilizing the prior knowledge regarding correlations when attempting to represent an exponentially complicated function with a polynomial amount of resources. In the next section, we present the concept of Tensor Networks, which are a key component in the analysis presented in this work.

\section{Tensor Networks and Tensor Decompositions} \label{sec:TensorNetworks}

In the previous sections, we've seen the integral role that the order $N$ tensor $\A_{d_{1}...d_{N}}$ plays in the simulation of an $N$ particle wave function in the physics domain, and in the classification task of an image with $N$ pixels (or image patches)  in the machine learning domain. Directly storing the entries of a general order $N$ tensor, though very efficient in lookup time (all the entries are stored `waiting' to be called upon), is very costly in storage --- exponential in $N$. A tradeoff can be employed, in which only a polynomial amount of parameters can be kept, while the lookup time increases. Namely, some calculation is to be performed in order to obtain the entries of $\A_{d_{1}...d_{N}}$ \footnote{It is noteworthy, that for a given tensor there is no guarantee that the amount of parameters can be actually reduced. This is dependant on its rank and on how fitting the decomposition is to the tensor's correlations.}. The prevalent approach to the implementation of such a tradeoff in the physics community is called a Tensor Network, and will be thoroughly introduced in the first subsection. This tool allows physicists to construct an architecture that is straightforwardly compliant with the correlations characterizing the state in question.

Observing fig.~\ref{fig:original_convac}, a ConvAC is in effect such a calculation of the tensor $\A$. In other words, the tensor at the heart of a ConvAC calculation is in some sense already `stored' efficiently when keeping a polynomial number of convolutional weights, and one seeks to represent an exponentially complex function with this representation. The common method in machine learning to describe such a representation of a tensor, is called a tensor decomposition. Though both are essentially aimed at efficiently representing a tensor of a high order, Tensor Networks and tensor decompositions differ in some respects. In the second subsection, we will describe these differences and present the tensor decompositions that are employed for modeling the ConvAC.

\subsection{Introduction to Tensor Networks} \label{sec:TensorNetworks:intro}

A \emph{Tensor Network} is formally represented by an underlying undirected graph that has some special attributes, we will elaborate on this formal definition in sec~\ref{sec:mincutclaim}. In the following, we give a more intuitive description of a TN, which is nonetheless exact and somewhat more instructive.
\begin{figure}
\centering
\includegraphics[width=\linewidth]{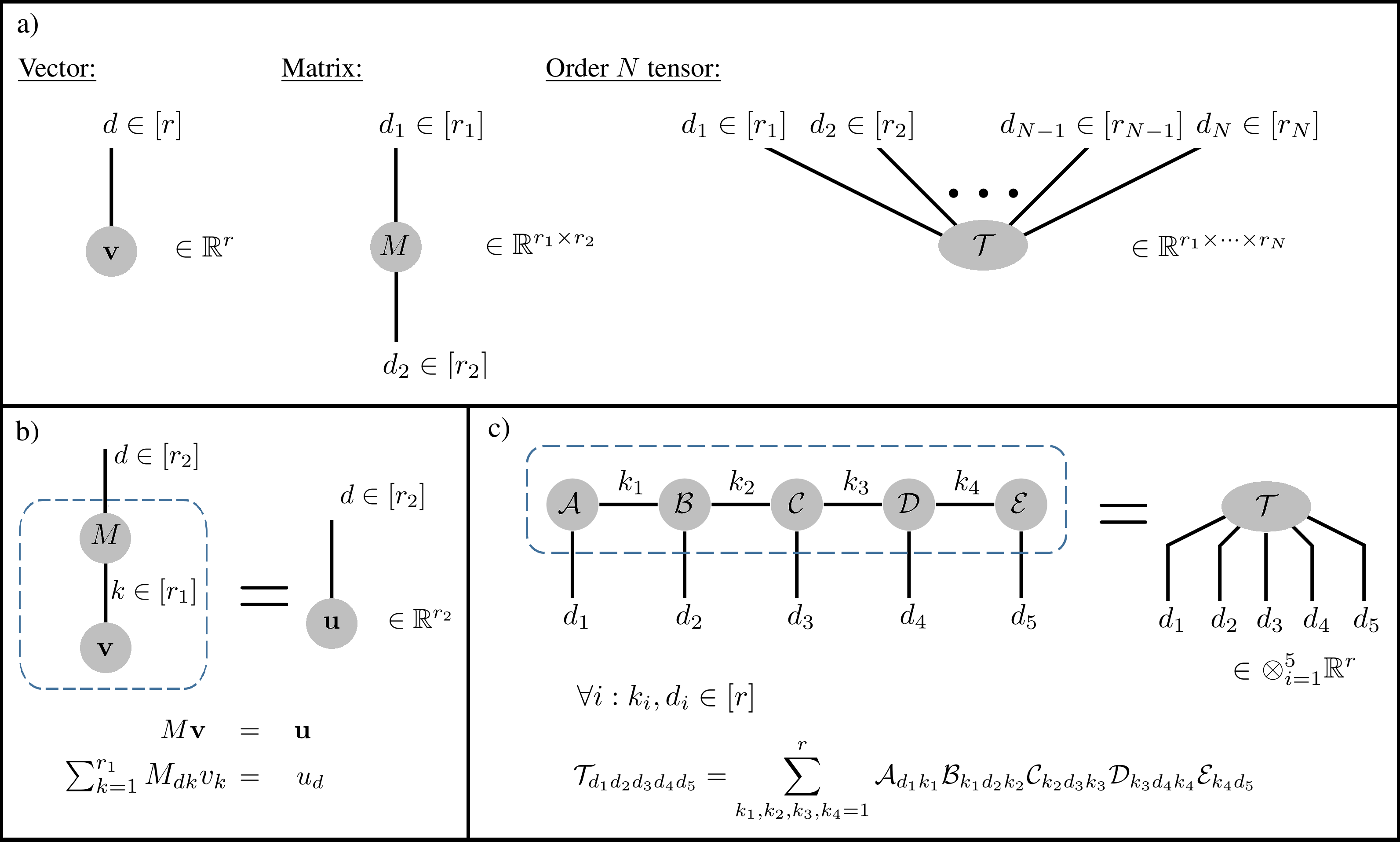}
\caption{A quick introduction to Tensor Networks. a) Tensors in the TN are represented by nodes. The degree of the node corresponds to the order of the tensor represented by it. b) A matrix multiplying a vector in TN notation. The contracted indices are denoted by $k$ and are summed upon. The open indices are denoted by $d$, their number equals the order of the tensor represented by the entire network. All of the indices receive values that range between $1$ and their bond dimension. The contraction is marked by the dashed line. c) A more elaborate example, of a network representing a higher order tensor via contractions over sparsely interconnected lower order tensors. This network is a simple case of a decomposition known as a \emph{tensor train} (\cite{oseledets2011tensor}) in the tensor analysis community or a \emph{matrix product state} (see overview in e.g. \cite{orus2014practical}) in the condensed matter physics community.  }
\label{fig:Sec_5_example}
\end{figure}
The basic building blocks of a TN are tensors, which are represented by nodes in the network. The order of a tensor represented by a node, is equal to its degree --- the number of edges incident to it, also referred to as its legs. Fig.~\hyperref[fig:Sec_5_example]{~\ref{fig:Sec_5_example}(a)} shows three examples: 1) A vector, which is a tensor of order $1$, is represented by a node with one leg. 2) A matrix, which is a tensor of order $2$, is represented by a node with two legs. 3) Accordingly, a tensor of order $N$ is represented in the TN as a node with $N$ legs. In a TN, each edge is associated with a number called the \emph{bond dimension}. The bond dimension assigned to a specific leg of a node, is simply the dimension of the corresponding mode of the tensor represented by this node (see definitions for a mode and its dimension in sec.~\ref{sec:prelim}).

A TN is a collection of such tensors represented by nodes, with edges that can either be connected to a node on one end and loose on the other end or connect between two nodes. Each edge in a TN is represented by an index that runs between $1$ and its bond dimension. An index representing an edge which connects between two tensors is called a contracted index, while an index representing an edge with one loose end is called an open index. The set of contracted indices will be denoted by $K=\{k_1,...,k_{P}\}$ and the set of open indices will be denoted by $D=\{d_1,...,d_{N}\}$. The operation of contracting the network is defined by summation over all of the $P$ contracted indices. The tensor represented by the network, $\A_{d_1...d_{N}}$, is of order $N$,~\ie~its modes correspond to the open indices. Given the entries of the internal tensors of the network, $\A_{d_1...d_{N}}$ can be calculated by contracting the entire network.

An example for a contraction of a simple TN is depicted in fig.~\hyperref[fig:Sec_5_example]{~\ref{fig:Sec_5_example}(b)}. There, a TN corresponding to the operation of multiplying a vector $\vv \in \R^{r_1}$ by a matrix  $M\in \R^{r_2 \times r_1}$ is performed by summing over the only contracted index, $k$. As there is only one open index, $d$, the result of contracting the network is an order $1$ tensor (a vector): $\uu \in \R^{r_2}$ which upholds $\uu = M\vv$. In fig.~\hyperref[fig:Sec_5_example]{~\ref{fig:Sec_5_example}(c)} a somewhat more elaborate example is illustrated, where a TN composed of order $2$ and $3$ tensors represents a tensor of order $5$. This network represents a decomposition known as a \emph{tensor train} (\cite{oseledets2011tensor}) in the tensor analysis community or a \emph{matrix product state} (MPS) (see overview in e.g. \cite{orus2014practical}) in the condensed matter physics community, which arranges order $3$ tensors in such a `train' architecture and allows the representation of an order $N$ tensor with a linear (in $N$) amount of parameters. The MPS exemplifies a typical desired quality of TNs. The decomposition of a higher order tensor into a set of sparsely interconnected lower order tensors, was shown (\cite{oseledets2009breaking};~\cite{ballani2013black}) to greatly diminish effects related to the curse of dimensionality discussed above.

The architectural parameters of the TN are the inter-connectivity of the internal tensors (which tensors are connected to each other) and the bond dimension for each edge (what are the dimensions of each mode). Given tensors of interest, such as the tensor holding the weights of the ConvAC (eq.~\ref{eq:convac}) or the coefficients tensor of the many-body wave function (\ref{eq:manyWf}), it is unclear what TN architecture is best fitting to represent it. This question is in the heart of our analysis below, and we shall see the answer for it is closely tied to the correlations that are to be represented by the tensor.  \cite{stoudenmire2016supervised} preformed supervised learning tasks on the MNIST data-set with an MPS TN, thus demonstrating the ability of a TN to be trained and perform successful machine learning tasks. However, an MPS is a simple network with an extensive associated tool-set (it is highly investigated in the physics community), and it is unclear how a TN with a general connectivity can be trained. By the mapping of a ConvAC to a TN we show below, we effectively migrate a successful and trainable architecture to the language of physicists. Thus, beyond the results obtained in this paper that are beneficial to the machine learning community, the presentation of the ConvAC in the language of TN may allow physicists a new tool-set that takes advantage of the fast-evolving machine learning arsenal. Before we present this translation of ConvACs to TNs, we will familiarize ourselves with their current description --- in the form of tensor decompositions.

\subsection{Tensor Decompositions} \label{sec:TensorNetworks:decomp}

A tensor decomposition is a construction aimed at addressing the same issues as the TN presented above, namely a reduction of an exponentially large tensor to a calculation with a polynomial number of parameters. However, there are some structural differences between the two, which we shall see below are mostly semantic.

In the Tensor Networks approach presented above, the entries of the underlying tensor are obtained by a contraction of indices, which greatly resembles an inner product of vectors. Contrarily, a tensor decomposition is naturally presented in terms of tensor products of lower order tensors, which for vectors are also referred to as outer products. As a simple example of this, we can observe the most famous tensor decomposition, which is for the order two tensor: the singular value decomposition of a matrix. preforming an SVD of a matrix is simply writing it as a sum of outer products between its singular vectors. Explicitly, the decomposition of a rank $r$ matrix $M\in \R^{r_1 \times r_2}$ can be written as: $M=\sum_{i=1}^{r}\lambda_i~\uu^{(i)}\otimes\vv^{(i)}$ where $\lambda_i$ are the singular values and $\{\uu^{(i)}\}_{i=1}^r $ and $\{\vv^{(i)}\}_{i=1}^r$ are $r$ vectors in the appropriate bases of $\R^{r_1}$ and $\R^{r_2}$, respectively. The TN equivalent of an SVD is presented in fig.~\hyperref[fig:CPExample]{~\ref{fig:CPExample}(a)}, which for this case also resembles a natural way of writing the SVD as a multiplication of matrices.

A somewhat less trivial example, which involves a decomposition of an order $N$ tensor, is the CP decomposition. As mentioned in sec. \ref{sec:prelim},  the CP decomposition of the convolutional weights tensor corresponds to a ConvAC depicted in fig.~ \ref{fig:original_convac} with one hidden layer, which collapses the entire spatial structure through global pooling. Explicitly, the CP decomposition  of the order $N$ convolutional weights tensor of a specific class $y$ is a sum of rank-$1$ tensors, each of which is attained by a tensor product of $N$ weights vectors:
\be
\A^y = \sum_{k=1}^K v^y_k\cdot\aaa^{k,1} \otimes \cdots \otimes \aaa^{k,N},
\label{cpformula}
\ee
where $\vv^y\in \R^K, \forall y\in[Y]$ and $\aaa^{k,j}\in\R^M, \forall k\in[K],j\in[N]$.

The deep version of fig.~\ref{fig:original_convac}, where the pooling windows between convolutional layers are of minimal size, corresponds to a specific tensor decomposition of $\A^y$, which is a restricted version of a \emph{hierarchical Tucker decomposition}, referred to in short as the HT decomposition. The restriction is related to the fact that the pooling scheme of the ConvAC architecture presented in  fig.~\ref{fig:original_convac} involves only entries from the same channel, while in the general HT decomposition pooling operations would involve entries from different channels. For brevity of notation, we will present the expressions for a scenario where the input patches are aligned along a one-dimensional line (can also correspond to a one-dimensional signal, e.g. sound or text), and the pooling widows are of size 2. The extension to the two-dimensional case follows quite trivially, and was presented in \cite{cohen2017inductive}. Under the above conditions, the decomposition corresponding to a deep ConvAC can be defined recursively by(\cite{cohen2016expressive}):
\bea
\phi^{1,j,\gamma} &=& \sum_{\alpha=1}^{r_0} a_\alpha^{1,j,\gamma}
\aaa^{0,2j-1,\alpha} \otimes  \aaa^{0,2j,\alpha}
\nonumber \\
&\cdots&
\nonumber\\
\phi^{l,j,\gamma} &=& \sum_{\alpha=1}^{r_{l-1}} a_\alpha^{l,j,\gamma}
\underbrace{\phi^{l-1,2j-1,\alpha}}_{\text{order $2^{l-1}$}} \otimes
\underbrace{\phi^{l-1,2j,\alpha}}_{\text{order $2^{l-1}$}}
\nonumber\\
&\cdots&
\nonumber\\
\A^y &=& \sum_{\alpha=1}^{r_{L-1}} a_\alpha^{L,y}
\underbrace{\phi^{L-1,1,\alpha}}_{\text{order $\frac{N}{2}$}} \otimes
\underbrace{\phi^{L-1,2,\alpha}}_{\text{order $\frac{N}{2}$}}.
\label{eq:ht_decomp}
\eea

The decomposition in eq.~\ref{eq:ht_decomp} recursively constructs the convolutional weights tensor  $\{\A^y\}_{y\in[Y]}$ by assembling vectors $\{\aaa^{0,j,\gamma}\}_{j\in[N],\gamma\in[r_0]}$ into tensors $\{\phi^{l,j,\gamma}\}_{l\in[L-1],j\in[N/2^l],\gamma\in[r_l]}$ in an incremental fashion. This is done in the form of tensor products, which are the natural form for tensor decompositions.
The index $l$ stands for the level in the decomposition, corresponding to the $l^{th}$ layer of the ConvAC network given in fig.~\ref{fig:original_convac}.  $j$ represents the `location' in the feature map of level $l$, and $\gamma$ corresponds to the
individual tensor in level $l$ and location $j$.
The index $r_l$ is referred to as \emph{level-$l$ rank}, and is defined to be the number of tensors in each location of level $l$ (we denote for completeness $r_L:=Y$). In the ConvAC network given in fig.~\ref{fig:original_convac}, $r_l$ is equal to the number of channels in the $l^{th}$ layer --- this will be important in our analysis of the role played by the channel numbers.
The tensor $\phi^{l,j,\gamma}$ has order $2^l$, and we assume for simplicity that $N$~--~the order of $\A^y$, is a power of $2$ (this is merely a technical assumption also made in \cite{Hackbusch-book}, it does not limit the generality of the analysis). The parameters of the decomposition are the final level weights $\{\aaa^{L,y}\in\R^{r_{L-1}}\}_{y\in[Y]}$, the intermediate levels' weights $\{\aaa^{l,j,\gamma} \in \R^{r_{l-1}}\}_{l\in[L-1],j\in[N/2^l],\gamma\in[r_l]}$, and the first level weights $\{\aaa^{0,j,\gamma}\in\R^M\}_{j\in[N],\gamma\in[r_0]}$.

As mentioned above, the ConvAC framework shares many of the same traits as modern ConvNets,
i.e. locality, sharing and pooling and it can be shown to form a universal hypotheses space, exhibit complete depth-efficiency (\cite{cohen2016expressive}), and relate inductive bias to the pattern of pooling (\cite{cohen2017inductive}).
Additionally, through the concept of generalized tensor decompositions, ConvACs can be transformed to standard ConvNets
with ReLU activations and average/max pooling, which laid the foundation for extending its proof methodologies
to general ConvNets (\cite{cohen2016convolutional}). Finally, from an empirical perspective, they tend to work well in many practical settings, e.g.
for optimal classification with missing data (\cite{\tmm}), and for compressed networks (\cite{cohen2016deep}).

To conclude, the ConvAC is a deep convolutional machine learning architecture (given in fig.~\ref{fig:original_convac}), that was described and analyzed thus far by an underlying tensor decomposition.  Having understood the basics of Tensor Networks, a tool used by physicists to efficiently describe their tensor of interest, we will proceed to the presentation the ConvAC in terms of a TN.

\section{A Convolutional Network as a Tensor Network} \label{sec:translations}

Eq.~\ref{eq:equivalence} shows the equivalence between the ConvAC generating function described in eq.~\ref{eq:convacrank1} and the projection of a wave function with an order $N$ coefficients tensor $\mathcal{A}$ onto a product-state with a rank-$1$ coefficient tensor $\mathcal{A}^{\textrm {ps}}$. As is shown in sec.~\ref{sec:TensorNetworks:decomp}, the ConvAC network effectively uses tensor decomposition rules to turn a function with an exponential number of terms to a polynomially sized engine. Tensor networks achieve the same goal of turning a wave function with an exponential number of terms to a more manageable calculation by using a graph whose nodes correspond to low-order tensors and whose edges correspond to modes along which two tensors contract. Although in both cases tensors are involved, the language is different ---  for ConvACs the basic operations are tensor products and for TNs it is contractions. In this section we show how to map a ConvAC into a TN representation, basically describing the ConvAC in terms of contractions and with an underlying graph. The mapping we introduce is not meant to provide an alternative, more efficient, way to represent a ConvAC but rather a way to leverage the fact that TNs are subject to graph-theoretic analyses and begin a migration of TNs-related analyses to the domain of deep neural networks.

\subsection{Tensor Network Construction of a Shallow Convolutional Network} \label{sec:translations:shallow}

In order to construct the TN equivalent of the CP decomposition (eq.~\ref{cpformula}), we define the order $N$ tensor $\delta\in\R^{K\times\cdots\times K}$, referred to as the \emph{$\delta$ tensor}, as follows:
\begin{equation}
\begin{array}{c}
\delta_{d_1...d_N}:=\left\{ \begin{array}{c}
~1,\quad d_1=\cdots=d_N\\
0,\quad  ~~~~~~otherwise
\end{array}\right.,\\
\end{array}\label{eq:deltadef}
\end{equation}
with $d_j\in[K]~\forall j\in[N]$, \ie~its entries are equal to $1$ only on the super-diagonal and are zero otherwise. We shall draw nodes which correspond to such $\delta$ tensors as triangles in the TN, to remind the reader of the restriction given in eq.~\ref{eq:deltadef}. Observing eq.~\ref{cpformula}, let $G\in \R^{Y \times K}$ be the matrix holding the convolutional weight vector of the final layer $\vv^y\in \R^K$ in its $y^{th}$ row and let $A^{(j)}\in \R^{K \times M}$ be the matrix holding the convolutional weights vector $\aaa^{k,j}\in\R^{M}$ in its $k^{th}$ row. One can observe that per class $y$, the $k^{th}$ summand in eq.~\ref{cpformula} is equal to the tensor product of the $N$ vectors residing in the $k^{th}$ rows of all the matrices $A^{(j)}, j\in[N]$, multiplied by a final weight associated with class $y$. Tensors represented by nodes in the TN will have parenthesis in the superscript, which denote labels such as the position $j$ in the above, to differentiate them from `real' indices that must be taken into consideration when contracting the TN. Per convention, such `real' indices will be written in the subscript.

Having defined the above, the TN equivalent of the CP decomposition is illustrated in fig.~\hyperref[fig:CPExample]{~\ref{fig:CPExample}(b)}.  Indeed, though they represent the same exact quantity, the form of eq.~\ref{cpformula}  isn't apparent at a first glance of the network portrayed in fig.~\hyperref[fig:CPExample]{~\ref{fig:CPExample}(b)}. Essentially, the TN equivalent of the CP decomposition involves contractions between the matrices $A^{(j)}$ and the $\delta$ tensor, as can be seen in the expression representing it:
\be
\A_{d_1...d_N}=\sum_{k_1,...,k_{N+1}=1}^{K}\delta_{k_1...k_{N+1}}A^{(1)}_{k_1d_1}\cdots A^{(N)}_{k_Nd_N}G_{yk_{N+1}}.
\label{cpformulatensors}
\ee
\begin{figure}
\centering
\includegraphics[width=\linewidth]{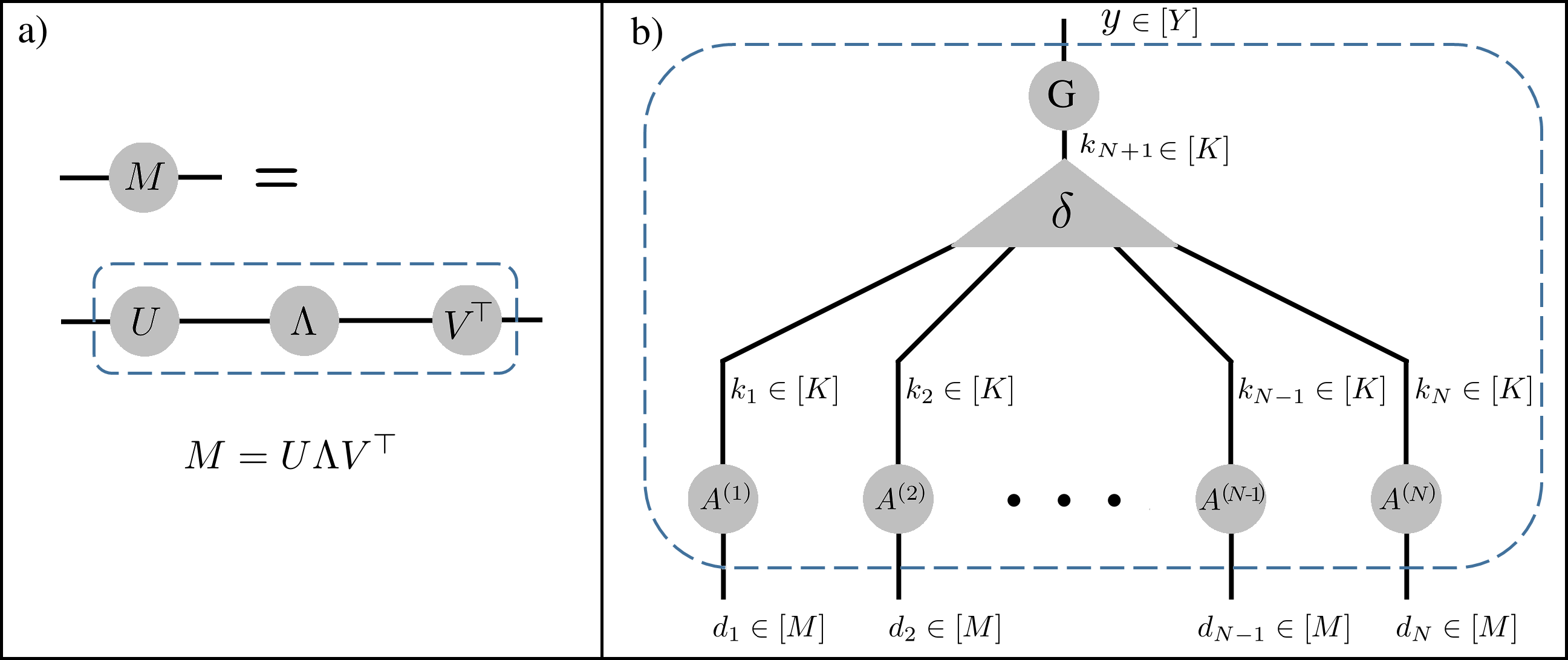}
\caption{ a) A singular value decomposition in a form of a TN. The $\Lambda$ node represents a diagonal matrix and the $U,V$ nodes represent orthogonal matrices. b) The TN equivalent of the CP decomposition. This is a TN representation of the order $N$ weights tensor $\A_{d_1...d_N}$ underlying the calculation of the ConvAC in fig~.\ref{fig:original_convac} in its shallow form, \ie~with one hidden layer followed by a global pooling operation which collapses the feature maps into $Y$ different class scores. The matrices $A^{(j)}$ hold the convolutional weights of the hidden layer and the matrix $G$ holds the weights of the final dense layer. The central $\delta$ tensor effectively enforces the same channel pooling, as can be seen by its form in eq.~\ref{eq:deltadef} and its role in the calculation of this TN given in eq.~\ref{cpformulatensors}.}
\label{fig:CPExample}
\end{figure}
The role of the $\delta$ tensor in eq.~\ref{cpformulatensors} can be observed as `forcing' the $k^{th}$ row of any matrix $A^{(j)}$ to be multiplied only by $k^{th}$ rows of the other matrices which in effect enforces same channel pooling\footnote{If one were to switch the $\delta_{k_1...k_N}$ in eq.~\ref{cpformulatensors} by a general tensor $\G_{k_1...k_N}\in\R^{K\times\cdots\times K}$, a TN equivalent of an additional acclaimed decomposition would be attained, namely the \emph{Tucker decomposition}. Similar to other tensor decompositions, the Tucker decomposition is more commonly given in an outer product form:
$\A = \sum_{k_1,...,k_N=1}^K \G_{k_1...k_N} \aaa^{k_1,1} \otimes \cdots \otimes \aaa^{k_N,N}$.
}.

\subsection{Tensor Network Construction of a Deep Convolutional Network} \label{sec:translations:deep}

\begin{figure}
\centering
\includegraphics[width=\linewidth]{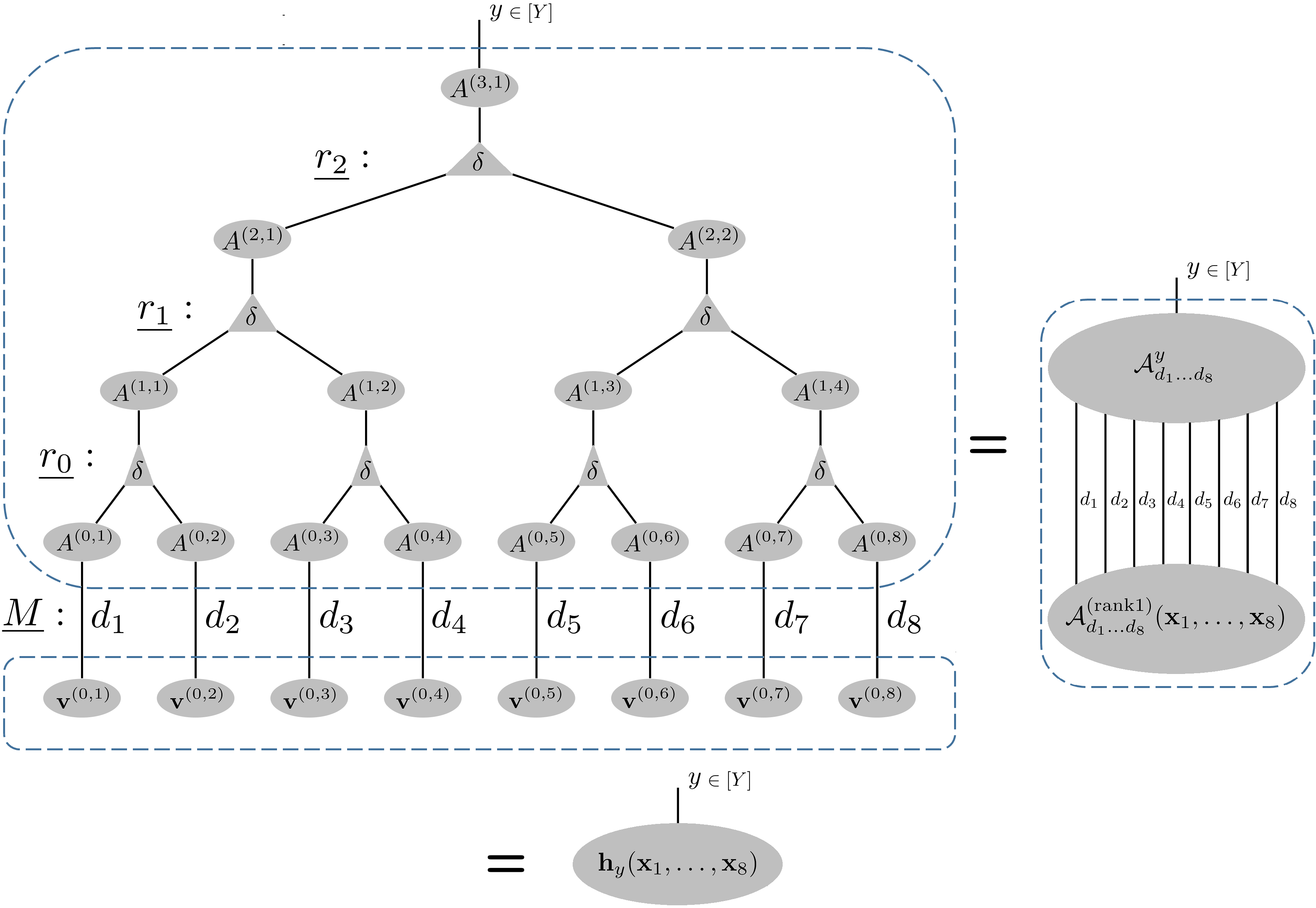}
\caption{The TN equivalent to the HT decomposition with a same channel pooling scheme corresponding to the calculation of a deep ConvAC given in fig.~\ref{fig:original_convac} with $N=8$. Further details in sec.~\ref{sec:translations:deep}.}
\label{fig:HT_TN}
\end{figure}

We describe below a TN corresponding to the deep ConvAC calculation, which is presented in
eq.~\ref{eq:ht_decomp}. The ConvAC calculation is constructed as an inner-product between two tensors:  $\A^y_{d_1...d_N}$ holding the weights of the network and $\A_{d_1...d_N}^{\textrm{(rank 1)}}(\x_1,\ldots,\x_N)$ which is a rank-$1$ tensor holding the $N\cdot M$ values of the representation layer ($M$ representation functions applied on $N$ input patches).

Fig.~\ref{fig:HT_TN} displays in full the TN for an $N=8$ ConvAC calculation. The upper block separated by a dashed line is the TN representing $\A_{d_1...d_8}$ whereas the lower block represents the rank-$1$ inputs tensor. Considering the upper block, it is worth noting that it is not a sketch of a TN but the actual full description complaint with the graph notations described in sec.~\ref{sec:TensorNetworks:intro}.
Accordingly, the two legged nodes represent matrices, where each matrix $A^{(l,j)}\in \R^{r_l\times r_{l-1}}$ (with $r_{-1}:=M$) is constructed such that it holds the convolutional weight vector  $\aaa^{l,j,\gamma}\in \R^{r_{l-1}}, \gamma \in [r_l]$ in its $\gamma^{th}$ row. The triangle node appearing between levels $l-1$ and $l$ represents an order $3$ tensor $\delta\in\R^{r_{l-1}\times r_{l-1}\times r_{l-1}}$, obeying eq.~\ref{eq:deltadef}. The $\delta$ tensor is the element which dictates the same channel pooling in this TN construction.

As mentioned above, the lower block in fig.~\ref{fig:HT_TN}  is the TN representing $\A_{d_1...d_8}^{\textrm{(rank 1)}}(\x_1,\ldots,\x_8)$. This simple TN is merely a single outer product of $N=8$ vectors $\vv^{(0,j)}\in\R^{M},j\in[N]$ composing the representation layer presented in sec.~\ref{sec:prelim:ConvAC}, holding the values $v^{(0,j)}_{d_j}=f_{\theta_{d_j}}(\x_j)$. In compliance with the analogy between the function realized by the ConvAC and the projection of a many-body wave function onto a product state shown in eq.~\ref{eq:equivalence}, the form $\A_{d_1...d_8}^{\textrm{(rank 1)}}$ assumes is exactly the form the coefficients tensor of a product state assumes when represented as a TN. As can be seen in fig.~\ref{fig:HT_TN}, a final contraction of the indices $d_1,...,d_8$ results in the class scores vector calculated by the ConvAC, $\h_y(\x_1,...,\x_8)$.

In appendix~\ref{app:HTrecursive}, we present a recursive definition for the TN representing the a deep ConvAC of a general size (s.t. ${\textrm {log}}_2N\in\N$). As is demonstrated in appendix~\ref{app:HTrecursive}, a contraction of a TN in an incremental fashion, starting with the input layer and moving up the network, exactly reproduces the computation preformed level-by-level along the network given in fig.~\ref{fig:original_convac}. A generalization to a ConvAC with any pooling scheme is straightforward, where the size of the pooling window is translated into the number of edges incident to the $\delta$ tensors from below.

To conclude this section, we have presented a translation of the computation performed by a ConvAC to a TN. The convolutional weights are arranged as matrices (two legged nodes) placed along the network, and the same channel pooling characteristic is made available due to three legged $\delta$ tensors in a deep network, and an $N+1$ legged $\delta$ tensor in a shallow network. Finally, and most importantly for our upcoming analysis, the bond dimension of each level in the TN representing the ConvAC is equal to $r_l$, which is the number of feature maps (\ie~the number of channels) comprising that level in the corresponding ConvAC architecture. As we shall see in the following sections, this last equivalence will allow us to provide prescriptions regarding the number of channels in each level, when attempting to fit the correlations modeled by the network to the input correlations.


\section{How the Number of Channels Affects the Expressiveness of a Deep Network } \label{sec:mincutclaim}

Thus far, we have presented several structural and conceptual bridges between the fields of machine learning and quantum physics. The equivalence between the function realized by a deep convolutional network and the wave function of many correlated particles, shown in sec.~\ref{sec:PhysIntro}, was the natural starting point of our discussion. Though seemingly unrelated, at the heart of these two problems lies the need to express a complicated function over many inputs (pixels or particles), which must model different forms of correlations between them. In sec.~\ref{sec:CorrelationEntanglement} we showed how well-established physical means of quantifying such correlations, namely measures of quantum entanglement, can be readily transferred  and utilized for analyses in the machine learning domain. In sec.~\ref{sec:TensorNetworks} we presented Tensor Networks and tensor decompositions, the common mathematical tools used in each field to efficiently store, manipulate and analyze the complicated functions of interest. The translation of the computation performed by a ConvAC to a Tensor Network shown in sec.~\ref{sec:translations}, is the link that will provide us with the ability to translate insights and results from the domain of quantum physics into novel conclusions in the  domain of machine learning (and specifically deep learning).

As mentioned in sec.~\ref{sec:TensorNetworks}, the architectural parameters of any TN are the inter-connectivity of the internal tensors and the bond dimension of each edge. Since the connectivity of the Tensor Network representing the ConvAC's calculation, presented in fig.~\ref{fig:HT_TN}, is the key to the equivalence to a ConvAC, we will not touch it presently. However, the role of the bond dimensions in each level of this TN, which as we have shown are equal to the number of feature maps (channels) in that level in the corresponding ConvAC architecture, is open for analysis. In this section, we present the main results of our work. By relying on the TN description of the ConvAC, we show how adequately tailoring the number of channels of each layer in the deep ConvAC can enhance its expressiveness  by fitting the form of the function realized by it to given correlations of the input. In this we show how the parameters of the ConvAC can be most efficiently distributed given prior knowledge of the input, which is in fact an alignment of its inductive bias to the task at hand.

Our analysis makes use of the most recent advances in the study of the quantitative connection between quantum entanglement and Tensor Networks. Quantum entanglement has been a key factor in most studies of TNs  performed over the past two decades (\eg\cite{white1992density},\cite{vidal2009entanglement}). More recent works (\cite{calegari2010positivity},\cite{cui2016quantum}), discuss bounds on measures of entanglement in the context of the architecture of a general TN. The enabling key feature in those studies is the fact that a TN is subject to graph-theoretic analysis. This was not known to be the case for ConvACs thus far. In the following, we adapt the work of  (\cite{cui2016quantum}), who introduced the concept of quantum-min-cut, to the TN we constructed in sec.~\ref{sec:translations:deep} with the constrained $\delta$-tensors, and draw a direct relation between the number of channels in each layer of a deep ConvAC and the functions it is able to model.

\subsection{The ConvAC Tensor Network as a Graph} \label{sec:mincutclaim:graph}

\begin{figure}
\centering
\includegraphics[scale=0.33]{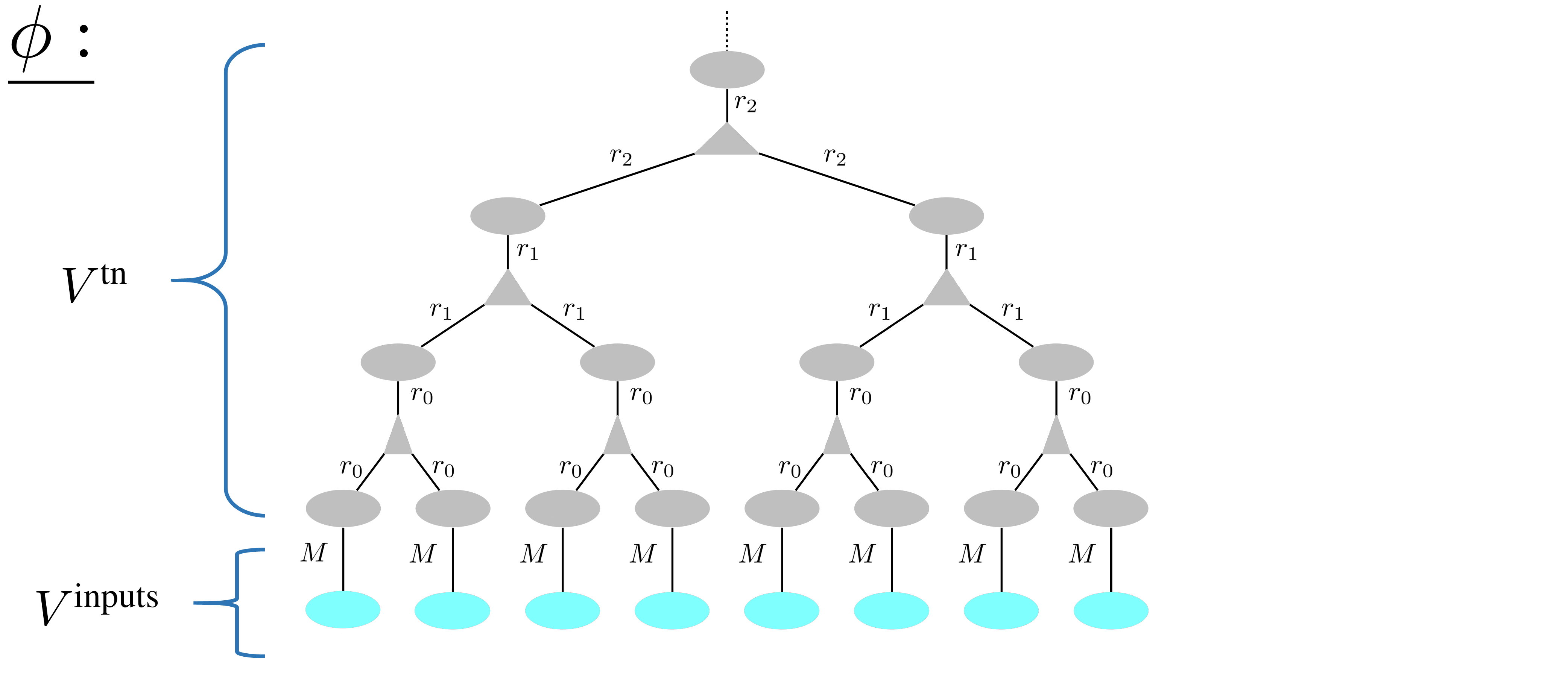}
\caption{The components comprising a `ConvAC-weights TN' $\phi$ that describes the weights tensor $\A^y$ of a ConvAC, are an undirected graph $G(V,E)$ and a bond dimensions function $c$. The bond dimension is specified next to each edge $e\in E$, and is given by the function $c(e)$. As shown in sec.~\ref{sec:translations:deep}, the bond dimension of the edges in each layer of this TN is equal to the number of channels in the corresponding layer in the ConvAC. The node set in the graph $G(V,E)$ presented above decomposes to $V=V^{\textrm {tn}}\cupdot V^{\textrm {inputs}}$, where $V^{\textrm {tn}}$ (grey) are vertices which correspond to tensors in the ConvAC TN and $V^{\textrm {inputs}}$ (blue) are degree $1$ vertices which correspond to the $N$ open edges in the ConvAC TN. The vertices in $V^{\textrm {inputs}}$ are `virtual' --- were added for completeness,  so $G$ can be viewed as a legal graph. The open edge emanating from the top-most tensor (marked by a dashed line) is omitted from the graph, as it does not effect our analysis below --- no flow between any two input groups can pass through it.}
\label{fig:virtual_nodes}
\end{figure}

Convolutional networks, though often presented by graphical schemes that include nodes and edges which represent activation functions and weights, cannot be readily used as well-defined objects in graph theory. A general TN, on the other hand, is naturally represented by an underlying undirected graph describing its connectivity and a function that assigns a bond dimension to each edge. In this section we describe the TN we constructed in sec.~\ref{sec:translations:deep} in graph-theoretic terminology.

The ability to represent a deep convolutional network (ConvAC) as a `legal' graph, is a key accomplishment that the Tensor Networks tool brings forth. Our main results rely on this graph-theoretic description and tie the expressiveness of a ConvAC to a minimal cut in the graph characterizing it, via the connection to quantum entanglement measures. This is in fact a utilization of the `Quantum min-cut max-flow' concept presented by \cite{cui2016quantum}. Essentially, the quantum max-flow between $A$ and $B$ is a measure of the ability of the TN to model correlations between $A$ and $B$, and the quantum min-cut is a quantity that bounds this ability and can be directly inferred from the graph defining it --- that of the corresponding TN.

As noted in sec.~\ref{sec:translations:deep}, the TN that describes the calculation of a ConvAC, \ie~a network that receives as inputs the $M$ representation channels and outputs $Y$ labels, is the result of an inner-product of two tensors $\A^y_{d_1...d_N}$ holding the weights of the network and $\A_{d_1...d_N}^{\textrm{(rank 1)}}(\x_1,\ldots,\x_N)$ which is a rank-$1$ tensor holding the `input' of $N$ vectors $\vv^{(0,j)}\in\R^{M},j\in[N]$ composing the representation layer. In this section we focus on the TN that describes $\A^y_{d_1...d_N}$, which is the upper block of fig.~\ref{fig:HT_TN} and is also reproduced as a stand-alone TN in fig.~\ref{fig:virtual_nodes}, referred to as the `ConvAC-weights TN' and denoted by $\phi$. The TN $\phi$ has $N$ open edges with bond dimension $M$ that are to be contracted with the inputs $\vv^{(0,j)}\in\R^{M},\ j\in[N]$ and one open edge with bond dimension $Y$ representing the  values $\A^y_{d_1...d_N},\ y\in[Y]$ upon such a contraction, as is shown in fig~\ref{fig:HT_TN}.

To turn $\phi$ into a graph we do the following. First, we remove the open edge associated with the output. As our analysis is going to be based on flow between groups of input vertices, no flow can pass through that open edge therefore removing it does not influence our analysis. Second, we add $N$ virtual vertices incident to the open edges associated with the input. Those virtual vertices are the only vertices whose degree is equal to $1$ (see fig.~\ref{fig:virtual_nodes}). The TN $\phi$ is now described below using graph terminology:
\begin{itemize}
\vspace{-2mm}
\item An undirected graph $G(V,E)$, with a set of vertices $V$ and a set of edges $E$. The set of nodes is divided into two subsets $V=V^{\textrm {tn}}\cupdot V^{\textrm {inputs}}$, where $V^{\textrm {inputs}}$ are the $N$ degree-$1$ virtual vertices and $V^{\textrm {tn}}$ corresponds to tensors of the TN.
\vspace{-2mm}
\item A function $c~:~E\rightarrow\N$, associating a number $r\in\N$ with each edge in the graph, that equals to the bond dimension of the corresponding edge in the TN.
\vspace{-2mm}

\end{itemize}
\bigskip

Having described the object representing the ConvAC-weights TN $\phi$, let us define an edge-cut set with respect to a partition of the $N$ nodes of $V^{\textrm {inputs}}$, and then introduce a cut weight associated with such a set. An edge-cut set with respect to the partition $V^A\cupdot V^B=V^{\textrm {inputs}}$ is a set of edges $C$ s.t. there exists a partition $\tilde{V}^A\cupdot\tilde{V}^B=V$ with $V^A \subset \tilde{V}^A~,~ V^B \subset \tilde{V}^B$, and $C = \{(u,v) \in E: u \in \tilde{V}^A, v \in \tilde{V}^B \}$. We note that this is a regular definition of an edge-cut set in a graph $G$ with respect to the partition of vertices $(V^A,V^B)$. Let $C=\{e_1,...,e_{|C|}\}$ be such a set, we define its multiplicative cut weight as:
\be
W_C=\prod\nolimits_{i=1}^{\abs{C}} c(e_i).
\label{CutWeight}
\ee
The weight definition given in eq.~\ref{CutWeight} is simply a multiplication of the bond dimensions of all the edges in a cut. Fig.~\ref{fig:mincut} shows a pictorial demonstration of this weight definition, which is at the center of our results to come. In the following section, we use a max-flow / min-cut analysis on $\phi$ to obtain new results on the expressivity of the corresponding deep ConvAC via measures of entanglement w.r.t. a bi-partition of its input patches, and relate them to the number of channels in each layer of the ConvAC.

\subsection{Bounds on the Entanglement Measure} \label{sec:mincutclaim:bounds}

\begin{figure}
\centering
\includegraphics[scale=0.33]{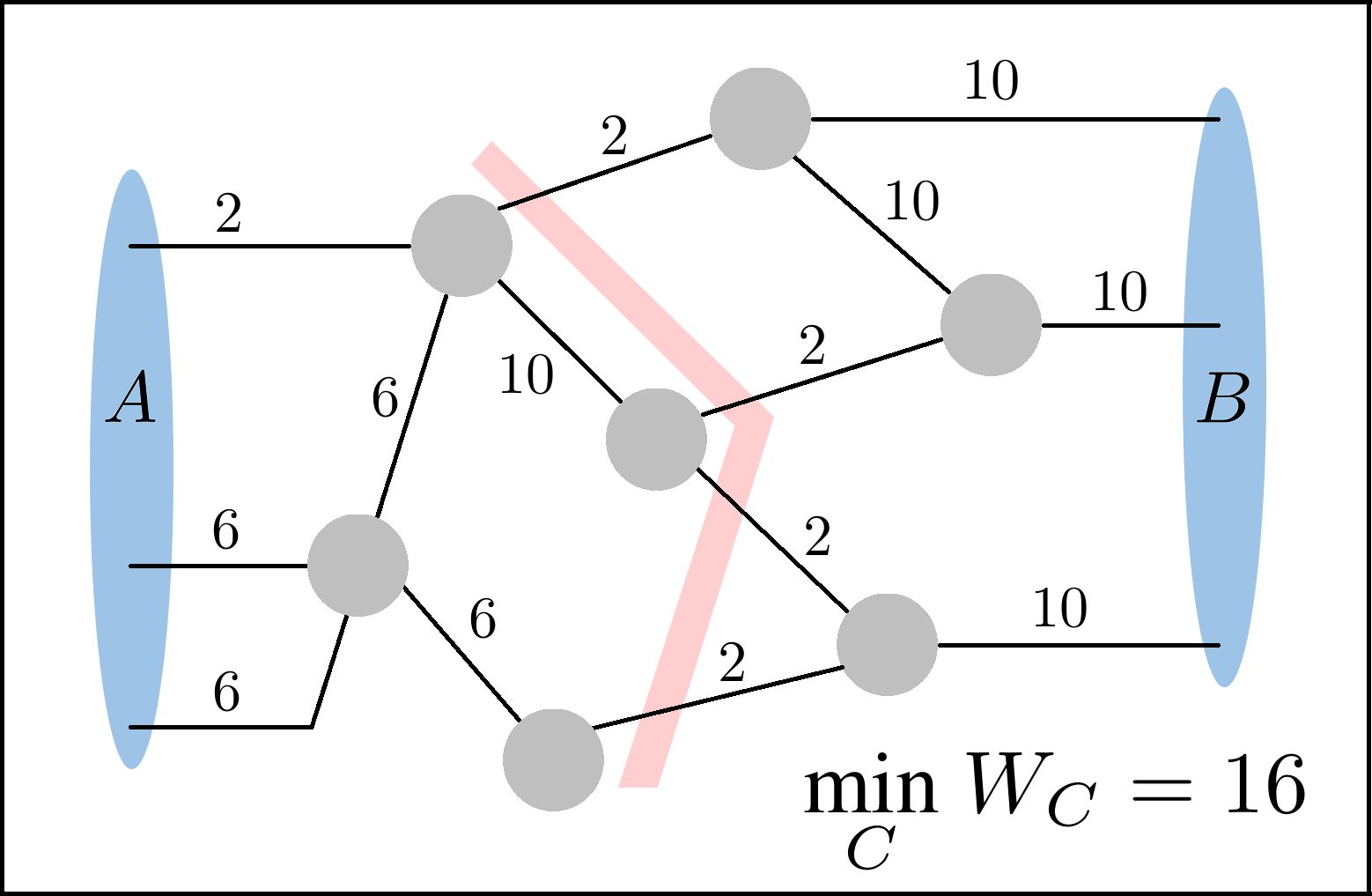}
\caption{An example for the minimal multiplicative cut between $A$ and $B$ in a simple TN.}
\label{fig:mincut}
\end{figure}
In claim~\ref{claim:upperbound} below, we provide an upper bound on the ability of a deep ConvAC to model correlations of its inputs, as measured by the Schmidt entanglement measure (see sec.~\ref{sec:CorrelationEntanglement:quantumentanglement}). This claim is closely related to attributes of TNs that are known in different forms in the literature.

\begin{claim} \label{claim:upperbound}
Let $(A,B)$ be a partition of $[N]$, and $\mat{\A^y}_{A,B}$ be the matricization \wrt~$(A,B)$ of a convolutional weights tensor $\A^y$ (eq.~\ref{eq:convac}) realized by a ConvAC depicted in fig.~\ref{fig:original_convac} with pooling window of size $2$ (the deep ConvAC network). Let $G(V,E,c)$ the graph representation of  $\phi$ corresponding to the ConvAC-weights TN, and let $(V^A,V^B)$ be the degree $1$ vertices partition in $G$ corresponding to $(A,B)$. Then, the rank of the matricization $\mat{\A^y}_{A,B}$ is no greater than: $\min_C W_C$, where $C$ is a cut w.r.t $(V^A,V^B)$ and $W_C$ is the multiplicative weight defined by eq.~\ref{CutWeight}.
\end{claim}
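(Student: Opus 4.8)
The plan is to fix an arbitrary edge-cut set $C$ with respect to $(V^A,V^B)$ and show that $\textrm{rank}(\mat{\A^y}_{A,B}) \le W_C$; since $C$ is then arbitrary, taking the minimum over all such cuts yields the claim. The crucial observation is that the specific values stored in the internal tensors (including the $\delta$ tensors) are irrelevant for the \emph{upper} bound~--~only the bond dimensions along the cut matter~--~so this is a purely dimension-counting argument of the kind standard in the tensor-network literature, namely the `quantum min-cut' bound of \cite{cui2016quantum}.

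First I would invoke the defining property of an edge-cut set: by definition $C$ comes with a vertex bipartition $\tilde{V}^A \cupdot \tilde{V}^B = V$ satisfying $V^A \subset \tilde{V}^A$ and $V^B \subset \tilde{V}^B$, with $C$ exactly the set of edges running between $\tilde{V}^A$ and $\tilde{V}^B$. Using associativity of tensor contraction, I would contract all tensors whose nodes lie in $\tilde{V}^A$ into a single tensor $T^A$, and likewise all tensors in $\tilde{V}^B$ into $T^B$. The free legs of $T^A$ are precisely the legs of $\phi$ that leave $\tilde{V}^A$: these are the open input legs attached to the virtual vertices of $V^A$ (which remain with $T^A$ because $V^A \subset \tilde{V}^A$) together with the cut edges in $C$. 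Symmetrically, $T^A$ carries no $B$-input leg, since all of $V^B$ sits inside $\tilde{V}^B$; the analogous statement holds for $T^B$. Contracting $T^A$ with $T^B$ over the shared cut edges therefore reconstructs $\A^y$:
\be
\A^y_{d_1\ldots d_N}=\sum_{k_C} T^A_{(d_A,\,k_C)}\, T^B_{(d_B,\,k_C)},
\ee
where $d_A$ collects the indices of $A$, $d_B$ those of $B$, and $k_C$ ranges over all joint configurations of the cut edges, of which there are exactly $\prod_{e\in C} c(e)=W_C$.

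Next I would matricize both sides with respect to $(A,B)$. Reading the displayed identity as a matrix product gives $\mat{\A^y}_{A,B}=M^A (M^B)^{\top}$, where $M^A$ is the $M^{\abs{A}}\times W_C$ matrix with entries $T^A_{(d_A,k_C)}$ and $M^B$ is the $M^{\abs{B}}\times W_C$ matrix with entries $T^B_{(d_B,k_C)}$ (any mismatch in the index ordering prescribed by the matricization convention only permutes rows and columns and hence leaves the rank unchanged). Since the rank of a product is at most the shared inner dimension, $\textrm{rank}(\mat{\A^y}_{A,B})\le W_C$. Minimizing over all cuts $C$ with respect to $(V^A,V^B)$ yields $\textrm{rank}(\mat{\A^y}_{A,B})\le \min_C W_C$, as desired.

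The main obstacle I expect is the book-keeping in the contraction step~--~rigorously verifying that the free legs of $T^A$ are \emph{exactly} the $A$-input legs together with the cut edges, with no stray $B$-input leg and no double counting when several cut edges join the same pair of nodes. This hinges entirely on the inclusions $V^A\subset\tilde{V}^A$, $V^B\subset\tilde{V}^B$ and on the definition of $C$ as the complete set of crossing edges; once these are pinned down, the matrix-product factorization and the rank bound are immediate. I would also flag explicitly that the argument is agnostic to the constrained $\delta$-tensor structure, so that tightness of the bound (the converse direction) must be established by a separate construction.
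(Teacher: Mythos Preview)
Your proposal is correct and follows essentially the same route as the paper's proof: fix an arbitrary cut, contract each side of the induced vertex bipartition into a single tensor, write the matricization as a matrix product whose inner dimension is $W_C$, and then minimize over cuts. The paper's version additionally remarks that cut edges may themselves be external (input) edges, handled by inserting a Kronecker $\delta$ identifying the cut index with the external index, but this is only a bookkeeping detail and does not alter the argument.
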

\begin{proof}
See appendix~\ref{app:Proofs:Upperbound}.
\end{proof}

Claim~\ref{claim:upperbound} states that a measure of the ability of the TN $\phi$ (that models the ConvAC-weights)  to represent correlations between two parts of its input, cannot be higher than the minimal weight over all the possible cuts in the network that separate between these two parts. Thus, if one wishes to construct a deep ConvAC that is expressive enough to model an intricate correlation structure according to some partition, it is advisable to verify that the convolutional network is able to support such correlations, by ensuring there is no cut separating these two parts in the corresponding TN that has a low weight. We will elaborate on such practical considerations in sec.~\ref{sec:mincutclaim:implications_channels}.

The upper bound provided above, alerts us when a deep ConvAC is too weak to model a desired correlation structure, according to the number of channels in each layer. Below, we provide a lower bound similar in spirit to a bound shown in~\cite{cui2016quantum}. Their claim is applicable for a TN with general tensors (no $\delta$ tensors), and we adapt it to the ConvAC-weights TN (that has $\delta$ tensors) which in effect ensures us that the entanglement measure cannot fall below a certain value for any specific arrangement of channels per layer.

\begin{theorem} \label{theorem:lowerbound}
Let $(A,B)$ be a partition of $[N]$, and $\mat{\A^y}_{A,B}$ be the matricization \wrt~$(A,B)$ of a convolutional weights tensor $\A^y$ (eq.~\ref{eq:convac}) realized by a ConvAC depicted in fig.~\ref{fig:original_convac} with pooling window of size $2$ (the deep ConvAC network). Let $G(V,E,c)$ the graph representation of  $\phi$ corresponding to the ConvAC-weights TN, and let $(V^A,V^B)$ be the degree $1$ vertices partition in $G$ corresponding to $(A,B)$.

Let $\phi^p$ be the TN represented by $G(V,E,c^p)$ where $c^p(e):=~\max_{n\in \N} p^n$ s.t. $p^n\leq c(e)$. In words, $\phi^p$ is a TN with the same connectivity as $\phi$, where all of the bond dimensions are modified to be equal the closest power of $p$ to their value in $\phi$ from below. Let $W^p_C$ be the weight of a cut $C$ \wrt~$(V^A,V^B)$ in the network $\phi^p$. Then, the rank of the matricization $\mat{\A^y}_{A,B}$ is at least: $~\max_p\min_C W^p_C$ almost always, i.e. for all configurations of the ConvAC network weights but a set of Lebesgue measure zero.
\end{theorem}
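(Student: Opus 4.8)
The plan is to establish the bound by combining a genericity (Zariski-density) argument with an explicit weight construction. First I would observe that every entry of $\mat{\A^y}_{A,B}$ is a fixed polynomial in the network weights (the coefficients $\aaa^{l,j,\gamma}$ and $\aaa^{L,y}$ of eq.~\ref{eq:ht_decomp}), since contracting the ConvAC-weights TN amounts to a sum of products of these weights. Hence, for any target $R$, the event $\textrm{rank}(\mat{\A^y}_{A,B})<R$ is exactly the simultaneous vanishing of all $R\times R$ minors, each a polynomial in the weights, and the deficient-rank locus is therefore an algebraic variety. If I can exhibit a single weight assignment with $\textrm{rank}(\mat{\A^y}_{A,B})\geq R$, then at least one such minor is not identically zero, so its zero set, and a fortiori the deficient-rank locus, has Lebesgue measure zero. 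This is precisely the ``almost always'' clause, so the theorem reduces to producing one good configuration with $R=\max_p\min_C W^p_C$. Since the relevant range of $p$ is effectively finite (once $p$ exceeds $\max_e c(e)$ every bond dimension rounds to $1$), the maximum is attained at a fixed $p^\ast$, and it suffices to realize rank $\min_C W^{p^\ast}_C$ for that single $p^\ast$.

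Next I would reduce to the rounded-down network $\phi^{p^\ast}$. Because $c^{p^\ast}(e)\leq c(e)$ on every edge, any weight choice for $\phi^{p^\ast}$ embeds into $\phi$ by using only the first $c^{p^\ast}(e)$ index values of each mode and zeroing all remaining entries; this realizes in $\phi$ at least the rank attained in $\phi^{p^\ast}$. So it is enough to work in $\phi^{p^\ast}$, where every bond dimension is an exact power $p^{\ast\,n_e}$. The structural fact I would exploit is that, under the base-$p^\ast$ identification $[p^{\ast\,n}]\cong[p^\ast]^{n}$ of each mode, a $\delta$ tensor of bond dimension $p^{\ast\,n}$ factorizes as a tensor product of $n$ elementary copy tensors of bond dimension $p^\ast$, one per ``wire.'' Each elementary copy has matricization rank $p^\ast$ across every bipartition of its three legs, so it transmits exactly one $p^\ast$-dimensional wire across any cut through it. This wire decomposition is what makes the rigid $\delta$-tensor constraint compatible with routing, and it is the reason the bound is stated with the power-of-$p$ rounding rather than matching $\min_C W_C$ exactly.

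With the wire picture in place I would invoke the integral max-flow / min-cut theorem. Since the ConvAC-weights graph $G$ is a tree whose internal ($\delta$) nodes carry equal bond dimensions on all incident edges, assigning each edge the integer capacity $n_e=\log_{p^\ast} c^{p^\ast}(e)$ and treating $V^A$ as sources and $V^B$ as sinks yields an integer flow of value $F=\min_C\sum_{e\in C}n_e$, i.e. $p^{\ast\,F}=\min_C W^{p^\ast}_C$. I would then convert this flow into weights: along the $F$ flow-carrying wires, set the matrices $A^{(l,j)}$ and the elementary copies so that $F$ independent $p^\ast$-dimensional signals are routed unaltered from the $A$-leaves to the $B$-leaves, while all idle wires are clamped to a fixed basis vector. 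Reading off $\mat{\A^y}_{A,B}$ on the $p^{\ast\,F}$ joint configurations of the routed wires then exposes an identity/permutation block of size $p^{\ast\,F}\times p^{\ast\,F}$, certifying $\textrm{rank}(\mat{\A^y}_{A,B})\geq\min_C W^{p^\ast}_C$ for this configuration, which is the construction required in the first paragraph.

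The main obstacle I anticipate is the explicit weight construction of this last step: making the routing consistent with the copy constraints at every internal node. A $\delta$ node forces the same value on all three of its legs, so a flow-wire cannot ``turn'' freely as in an unconstrained graph; it must either be broadcast to both children or be selected out by the adjoining matrices. Arranging the matrices $A^{(l,j)}$ so that each flow-wire is passed through cleanly while idle wires are suppressed, and verifying that the contraction genuinely yields matched full-rank configurations rather than collapsing under the copy constraints, is the delicate part, in the spirit of the quantum min-cut lower bound of \cite{cui2016quantum} adapted to $\delta$-tensor networks. I would finally remark that when all bond dimensions are themselves powers of a common base one has $c^{p}=c$, so the bound meets the upper bound of claim~\ref{claim:upperbound} and the estimate becomes tight, matching the tightness claimed in the surrounding discussion.
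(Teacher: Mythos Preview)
Your proposal is correct and follows essentially the same route as the paper: the genericity reduction (polynomial entries, one good assignment suffices), the zero-padding embedding of $\phi^{p^\ast}$ into $\phi$, the base-$p^\ast$ ``wire'' factorization of each $\delta$ tensor into elementary copies, the integral max-flow/min-cut step with capacities $n_e=\log_{p^\ast}c^{p^\ast}(e)$, and the final assignment that routes flow-wires while clamping idle wires to a fixed index---all of these appear in the paper in the same order and with the same role. The obstacle you flag is exactly the one the paper isolates: it introduces ``$\delta$-restricted edge disjoint paths,'' uses the integral flow theorem on $G$ to force all wires originating from the same edge to carry flow in the same direction, and then argues that at most $n_e$ paths cross any $\delta$ node so each can be assigned a distinct wire index $t\in[n_e]$, making the explicit assignment (pairwise $\delta$'s along paths, $\delta_{1,\cdot}$ on idle indices) compatible with the copy constraints.
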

\begin{proof}
See appendix~\ref{app:Proofs:Lowerbound}.
\end{proof}

Theorem~\ref{theorem:lowerbound} above implies that the upper bound given in Claim~\ref{claim:upperbound} is saturated when all of the channel numbers in a deep ConvAC architecture are powers of some integer $p$. For a general arrangement of channel numbers, the upper bound is not tight and theorem~\ref{theorem:lowerbound} guarantees that the rank will not be lower than that of any ConvAC architecture with channel numbers which are powers of some integer $p$ yet are not higher than the original ConvAC channel numbers. Even though this is the lower bound we prove, we have a reason to believe the actual lower bound is much tighter. In appendix~\ref{app:Simulation}, we show simulations which indicate that deviations from the upper bound are actually quite rare and unsubstantial in value.

\subsection{Implications - How to Distribute the Number of Channels in a Deep Network} \label{sec:mincutclaim:implications_channels}
\begin{figure}
\centering
\includegraphics[width=\linewidth]{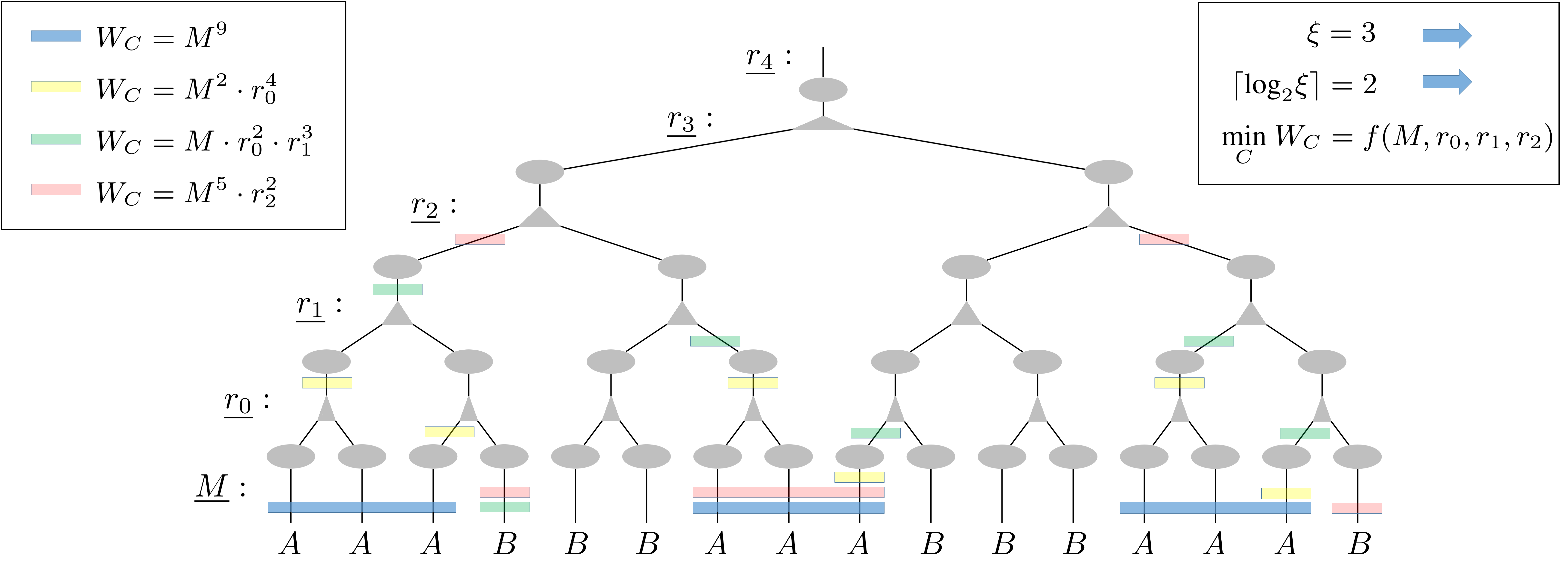}
\caption{(color available online) Examples for cuts w.r.t. a partition $(A,B)$ in an $N=16$ TN representing the weights of a deep ConvAC for which both $A$ and $B$ have contiguous segments of length $\xi=3$. Accordingly, any cut which includes edges from levels $l>\left \lceil{\log_2\xi}\right \rceil=2 $ must also include edges from levels $l\leq\left \lceil{\log_2\xi}\right \rceil=2 $. Moreover, in the minimal cut the edges from higher levels cannot be present as their contribution to the cut weight will be redundant. The bond dimensions of this TN were shown in sec.~\ref{sec:translations} to be equal to the number of channels in the respective layers in the ConvAC network represented by it. Thus, a direct consequence of the above is that for data characterized by short ranged correlations it is best to increase the number of channels in the lower layers, while for data characterized by long ranged correlations the channels in the deeper layers are important in order not to have `short-cuts' harming the required expressiveness of the function realized by the network.}
\label{fig:correlation_length}
\end{figure}

When physicists choose a TN to represent the coefficients tensor of a certain wave function, the entanglement characterizing the wave function is taken into consideration, and a network which can best model it is chosen. Thus, understanding the correlations characteristics of the wave function serves as a prior knowledge that helps restrict the hypothesis space to a suitable TN architecture. In accordance with the analogies discussed above, we will draw inspiration from this approach in the analysis of our results from the previous section, as it represents a `healthy' process of first quantifying the key correlations that the network is required to model, and then constructing it appropriately. This is in effect a control over the inductive bias of the network.

Accordingly, the bounds shown in the previous section not only provide us with theoretical observations regarding the role that the number of channels in each layer fulfils in the overall expressiveness of a deep ConvAC, but also entail practical implications for the construction of a deep network architecture when there is prior knowledge regarding the input. To get a feeling of what can be understood from the theoretical results, consider one-dimensional partitions similar in essence to the left-right partition and the interleaved partition depicted in fig~\ref{fig:partitions}. For a TN representing the ConvAC-weights  with pooling windows of size $2$, $N$ input patches and $L={\textrm {log}}_2N$ layers (an $N=8$ example is shown in fig.~\ref{fig:virtual_nodes}), it is simple to see that the minimal weight of a cut with respect to the left-right partition obeys:
\be
\label{mincut_left_right}
W^{\textrm {left-right} }_C={\textrm {min}}(r_{L-1},r_{L-2},...,r_l^{2^{(L-2-l)}},...,r_0^{N/4},M^{N/2}),
\ee
whereas the minimal weight of a cut w.r.t. the interleaved partition is guaranteed to be exponential in $N$ and obeys:
\be
\label{mincut_interleaved}
W^{\textrm {interleaved}}_C={\textrm {min}}(r_0^{N/4},M^{N/2}).
\ee

These are two examples that bring forth indications for the following `rule of thumb'. If one is interested in modeling elaborate correlations between pixels from opposite ends of an image, such as the ones characterizing face images for example, we see from the expression in eq.~\ref{mincut_left_right} that a small number of channels in deep layers can create an undesired `shortcut' which harms the expressiveness of the network in a way that prevents it form modeling the required correlations. In this case, it is advisable to keep more parameters in the deeper layers in order to obtain a higher entanglement measure for the required partition.  However, if one is interested in modeling only short ranged correlations, and knows that the typical input to the network will not exhibit relevant long ranged correlations, it is advisable to concentrate more parameters (in the form of more channels) in the lower levels,  as it raises the entanglement measure w.r.t the partition which corresponds to short ranged correlations.

The two partitions analyzed above represent two extreme cases of the shortest and longest ranged correlations. However, the min-cut results shown in sec.~\ref{sec:mincutclaim:bounds} apply to any partition of the inputs, so that implications regarding the channel numbers can be drawn for any intermediate length-scale of correlations. The relevant layers that contribute to the min-cut between partitions $(A,B)$ for which both $A$ and $B$ have contiguous segments of a certain length $\xi$ can be pictorially identified, see for example fig.~\ref{fig:correlation_length} for $\xi=3$. The minimal cut with respect to such a partition $(A,B)$ may only include the channel numbers $M,r_0,...,r_{\left \lceil{{\textrm {log}}_2\xi}\right \rceil}$. This is in fact a generalization of the treatment above with $\xi=1$ for the interleaved partition and $\xi=N/2$ for the left-right partition. Any cut which includes edges in higher levels is guaranteed to have a higher weight than the minimal cut, as it will have to additionally include a cut of edges in the lower levels in order for a separation between $A$ and $B$ to actually take place. This can be understood by flow considerations in the graph underlying the TN --- a cut that is located above a certain sub-branch can not assist in cutting the flow between $A$ and $B$ vertices that reside in that sub-branch.

For a data-set with features of a characteristic size $D$ (e.g. in a two-dimensional digit classification task it is the size of digits that are to be classified), such partitions of length scales $\xi<D$ are guaranteed to separate between different parts of a feature placed in any input location. However, in order to perform the classification task of this feature correctly, an elaborate function modeling a strong dependence between different parts of it must be realized by the network. As discussed above, this means that a high measure of entanglement with respect to this partition must be supported by the network, and now we are able to describe this measure of entanglement in terms of a min-cut in the TN graph. Concluding the above arguments, for a ConvAC with pooling windows of size $2$, if the characteristic features size is $D$ then the channel numbers up to layer $l=\left \lceil{{\textrm {log}}_2D}\right \rceil$ are more important than those of deeper layers. This understanding is verified and extended in section~\ref{sec:experiments}, where we present experiments exemplifying that the theoretical findings established above for the deep ConvAC, apply to a regular ConvNet architecture which involves the more common ReLU activations and average or max pooling. In natural images it may be hard to point out a single most important length scale $D$, however the conclusions presented in this section can be viewed as an incentive to better characterize the input correlations which are most relevant to the task at hand.

\subsection{Reproducing Depth Efficiency} \label{sec:mincutclaim:implications_depth}

The exponential depth efficiency result shown in \cite{cohen2016expressive}, can be straightforwardly reproduced by similar graph-theoretic considerations. In appendix~\ref{app:gen_pool}, we show an upper bound on the rank of matricization of the convolutional weights tensor for a case of a general pooling window. The bound implies that any amount of edges in a cut that are connected to the same $\delta$ tensor will contribute their bond dimension only once to the multiplicative weight of this cut, thus effectively reducing the upper bound when many cut edges belong to the same $\delta$ tensor. This does not affect our analysis of the deep network above, as the $\delta$ tensors in that network are only three legged (see fig.~\ref{fig:HT_TN}). Therefore, a cut containing more than one $\delta$ tensor leg can be replaced by an equivalent cut containing only one leg of that $\delta$ tensor, and the value of $\min_C W_C$ is unchanged.

Observing fig.~\hyperref[fig:CPExample]{~\ref{fig:CPExample}(b)} which shows the TN corresponding to the shallow ConvAC architecture, the central positioning of a single $\delta$ tensor implies that under \emph{any} partition of the inputs $(A,B)$, the minimal cut will obey $W_C^{\textrm {min}}={\textrm {min}}(M^{{\textrm {min}}(|A|,|B|)},k)$. Thus, in order to reach the exponential in $N$ measure of entanglement w.r.t. the interleaved partition that was obtained in eq.~\ref{mincut_interleaved} for the deep network, the number of channels in the single hidden layer of the shallow network $k$, must grow exponentially with $N$. Therefore, one must exponentially enlarge the size of the shallow network in order to achieve the expressiveness that a polynomially sized deep network achieves, and an exponential depth efficiency is shown.

Overall, the conclusions regarding the architecture of a deep ConvAC drawn in this section are motivated by physical understandings of the role of correlations, and their connection to Tensor Networks. We have seen how the description of a deep ConvAC in terms of a valid graph enables the analysis of the effect that the number of channels in each of its layers has on the correlations it is able to model, through the tool of a minimal cut in the graph. In the following section, we demonstrate by experiments that the results and conclusions drawn above also hold in the more common setting of a ConvNet with ReLU activations and average or max pooling.

\section{Experiments} \label{sec:experiments}

The min-cut analysis on the TN representing a deep ConvAC translates prior knowledge on how correlations among input variables (or image patches) are modeled, into the architectural design of number of channels per layer of the ConvAC.
For instance, in order to optimally classify features of a characteristic size $D$, more  channels are required up to layer $l=\left \lceil{{\textrm {log}}_2D}\right \rceil$ than in deeper layers. Moreover, when analyzing long ranged correlations that correspond to a large feature size, the number of channels in deeper layers was shown to act as an undesired `short-cut' (see e.g. the functional form of eq.~\ref{mincut_left_right}) and the addition of more channels in these layers is recommended. In this section, we demonstrate by experiments that these conclusions extend to the more common ConvNet architecture which involves ReLU activations and average or max pooling.

Two tasks were designed, one with a  short characteristic length to be referred to as the `local task' and one with a long characteristic length to be referred to as the `global task'. Both tasks are based on the MNIST data-set (\cite{lecun1998mnist}) and consist of $64\times64$ black background images on top of which resized binary MNIST images were
placed in random positions. For the local task, the MNIST images were shrunken to
small $8\times8$ images while for the global task they were enlarged to size $32\times32$. In both tasks the digit was to be identified correctly with a label $0,...,9$. See fig.~\ref{fig:MNIST_big_small}
for a sample of images from each task.

\begin{figure}
\centering
\includegraphics[scale=2]{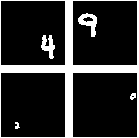}
\caption{Samples of the randomly positioned MNIST digits to be classified in the global task (above) and the local task (below).}
\label{fig:MNIST_big_small}
\end{figure}

We designed two different networks that tackle these two tasks, with a difference in the channel ordering scheme that is meant to emphasize the difference between the two tasks in accordance with the analysis above. In both networks, the first layer is a representation layer --- a $3\times 3$ (with stride 1) shared convolutional layer.
Following it are 6 hidden layers, each with $1\times 1$ shared convolution kernels followed by
ReLU activations and $2\times 2$ max pooling (with stride 2). Classification in both networks was preformed through $Y = 10$ network outputs, with prediction following the strongest activation. The difference between the two networks is in the channel ordering --- in the `wide-base' network they are wider in the beginning and narrow down in the deeper layers while in the `wide-tip' network they follow the opposite trend. Specifically, we set a parameter $r$ to determine each pair of such networks according to the following scheme:
\begin{itemize}
\vspace{-2mm}
\item Wide-base: [10; 4r; 4r; 2r; 2r; r; r; 10]
\vspace{-2mm}
\item Wide-tip: [10; r; r; 2r; 2r; 4r; 4r; 10]
\vspace{-2mm}
\end{itemize}
The channel numbers form left to right go from shallow to deep. The channel numbers were chosen to be gradually increased/decreased in iterations of two layers at a time as a trade-off ---  we wanted the network to be reasonably deep but not to have too many different channel numbers, in order to resemble conventional channel choices. The parameter count for both configurations is identical: $10 \cdot r + r \cdot r + r \cdot 2r +
2r \cdot 2r + 2r \cdot 4r + 4r \cdot 4r + 4r \cdot 10 = 31r^2 + 50r$. A result compliant with our theoretical expectation would be for the `wide-base' network to outperform the `wide-tip' network in the local classification task, and the converse to occur in the global classification task.
\begin{figure}
\centering
\includegraphics[width=\linewidth]{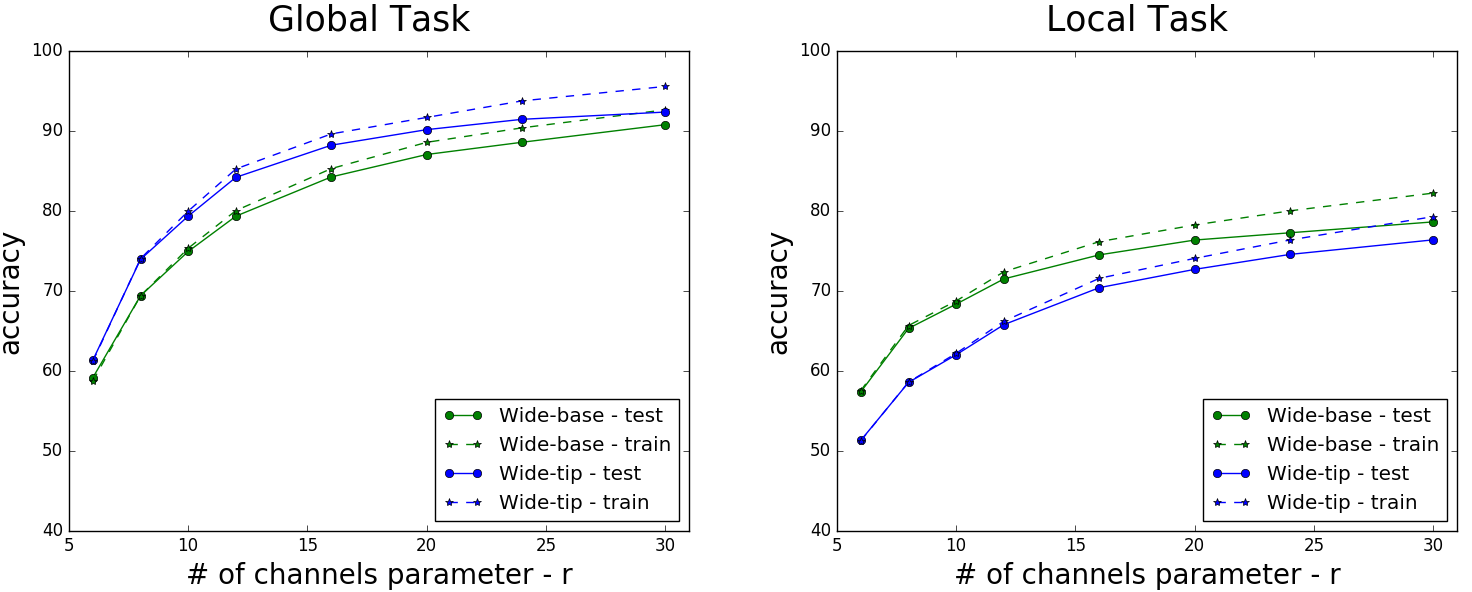}
\caption{(color available online) Results of applying deep convolutional rectifier networks with max pooling to the global and local
classification tasks. Two channel arrangements geometries were evaluated --- `wide-tip', which supports modeling correlations between far away regions, and `wide-base', which puts focus on correlations between regions that are close to each other. Each channel arrangements geometry outperforms the other on the
task which exhibits fitting correlations, demonstrating how prior knowledge regarding a task
at hand may be used to tailor the inductive bias through appropriate channel arrangements. Furthermore, these results demonstrate that the theoretical conclusions that were provided in sec~\ref{sec:mincutclaim} for a ConvAC, extend to the common ConvNet architecture which involves ReLU activations and max pooling.}
\label{fig:exp_results}
\end{figure}

Fig. \ref{fig:exp_results} shows the results of applying both the `wide-base' and `wide-tip' networks to the local and global tasks. Each task consisted of $60000$ training images and $10000$ test images, in correspondence with the MNIST database. Indeed, the `wide-base' network significantly outperforms the `wide-tip' network in the local classification task, whereas a clear opposite trend can be seen for the global classification task. This complies with our discussion above, according to which the `wide-base' network should be able to support short correlation lengths in the input, whereas the `wide-tip' network is predicted to put focus on longer correlation lengths. The low performance relative to the regular MNIST task is due to the randomization of positions, there is no such degree of freedom in the regular MNIST task. This is of no concern to us, only the relative performance between networks on each task is of interest in order to examine how our claims extend to the common ConvNet architecture.  The fact that the global task gets higher accuracies for all choices of $r$ is unsurprising, as it is clearly an easier task. An additional thing to note is that as $r$ grows, the accuracies of both tasks augment and with it, the difference between the performances of the architectures on both tasks decreases. This is attributed to the `hardness' of the task relative to the amount of parameters in the network. For these larger numbers of parameters in the networks, a harder task is expected to exhibit a more noticeable difference in performance between the two architectures.

Overall, in these experiments we show a notable compliance between
a typical length of the data and an appropriate network design. Our results can therefore be seen as a clear demonstration of how prior knowledge regarding a task at hand may be used to tailor the inductive bias of a deep convolutional network by designing the channel widths appropriately subject to the amount of resources at hand. We have shown how phenomena that were indicated by the theoretical analysis that was presented in this paper in the context of ConvACs, manifest themselves in the most prevalent and successful ConvNet architecture which involves ReLU activations and max pooling.

\section{Discussion} \label{sec:discussion}
The construction of a deep ConvAC in terms of a Tensor Network, is the main theoretical achievement of this paper. This method, which constructs the ConvAC by inner products between low order tensors rather than by outer products as has been performed to date, enabled us to carry a graph-theoretic analysis of a convolutional network, and tie its expressiveness to a minimal cut in the graph characterizing it. Specifically, network architecture related results were drawn, connecting the number of channels in each layer of the deep ConvAC with its ability to model given correlations of its inputs. These results effectively demonstrate a direct control over the inductive bias of the designed network via its channel numbers, and can help any practitioner in the design of a network that is meant to perform a task which is characterized by certain correlations among subsets of its input variables. The applicability of these results, which were theoretically proven for a deep ConvAC architecture, was demonstrated through experiments on a conventional deep convolutional network architecture (ConvNet) which involves ReLU activations and max pooling.

Our construction was enabled via a structural equivalence we drew between the function realized by a ConvAC and a quantum many-body wave function. This facilitated the transfer  of mathematical and conceptual tools employed by physicists when analyzing their wave functions of interest. Thus, we were able to transfer the concept of `entanglement measures' and use it as a well-defined quantifier of the deep network's expressive ability to model intricate correlation structures of its inputs. Moreover, since the prevalent tool in the numerical description of quantum many-body wave functions is the Tensor Network, the structural equivalence discussed above enabled us to harness results that were recently obtained in the physics community for our needs. Specifically, we employed bounds on the measures of entanglement of a function represented by a Tensor Network, that were shown by \cite{cui2016quantum} to be related to the min-cut in the Tensor Network graph. We adjusted this treatment, applying it to the Tensor Network which represents the weights tensor of a deep ConvAC and eventually employing it to attain our results.

Apart from the direct results discussed above, two important interdisciplinary bridges emerge from this work. The first is the description of a deep convolutional network as a Tensor Network that is subject to well-defined graph-theoretic tools. The results we drew  by using this connection, \ie~the relation between min-cut in the graph representation of a ConvAC to network expressivity measures, may constitute an initial example for using the connection to TNs for the application of graph-theoretic measures and tools to the analysis of the function realized by a deep convolutional network. The second bridge, is the connection between the two seemingly unrelated fields of quantum physics and deep learning. The field of quantum Tensor Networks is a rapidly evolving one, and the established construction of a successful deep learning architecture in the language of Tensor Networks may allow applications and insights to be transferred between the two domains. For example, the tree shaped Tensor Network that was shown in this work to be equivalent to a deep convolutional network, has been known in the physics community for nearly a decade to be inferior to another deep Tensor Network architecture by the name of \emph{Multiscale Entanglement Renormalization Ansatz} (MERA)(\cite{vidal2008class}).

The MERA Tensor Network architecture introduces overlaps by adding `disentangling' operations prior to the pooling operations, which in translation to machine learning network terms effectively mix activations that are intended to be pooled in different pooling windows. This constitutes an exemplar case of how the Tensor-Networks/deep-learning connection that was established in this work allows a bi-directional flow of tools and intuition. Physicists have a good grasp of how these specific overlapping operations allow a most efficient representation of functions that exhibit high correlations at all length scales (\cite{vidal2007entanglement}). Accordingly, a new view of the role of overlaps in the high expressivity of deep networks as effectively `disentangling' intricate correlations in the data can be established. In the other direction, as deep convolutional networks are the most empirically successful machine learning architectures to date, physicists may benefit from trading their current `overlaps by disentangling' scheme to the use of overlapping convolutional windows (which were recently proven to contribute exponentially to the expressive capacity of neural networks by \cite{sharir2017expressive}), in their search for expressive representations of quantum wave functions. The employment of convolutional networks for efficient quantum wave function representation was suggested recently by e.g. \cite{carleo2017solving}, \cite{deng2017quantum}.

An interesting direction indicated by our work is the characterization of the correlations structure of a given data-set. As we demonstrate above, fitting the inductive bias of the deep network to such input correlations is possible via its architectural parameters, namely the number of channels in each layer. We provided means of characterizing such correlations in a simple case, by tying them to the size of the features in an elementary `fixed feature size' classification task. However, in a general more natural case, the best way to define such correlations is unclear.  For example, \cite{lin2016critical} show interesting studies according to which the mutual information (which can be viewed as a measure of correlations) that characterizes various data-sets such as English Wikipedia, works of Bach, the human genome etc., decays polynomially with a critical exponent similar in value to that of the critical two-dimensional Ising model. Indeed, also in the realm of correlations characterization there is a lot to take from physics analyses, and the bridge formed in this work between deep convolutional networks and Tensor Networks is a natural place to start. Another possible direction to pursue, given the bridge between ConvACs and Tensor Networks, is the use of numerical methods, such as those inspired by the DMRG algorithm (\cite{white1992density}), for training deep networks by mapping a ConvNet architecture to a Tensor Network, performing the training there and then mapping the trained parameters back to the ConvNet architecture. Some attempts along those lines have been made, such as \cite{stoudenmire2016supervised} who trained a matrix product state (MPS) Tensor Network architecture to preform supervised learning tasks, but those attempts did not have the benefit of a mapping to an existing deep ConvNet architecture prior the Tensor Network optimization process.

To conclude, we have presented in this paper connections between the quantum entanglement of a many-body wave function and correlations modeled by a deep convolutional network (ConvAC to be specific), and demonstrated how the graphical model of Tensor Networks can describe the architecture of such a network. From these connections, we were able to draw novel results which provided us with theoretical observations regarding the role that the number of channels in each layer fulfils in the overall expressiveness of a deep convolutional network. Furthermore, the results yielded practical implications for the construction of a deep network architecture when there is prior knowledge regarding the input. We view this work as an exciting bridge for transfer of tools and ideas between fields, and hope it will reinforce a fruitful interdisciplinary discourse.

\section*{Acknowledgements}
We have benefited from discussions with Or Sharir, Ronen Tamari, Markus Hauru and Eyal Leviatan. This work is supported by Intel grant ICRI-CI $\#$9-2012-6133, by ISF Center grant 1790/12 and by the European Research Council (TheoryDL project).

\section*{References}
\small{
\bibliographystyle{plainnat}
\bibliography{references_phys}
}

\clearpage
\appendix

\section{A Recursive Construction of the Deep ConvAC Tensor Network} \label{app:HTrecursive}

\begin{figure}
\centering
\includegraphics[scale=0.28]{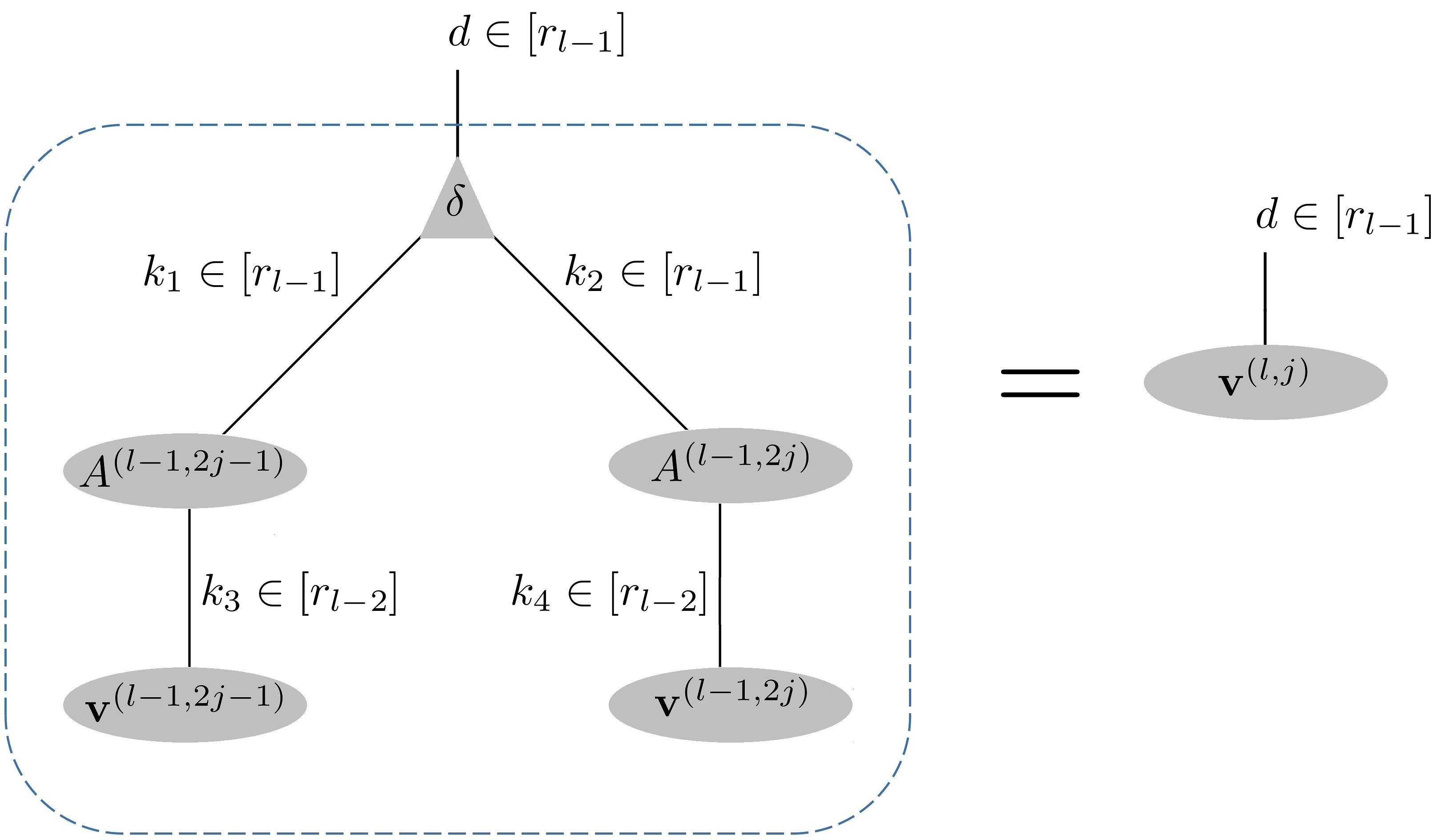}
\caption{A recursive building block of the deep ConvAC TN that was presented in section sec.~\ref{sec:translations:deep}. This scheme is the TN equivalent of two feature vectors in the $l-1$ level being operated on with the conv$\rightarrow$pool sequence of a deep ConvAC shown in fig.~\ref{fig:original_convac}, as is demonstrated below.}
\label{fig:HT_TN_Building_Block}
\end{figure}

The calculation performed by a one-dimensional ConvAC for a general $N$ (s.t. ${\textrm {log}}_2N\in\N$), is given by the recursively defined TN representation shown in fig.~\ref{fig:HT_TN_Building_Block}. $\vv^{(l,j)}\in\R^{r_{l-1}}$ is a vector of actual activations generated during a computation across in the $l^{th}$ level of the network shown in fig.~\ref{fig:original_convac}. Recall that $r_{-1}:=M$, and that $\vv^{(0,j)}\in\R^{M},~j\in[N]$ is a vector in the representation layer (see fig.~\ref{fig:HT_TN}). To demonstrate that this TN indeed defines the calculations performed by a ConvAC, we conjecture that the equality in fig.~\ref{fig:HT_TN_Building_Block} holds, namely that for $l=1,...,L=\log_2 N$ the $d^{th}$ component of each such activations vector is given in terms of the vectors in the preceding layer by:
\bea
v^{(l,j)}_d=
\sum_{k_1,k_2=1}^{r_l-1}
\sum_{k_3,k_4=1}^{r_{l-2}}
A^{(l-1,2j-1)}_{k_1k_3}v^{(l-1,2j-1)}_{k_3}
A^{(l-1,2j)}_{k_2k_4}v^{(l-1,2j)}_{k_4}
\delta_{k_1k_2d}
\nonumber \\
~~~~~~~~~~~=
\sum_{k_1,k_2=1}^{r_l-1}
(A^{(l-1,2j-1)}\vv^{(l-1,2j-1)})_{k_1}
(A^{(l-1,2j)}\vv^{(l-1,2j)})_{k_2}
\delta_{k_1k_2d},
\label{eq:buildingblock1}
\eea
where $d\in [r_{l-1}]$. In the first line of eq.~\ref{eq:buildingblock1} we simply followed the TN prescription and wrote a summation over all of the contracted indices in the left hand side of fig.~\ref{fig:HT_TN_Building_Block}, and in the second line we used the definition of matrix multiplication. According to the construction of $A^{(l,j)}$ given in sec.~\ref{sec:translations:deep}, the vector $\uu^{(l,j)}\in\R^{r_{l}}$ defined by $\uu^{(l,j)}:=A^{(l,j)}\vv^{(l,j)}$, upholds $u_\gamma=\left\langle\aaa^{l,j,\gamma},\vv^{(l,j)}\right\rangle,~\gamma\in[r_l]$ where the weights vector $\aaa^{l,j,\gamma} \in \R^{r_{l-1}}$ was introduced in eq.~\ref{eq:ht_decomp} . Thus, eq.~\ref{eq:buildingblock1} is reduced to:
\be
v^{(l,j)}_d=
\sum_{k_1,k_2=1}^{r_l-1}
\left\langle\aaa^{l-1,2j-1,k_1},\vv^{(l-1,2j-1)}\right\rangle
\left\langle\aaa^{l-1,2j,k_2},\vv^{(l-1,2j)}\right\rangle
\delta_{k_1k_2d}.
\ee
Finally, by definition of the $\delta$ tensor, the sum vanishes and we obtain the required expression for the operation of the ConvAC:
\be
v^{(l,j)}_d=
\left\langle\aaa^{l-1,2j-1,d},\vv^{(l-1,2j-1)}\right\rangle
\left\langle\aaa^{l-1,2j,d},\vv^{(l-1,2j)}\right\rangle,
\ee
where an activation in the $d^{th}$ feature map of the $l^{th}$ level holds the multiplicative pooling of the results of  two activation vectors from the previous layer convolved with the $d^{th}$ convolutional weight vector for that layer. Applying this procedure recursively is exactly the conv$\rightarrow$pool$\rightarrow...\rightarrow$conv$\rightarrow$pool scheme that lies at the heart of the ConvAC operation (fig.~\ref{fig:original_convac}). Recalling that $r_L:=Y$, the output of the network is given by:
\be
\h_y(\x_1...,\x_N) =A^{(L,1)}\vv^{(L,1)}.
\ee
\section{Deferred Proofs } \label{app:Proofs}

\subsection{Upper Bound on the Entanglement Measure } \label{app:Proofs:Upperbound}

\begin{figure}
\centering
\includegraphics[width=\linewidth]{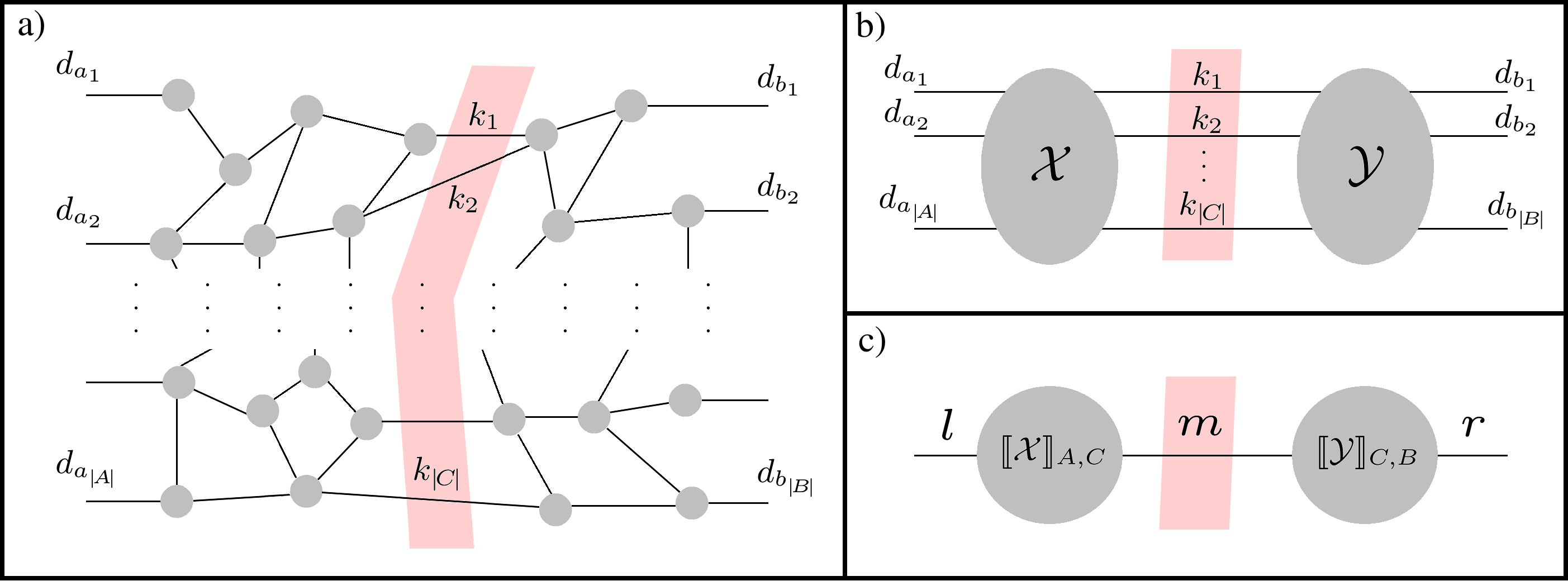}
\caption{Accompanying illustrations for the proof of claim~\ref{claim:upperbound}. a) An example for an arbitrarily inter-connected TN with $N$ external indices, arranged such that the indices corresponding to group $A$ are on the left and indices corresponding to group $B$ are on the right. The cut marked in pink in the middle separates between $A$ and $B$. b) A contraction of all the internal indices to the left and to the right of the cut results in two higher order tensors, each with external indices only from group $A$ or $B$, connected to each other by the edges of the cut. c) Finally coalescing the indices into three groups, results in a matrix that on one hand is equal to the matricization w.r.t. $(A,B)$ of the tensor represented by a TN in a), and on the other is equal to a multiplication of matrices, the rank of which is upper bounded by $\prod_{i=1}^{|C|} c_{k_i}$, thus proving claim~\ref{claim:upperbound}.}
\label{fig:upper_bound}
\end{figure}

Claim~\ref{claim:upperbound} effectively states that an upper bound on the rank of matricization w.r.t. $(A,B)$ of a tensor represented by a TN, is the minimal multiplication of the bond dimensions of edges composing a cut in the corresponding graph w.r.t. $(V^A,V^B)$. This claim is well know in different forms in the literature.
In the following, we provide a proof of the claim accompanied by a graphical example for clarity.
\medskip

\begin{proof}(of claim~\ref{claim:upperbound})

 We will use the example shown in fig.~\hyperref[fig:upper_bound]{~\ref{fig:upper_bound}(a)} of a general TN with arbitrary connectivity. The edges of the TN $\phi$ are marked by the index associated with them. Any index $p\in\{d,k\}$ runs between $1$ and its bond dimension marked by $c_p$, which upholds $c_p:=c(e_p)$ where $e_p\in E$ is the edge associated with the index $p$. For the given partition $(A,B)$, denote $A=\{a_1,...,a_{|A|}\}~,~B=\{b_1,...,b_{|B|}\}$ and let $I_A \cupdot I_B=\{d_{1},\ldots,d_{N}\}$ be the corresponding partition of external indices, where $I_A=\{d_{a_1},...,d_{a_{|A|}}\}$ and $I_B=\{d_{b_1},...,d_{b_{|B|}}\}$. Let ${\cal H} ^A$ and ${\cal H} ^B$ be the spaces corresponding to the different configurations of the indices in $I_A$ and $I_B$, respectively, their dimensions given by:
\be
{\textrm {dim}}({\cal H} ^A)=\prod_{i=1}^{|A|} c_{d_{a_i}}~,~{\textrm {dim}}({\cal H} ^B)=\prod_{i=1}^{|B|} c_{d_{b_i}}.
\label{totalExternalDimensions}
\ee

In the example shown in fig.~\hyperref[fig:upper_bound]{~\ref{fig:upper_bound}(a)}, the graph is arranged s.t. $A$ is on the left and $B$ is on the right. The marked cut $C$ that separates between $A$ and $B$ is arbitrarily chosen as a representative cut, and we denote the indices of the cut edges by $I_C=\{k_1,...,k_{|C|}\}$. It is noteworthy, that any index $k_i$ in the cut is allowed to be an external index,~\ie~the cut is allowed to contain any amount of external edges.

Now, two contractions can be preformed, separately contracting all the tensors to the left of the cut and to the right of it. We are left with two higher order tensors, ${\X}_{d_{a_1}...d_{a_{|A|}}k_1...k_{|C|}}$ and ${\Y}_{k_1...k_{|C|}d_{b_1}...d_{b_{|B|}}}$ each with external indices only from $I_A$ or $I_B$, connected to each other by the edges of the cut, as is depicted in fig.~\hyperref[fig:upper_bound]{~\ref{fig:upper_bound}(b)}. If any cut index $c_i$ is equal to any external index $d_j$, then respective tensor simply includes the term $\delta_{c_id_j}$.

Note that the space corresponding to the different configurations of the cut indices $I_C$ is of dimensions $\prod_{i=1}^{|C|} c_{k_i}$, which is exactly equal to $W_C$ (see eq.~\ref{CutWeight}), since by definition $c_{k_i}=c(e_{k_i})$. Next, coalescing the indices in $I_A$ into a single index representing all of the external indices to the left of the network: $l\in [{\textrm {dim}}({\cal H} ^A)]$, the indices in $I_B$ into a single index representing all of the external indices to the right of the network: $r\in [{\textrm {dim}}({\cal H} ^B)]$, and the indices in $I_C$ into a single index representing all of the cut indices: $m\in [W_C]$, a TN which is equal to the matricization $\mat{\A}_{A,B}$ is obtained (fig.~\hyperref[fig:upper_bound]{~\ref{fig:upper_bound}(c)}).

According to the TN in fig.~\hyperref[fig:upper_bound]{~\ref{fig:upper_bound}(c)}, the matricization $\mat{\A}_{A,B}$ can be written as a multiplication of two matrices. Component wise, this can be written as:
\be
(\mat{\A}_{A,B})_{lr}=\sum_{m=1}^{W_C} (\mat{\X}_{A,C})_{lm}(\mat{\Y}_{C,B})_{mr},
\ee
where any amount of cut indices that are also external indices translate as a blocks of the identity matrix of a respective size on diagonal. Finally, since this construction is true for any cut $C$ w.r.t $(A,B)$, the rank of $\mat{\A}_{A,B}$ upholds: ${\textrm {rank}}(\mat{\A}_{A,B})\leq\ \min_C W_C$, satisfying the claim for any general TN, and specifically for the ConvAC TN.
\end{proof}

In the following subsection we shall see that this upper bound is tight for a deep ConvAC network with a pooling window of size $2$ under certain conditions. In appendix~\ref{app:gen_pool}, we provide a smaller upper bound for a ConvAC network with a pooling window that is larger than $2$, which gives us insight regarding the lack in expressiveness of a shallow convolutional network relative to a deep convolutional network, discussed in sec.~\ref{sec:mincutclaim:implications_depth}.

\subsection{Lower Bound on the Entanglement Measure } \label{app:Proofs:Lowerbound}

In the following we prove theorem~\ref{theorem:lowerbound}, showing in effect that the upper bound on the rank of the matricization of the deep ConvAC convolutional weights tensor shown in claim~\ref{claim:upperbound} is tight when all of the channel numbers are powers of some integer $p$, and guaranteeing a positive result in all cases.  Our proof strategy is similar to the one taken in \cite{cui2016quantum}, however we must deal with the restricted $\delta$ tensors present in the network corresponding to a ConvAC (the triangle nodes in fig.~\ref{fig:HT_TN}). We first quote and show a few results that will be of use to us. We begin by quoting a claim regarding the prevalence of the maximal matrix rank for matrices whose entries are polynomial functions --- claim~\ref{claim:rank_everywhere}. Next, we quote a famous graph theory result known as the Undirected Menger's Theorem (\cite{menger1927allgemeinen},~\cite{elias1956note},~\cite{ford1956maximal}) which relates the number of edge disjoint paths in an undirected graph to the cardinality of the minimal cut  --- theorem~\ref{theorem:menger}. After this, we show that the rank of matricization of the tensor represented by $\phi^p$ that is defined in theorem~\ref{theorem:lowerbound}, is a lower bound on the rank of matricization of the tensor represented by $\phi$ --- lemma~\ref{lemma:phi_p_to phi}. Then, we prove that the upper bound in claim~\ref{claim:upperbound} is tight when all of the channel numbers are any powers of the same integer $p\in\N$ --- lemma~\ref{lemma:powers_of_p}. Finally, when all the preliminaries are in place, we show how the result in theorem~\ref{theorem:lowerbound} is implied.

\begin{claim}

\label{claim:rank_everywhere}
Let $M, N, K \in \N$, $1 \leq r \leq \min\{M,N\}$ and a polynomial mapping $A:\R^K \to \R^{M \times N}$,
i.e. for every $i \in [M]$ and $j\in [N]$ it holds that $A_{ij}:\R^K \to \R$ is a polynomial function.
If there exists a point $\x \in \R ^K$ s.t. ${\textrm {rank}}{(A(\x))} \geq r$, then the set $\{\x \in \R^K : \textrm{rank}{(A(\x))} < r\}$
has zero measure (w.r.t. the Lebesgue measure over $\R^K$).
\end{claim}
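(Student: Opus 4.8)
The plan is to reduce the rank condition to the non-vanishing of a single polynomial, and then to invoke the classical fact that the zero set of a nontrivial polynomial has Lebesgue measure zero. First I would recall the determinantal characterization of rank: a matrix has rank at least $r$ precisely when at least one of its $r \times r$ minors is nonzero. Since we are given a point $\x_0 \in \R^K$ with $\textrm{rank}(A(\x_0)) \geq r$, there exist row indices $i_1 < \cdots < i_r$ and column indices $j_1 < \cdots < j_r$ such that the corresponding $r \times r$ submatrix of $A(\x_0)$ has nonzero determinant.

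Next I would define $p : \R^K \to \R$ to be the map sending $\x$ to the determinant of that \emph{fixed} $r \times r$ submatrix of $A(\x)$. Because each entry $A_{ij}$ is a polynomial in $\x$ by hypothesis, and the determinant is a fixed polynomial expression (a signed sum of products) in the matrix entries, $p$ is itself a polynomial in the $K$ coordinates of $\x$. By the choice of submatrix, $p(\x_0) \neq 0$, so $p$ is not the identically-zero polynomial.

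The containment step comes next: if $\textrm{rank}(A(\x)) < r$ then \emph{every} $r \times r$ minor of $A(\x)$ vanishes, in particular the one defining $p$. Hence
\[
\{\x \in \R^K : \textrm{rank}(A(\x)) < r\} \subseteq \{\x \in \R^K : p(\x) = 0\}.
\]
It therefore suffices to bound the measure of the right-hand set.

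Finally I would appeal to the lemma that the zero set of a nonzero polynomial in $K$ real variables has Lebesgue measure zero in $\R^K$. The cleanest route is induction on $K$: for $K=1$ a nonzero univariate polynomial has finitely many roots, hence a null zero set; for the inductive step, write $p$ as a polynomial in the last coordinate with coefficients that are polynomials in the first $K-1$ coordinates, observe that outside the (inductively null) zero set of these coefficients at least one coefficient is nonzero so that the univariate slice has finitely many roots, and conclude by Fubini's theorem that the full zero set is null. Since $p$ is nonzero its zero set is null, and by the containment above so is the rank-deficient set, which is exactly the claim. The only nonroutine ingredient is this measure-zero fact, which is classical; I do not expect any genuine obstacle in the argument.
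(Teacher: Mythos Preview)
Your argument is correct and is precisely the standard proof of this classical fact: fix a nonvanishing $r\times r$ minor at the witness point, observe that its determinant is a polynomial in $\x$, use the containment of the rank-deficient locus in its zero set, and then invoke (with the clean Fubini-based induction you sketch) that the zero set of a nontrivial multivariate polynomial is Lebesgue-null.

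As for comparison with the paper: there is nothing to compare against, since the paper does not supply its own proof but simply cites \citet{sharir2016tensorial} for this claim. Your write-up is a self-contained justification of exactly the result being quoted.
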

\begin{proof}
See \citet{sharir2016tensorial}.
\end{proof}

Claim~\ref{claim:rank_everywhere} implies that it suffices to show an assignment of the ConvAC network weights achieving a certain rank of matricization of the convolutional weights tensor, in order to show this is the rank for all configurations of the network weights but a set of Lebesgue measure zero. Essentially, this means that it is enough to provide a specific assignment that achieves the required bound in theorem~\ref{theorem:lowerbound} in order to prove the theorem. Next, we present the following well-known graph theory result:

\begin{theorem}(\cite{menger1927allgemeinen},~\cite{elias1956note},~\cite{ford1956maximal})
\label{theorem:menger}$[$Undirected Menger's Theorem$]\\$
Let $G = (V,E)$ be an undirected graph with a specified partition $(A,B)$ of the set of degree $1$ vertices. Let $MF(G)$ be the maximum number of edge disjoint paths (paths which are allowed to share vertices but not edges) in $G$ connecting a vertex in $A$ to a vertex in $B$. Let $MC(G)$ be the minimum cardinality of all edge-cut sets between $A$ and $B$. Then, $MF(G)=MC(G)$.
\end{theorem}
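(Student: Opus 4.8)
The statement is the classical edge form of Menger's theorem (equivalently, the integral max-flow/min-cut theorem), so the plan is to derive it from the Ford--Fulkerson theorem rather than to reprove that machinery from scratch. The inequality $MF(G)\le MC(G)$ is immediate: fix any edge-cut set $C$ separating $A$ from $B$; since deleting $C$ disconnects every vertex of $A$ from every vertex of $B$, each $A$--$B$ path must traverse at least one edge of $C$, and because the paths counted by $MF(G)$ are pairwise edge-disjoint they consume distinct edges of $C$. Hence at most $\abs{C}$ such paths can coexist, and minimizing over cuts gives $MF(G)\le MC(G)$. The content of the theorem is therefore the reverse inequality $MF(G)\ge MC(G)$, i.e. the existence of $MC(G)$ genuinely edge-disjoint $A$--$B$ paths.

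For the reverse direction I would pass to a single-source/single-sink flow network. First I would form an augmented graph $G'$ by adjoining a super-source $s$ joined to every vertex of $A$ and a super-sink $t$ joined to every vertex of $B$; I give every original edge of $G$ capacity $1$ and every newly added $s$--$a$ and $b$--$t$ edge capacity $\abs{E}+1$ (effectively infinite). Because $A$ and $B$ consist of degree-$1$ vertices this introduces no spurious routing capacity, and crucially it forces every finite $s$--$t$ cut of $G'$ to consist solely of original edges of $G$: placing any $a\in A$ on the sink side (or any $b\in B$ on the source side) would sever an infinite-capacity link. Such a finite cut disconnects all of $A$ from all of $B$ upon removal, so it is exactly an edge-cut set between $A$ and $B$ in the sense of the theorem, and conversely every such edge-cut set yields an $s$--$t$ cut of the same cardinality; consequently the minimum $s$--$t$ cut capacity of $G'$ equals $MC(G)$.

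I would then invoke the max-flow/min-cut theorem together with integrality of maximum flow for integer capacities to obtain an integral $s$--$t$ flow of value $MC(G)$, and finally apply flow decomposition: a unit-capacity integral flow of value $k$ decomposes into $k$ edge-disjoint $s$--$t$ paths (plus cycles, which are discarded). Stripping $s$ and $t$ turns these into edge-disjoint $A$--$B$ paths in $G$ (here the degree-$1$ property again guarantees at most one unit of flow enters each endpoint), yielding $MF(G)\ge MC(G)$ and hence equality. I expect the only genuine technical care to lie in modeling the undirected edges: I would realize each undirected edge $\{u,v\}$ of capacity $1$ by an antiparallel pair of arcs $u\to v$, $v\to u$, and note that any integral flow can be reduced by cancellation so that at most one of each antiparallel pair carries flow, ensuring every undirected edge is used in at most one direction and with value at most $1$. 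This cancellation step, rather than the flow theory itself, is where the undirected edge-disjointness bookkeeping must be done correctly; the degenerate cases in which $A$ or $B$ is empty (both quantities equal $0$) are handled trivially.
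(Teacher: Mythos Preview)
Your proposal is correct and follows the standard derivation of the edge version of Menger's theorem from the integral max-flow/min-cut theorem via a super-source/super-sink reduction; the easy direction, the cut correspondence, the integrality and flow-decomposition steps, and the antiparallel-arc cancellation for undirected edges are all handled properly.

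There is nothing to compare against: the paper does not supply its own proof of this theorem but simply writes ``See e.g.\ \cite{cui2016quantum}'' and treats it as a quoted classical result (as the attributions to Menger, Elias et al., and Ford--Fulkerson already indicate). Your write-up therefore goes beyond what the paper provides, and the route you take is exactly the one implicitly pointed to by the Ford--Fulkerson citation.
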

\begin{proof}
See e.g. \cite{cui2016quantum}.
\end{proof}

Theorem~\ref{theorem:menger} will assist us in the proof of lemma~\ref{lemma:powers_of_p}. We will use it in order to assert the existence of edge disjoint paths in an auxiliary graph (fig.~\ref{fig:phi_star}), which we eventually utilize in order to provide the required weights assignment in the lemma.

\begin{lemma}
\label{lemma:phi_p_to phi}
Let $(A,B)$ be a partition of $[N]$, and $\mat{\A^y}_{A,B}$ be the matricization \wrt~$(A,B)$ of a convolutional weights tensor $\A^y$ (eq.~\ref{eq:convac}) realized by a ConvAC depicted in fig.~\ref{fig:original_convac}. Let $\phi$ be the TN corresponding to this ConvAC network, and let $\phi^p$ be a TN with the same connectivity as $\phi$, where all of the bond dimensions are modified to be equal the closest power of $p$ to their value in $\phi$ from below. Let $(\A^p)^y$ be the tensor represented by $\phi^p$ and let there exist an assignment of all of the tensors in the network $\phi^p$ for which ${\textrm {rank}}(\mat{(\A^p)^y}_{A,B}) =R$. Then, $~{\textrm {rank}}(\mat{\A^y}_{A,B})$ is at least $R$ almost always, i.e. for all configurations of the weights of $\phi$ but a set of Lebesgue measure zero.

\end{lemma}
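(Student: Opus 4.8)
The plan is to realize the optimal weight configuration of the smaller network $\phi^p$ as a single weight configuration of the larger network $\phi$ by zero-padding, and then to upgrade this one favorable configuration into an almost-everywhere statement via the polynomial genericity of matrix rank (Claim~\ref{claim:rank_everywhere}). The crucial structural observation is that, by definition of $c^p$, every edge $e$ of $\phi^p$ has bond dimension $c^p(e)\leq c(e)$, so each variable tensor of $\phi^p$ fits inside the index range of the corresponding tensor of $\phi$, and each constrained tensor of $\phi^p$ is exactly the restriction of the $\delta$ tensor of $\phi$ to the smaller index range — the super-diagonal pattern of eq.~\ref{eq:deltadef} is preserved under truncation of the index set, so truncating a $\delta$ tensor of $\phi$ yields precisely the $\delta$ tensor of $\phi^p$.

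First I would make the embedding explicit. Fix an assignment of the variable tensors of $\phi^p$ attaining $\textrm{rank}(\mat{(\A^p)^y}_{A,B})=R$. For each variable node of $\phi$ — the matrices $A^{(l,j)}$ holding convolutional weights (including the first-level matrices acting on the $M$-dimensional representation mode) and the top weight tensor $\aaa^{L,y}$ — I define a padded tensor that equals the corresponding $\phi^p$ tensor on the sub-block of indices lying in $[c^p(e)]$ on every incident edge $e$, and is zero on all remaining entries. The $\delta$ tensors are left as the full super-diagonal tensors of $\phi$.

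Next I would verify that contracting $\phi$ under this padded assignment reproduces $(\A^p)^y$. Each contracted index of $\phi$ runs over $[c(e)]\supseteq[c^p(e)]$, but every summand whose index value exceeds $c^p(e)$ carries a factor from a padded matrix that vanishes there, so the contraction collapses to the sum over $[c^p(e)]$. Since the restricted $\delta$ tensors coincide with those of $\phi^p$, the collapsed contraction is exactly the one defining $(\A^p)^y$. Because the open (input) edges were truncated as well, the resulting order-$N$ tensor $\A^y$ agrees with $(\A^p)^y$ on the sub-block $d_1,\ldots,d_N\in[c^p]$ and vanishes whenever some $d_j>c^p$ (the first-level matrices are zero there). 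Consequently $\mat{\A^y}_{A,B}$ equals $\mat{(\A^p)^y}_{A,B}$ on a $(c^p)^{|A|}\times(c^p)^{|B|}$ sub-block and is zero outside it, and therefore has rank exactly $R$.

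Finally, the entries of $\A^y$, and hence of $\mat{\A^y}_{A,B}$, are polynomials in the free convolutional weights of $\phi$; having exhibited a single weight point at which this matricization has rank $R$, Claim~\ref{claim:rank_everywhere} (with threshold $R$) gives that the configurations of $\phi$ with $\textrm{rank}(\mat{\A^y}_{A,B})<R$ form a set of Lebesgue measure zero, i.e. the rank is at least $R$ almost always. I expect the main obstacle to be the bookkeeping in the embedding step: one must check carefully that truncating the constrained $\delta$ tensors yields the $\delta$ tensors of $\phi^p$ rather than some perturbation of them, and that the zero-padding of all variable nodes — in particular the first-level matrices on the representation mode — is mutually consistent, so that no spurious nonzero entries survive outside the intended sub-block and the rank is genuinely preserved.
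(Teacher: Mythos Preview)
Your proposal is correct and follows essentially the same approach as the paper: zero-pad the weight matrices of $\phi^p$ into the upper-left blocks of those of $\phi$, observe that the truncation of a $\delta$ tensor is again a $\delta$ tensor so the contraction collapses to that of $\phi^p$, deduce that $\mat{\A^y}_{A,B}$ carries $\mat{(\A^p)^y}_{A,B}$ as a sub-block with zeros elsewhere, and conclude via Claim~\ref{claim:rank_everywhere}. The paper's write-up is organized around splitting each contracted index into ``low'' and ``high'' ranges (your collapse argument), but the content is identical.
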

\begin{proof}

Consider the specific assignment of all of the tensors in the network $\phi^p$ which achieves ${\textrm {rank}}(\mat{(\A^p)^y}_{A,B}) =R$, and leads to the resultant tensor $(\A^p)^y$ upon contraction of the network. Observing the form of the deep ConvAC TN presented in sec.~\ref{sec:translations:deep}, we see that it is composed of $\delta$ tensors and of weight matrices $A^{(l,j)}\in\R^{r_l\times r_{l-1}}$. Recalling that the entries of the former are dictated by construction and obey eq.~\ref{eq:deltadef}, the assignment of all of the tensors in the network $\phi^p$ is an assignment of all entries of the weight matrices in $\phi^p$ denoted by $(A^p)^{(l,j)}$, $l\in[L]\vee\{0\},j\in[N/2^l]$.

We denote the bond dimension at level $l\in[L]\vee\{-1,0\}$ of $\phi^p$ by $r_l^p$ (recall that we defined $r_{-1}=M$). By the definition of $\phi^p$, this bond dimension cannot be higher than the bond dimension in the corresponding level in $\phi:~\forall l~r_l^p\leq r_l$. Accordingly, the matrices in $\phi$ do not have lower dimensions (rows or columns) than the corresponding matrices in $\phi^p$. Thus, one can choose an assignment of the weights of all the matrices in $\phi$ to uphold the given assignment for the matrices in $\phi^p$ in their upper left blocks, and assign zeros in the extra spaces:
\begin{equation}
\begin{array}{c}
(A^{(l,j)})_{i_1i_2}=\left\{ \begin{array}{c}
((A^p)^{(l,j)})_{i_1i_2},\quad i_1\leq r_{l}^p,i_2\leq r_{l-1}^p\\
0,\quad  ~~~~~~~~~~~otherwise
\end{array}\right..\\
\end{array}\label{eq:phi_from_phi_p_assignment}
\end{equation}

Next, we consider a contraction of all the internal indices of $\phi$, which by definition results in the convolutional weights tensor $\A^y$. In this contraction, one can split the sum over all of the indices that range in $[r_l]$ for which $r_l^p < r_l$ into two sums: one ranging in $[r_l^p]$ and the other in $r_l^p+[r_l-r_l^p]$. For clarity we will not provide an expression for the entire contraction of $\phi$ which involves many internal indices. To understand the sum splitting schematically, let $k_l$ be an index that ranges in $[r_l]$ for which $r_l^p < r_l$. We perform the following splitting on the sum over $k_l$:
\be
\sum_{k_l=1}^{r_l}\{\cdots\}~~\rightarrow~~\sum_{k_l^{\textrm{low}}=1}^{r_l^p}\{\cdots\}~~+\sum_{k_l^{\textrm{high}}=r_l^p+1}^{r_l}\{\cdots\},
\label{sum_split}
\ee
where $k_l$ is switched into $k_l^{\textrm{high/low}}$ in all of the summands in the respective sums. The overall contraction will now be split into many sums involving different `high' and `low' indices. According to the assignment of $A^{(l,j)}$ (eq.~\ref{eq:phi_from_phi_p_assignment}), all sums that include any index labeled by `high' will vanish, and we will be left with a single contraction sum over all the indices labeled by `low'. It is important to note that a $\delta$ tensor of dimension $r_l$ provides that same values as a $\delta$ tensor of dimension $r_l^p$ when observing only its first $r_l^p$ entries in each dimension, as is clear from the $\delta$ tensor definition in eq.~\ref{eq:deltadef}. Finally, we observe that this construction leads to $\A^y$ containing the tensor $(\A^p)^y$ as a hypercube in its entirety and holding zeros elsewhere, leading to $~{\textrm {rank}}(\mat{\A^y}_{A,B}) =  {\textrm {rank}}(\mat{(\A^p)^y}_{A,B}) = R$. Using claim~\ref{claim:rank_everywhere}, this specific assignment implies that $~{\textrm {rank}}(\mat{\A^y}_{A,B})$ is at least $R$ for all configurations of the weights of $\phi$ but a set of Lebesgue measure zero, satisfying the lemma.
\end{proof}

Lemma~\ref{lemma:phi_p_to phi} basically implies that showing that the upper bound on the rank of the  matricization of the deep ConvAC convolutional weights tensor that is presented in claim~\ref{claim:upperbound} is tight when all of the channel numbers are powers of some integer $p$ (which we show below in lemma~\ref{lemma:powers_of_p}), is enough in order to prove the lower bound stated in theorem~\ref{theorem:lowerbound}.

\medskip
\begin{lemma}
\label{lemma:powers_of_p}
Let $(A,B)$ be a partition of $[N]$, and $\mat{\A^y}_{A,B}$ be the matricization \wrt~$(A,B)$ of a convolutional weights tensor $\A^y$ (eq.~\ref{eq:convac}) realized by a ConvAC depicted in fig.~\ref{fig:original_convac} with pooling window of size $2$ (the deep ConvAC network). Let $G(V,E,c)$ represent the TN $\phi$ corresponding to the ConvAC network s.t. $\forall e\in E, \exists n\in \N: c(e)=p^n$ , and let $(V^A,V^B)$ be the vertices partition of $V^{\textrm {inputs}}$ in $G$ corresponding to $(A,B)$.  Let $W_C$ be the weight of a cut $C$ \wrt~$(V^A,V^B)$. Then, the rank of the matricization $\mat{\A^y}_{A,B}$ is at least $\min_C W_C$ almost always, \ie~for all configurations of the ConvAC network weights but a set of Lebesgue measure zero.
\end{lemma}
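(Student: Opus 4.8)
The plan is to invoke Claim~\ref{claim:rank_everywhere}: since every entry of $\A^y$ is a polynomial in the network weights, $\textrm{rank}(\mat{\A^y}_{A,B})$ is the rank of a matrix whose entries are polynomial in those weights, so it suffices to exhibit a \emph{single} assignment of the free weight matrices $A^{(l,j)}$ for which $\textrm{rank}(\mat{\A^y}_{A,B})\geq\min_C W_C$; the maximal rank is then attained for all configurations off a set of Lebesgue measure zero. To build such an assignment I would first pass to the auxiliary network $\phi^*$ (fig.~\ref{fig:phi_star}) in which every bond dimension is reduced to $p$ by a base-$p$ digit decomposition: an edge of bond dimension $p^n$ is replaced by $n$ parallel sub-edges of dimension $p$, obtained by writing each index in base $p$. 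The crucial point that keeps us inside the ConvAC constraint class is that the three-legged super-diagonal tensor of eq.~\ref{eq:deltadef} factorizes digit-wise, $\delta^{(p^n)}_{ijk}=\prod_{t=1}^{n}\delta^{(p)}_{i_tj_tk_t}$, so that after the split each constrained $\delta$ node of $\phi$ becomes $n$ ordinary $p$-dimensional $\delta$ nodes, while each free matrix $A^{(l,j)}$ becomes an unconstrained tensor on the $p$-dimensional sub-legs.

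Next I would identify the combinatorial quantity that governs the construction. Because $\phi$ is a tree, every edge-cut of $\phi^*$ can be replaced, without increasing its cardinality, by one that either severs all or none of the sub-edges of each original edge; such \emph{aligned} cuts are in bijection with cuts $C$ of $\phi$ and have cardinality $\log_p W_C$ (eq.~\ref{CutWeight}). Hence the minimum cardinality of an edge-cut separating $V^A$ from $V^B$ in $\phi^*$ equals $\log_p\min_C W_C$. Applying the Undirected Menger Theorem (Theorem~\ref{theorem:menger}) to $\phi^*$ then produces $F:=\log_p\min_C W_C$ edge-disjoint paths joining the $A$-inputs to the $B$-inputs, one ``$p$-dimensional wire'' per path.

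Finally I would convert these $F$ edge-disjoint paths into a weight assignment realizing rank $p^F=\min_C W_C$. Along each path I propagate an independent index $s_i\in[p]$, setting the free matrices to the identity (or to coordinate-selecting maps) on the two path-legs they touch, so that the value read at the $A$-endpoint leaf is transmitted to the $B$-endpoint leaf. The only genuinely new difficulty relative to \cite{cui2016quantum} is that the $\delta$ tensors cannot be freely assigned and, being copy tensors, broadcast the path value onto their third, off-path leg. This is resolved by the two contraction identities $\sum_{c}\delta_{abc}=\indc{a=b}$ and $\sum_{c}\delta_{abc}(\e_i)_c=(\e_i)_a(\e_i)_b$: summing an off-path leg against the all-ones vector turns the $\delta$ into a clean identity wire between the two path-legs, while fixing it to a basis vector turns it into a projection. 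By choosing the neighbouring free matrices so that each off-path branch contracts to an all-ones (or fixed) vector on its connecting leg, every forced copy is absorbed into a rank-$1$ side factor that does not lower the path rank. Restricting the resulting matricization to the sub-block in which the $F$ wires vary while all other indices are frozen yields the $p^F\times p^F$ identity, so $\textrm{rank}(\mat{\A^y}_{A,B})\geq p^F=\min_C W_C$, and Claim~\ref{claim:rank_everywhere} upgrades this to ``almost always,'' proving the lemma. I expect this last step --- routing the unavoidable $\delta$-copies into harmless rank-$1$ contributions while keeping the $F$ wires mutually independent --- to be the main obstacle, since it is precisely where the restriction to super-diagonal tensors departs from the general-tensor argument of \cite{cui2016quantum}.
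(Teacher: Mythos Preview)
Your high-level strategy coincides with the paper's: reduce via Claim~\ref{claim:rank_everywhere} to a single assignment, pass to a bond-dimension-$p$ network $\phi^*$ using the digit-wise factorisation of the $\delta$ tensors, invoke Menger to obtain $L=\log_p\min_C W_C$ edge-disjoint paths, and turn each path into an identity wire. The paper carries this out in Appendix~\ref{app:Proofs:Lowerbound}.

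There is, however, a genuine gap precisely at the point you yourself flag as ``the main obstacle''. You declare that ``each constrained $\delta$ node of $\phi$ becomes $n$ ordinary $p$-dimensional $\delta$ nodes'', i.e.\ you \emph{split} the $\delta$ vertices in $\phi^*$, and then you invoke Menger on this split graph. But your aligned-cut argument (``because $\phi$ is a tree, every edge-cut of $\phi^*$ can be replaced by one that severs all or none of the sub-edges of each original edge'') is valid only for the \emph{unsplit} multigraph, where partial bundle cuts leave the two endpoints connected and are therefore wasted. Once the $\delta$ vertex is split into $n$ degree-$3$ copies, node-splitting can in principle lower the min-cut, so you have not established that the split graph still has min-cut $L$. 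Equivalently, if you work in the unsplit $\phi^*$ (where the aligned-cut argument does give min-cut $=L$), an arbitrary Menger path may enter a $\delta$ vertex on sub-edge $k^{*1}_t$ and exit on $k^{*2}_{t'}$ with $t\neq t'$; the factorised tensor $\prod_s \delta_{k^{*1}_s k^{*2}_s k^{*3}_s}$ only wires together same-$s$ triples, so such a path is not an identity wire, and your identities $\sum_c \delta_{abc}=\indc{a=b}$ act within a fixed $t$ and cannot repair this.

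The paper closes exactly this gap with an integral-flow argument that your sketch is missing. One pushes the max flow in $G$ with additive capacities $n_e=\log_p c(e)$; the integral-flow theorem gives an integral optimum, which is then lifted edge-by-edge to $L$ edge-disjoint paths in $\phi^*$ such that all sub-edges of each bundle carry flow in the \emph{same} direction. This directional consistency forces at most $n_e$ paths to cross any $\delta$ vertex, whence they can be assigned pairwise distinct labels $t\in[n_e]$ and made to enter and exit on same-$t$ sub-edges --- these are the paper's ``$\delta$-restricted edge-disjoint paths''. Once such paths are secured, the explicit assignment in eq.~\ref{assignment} (Kronecker-$\delta$ on each path's two legs, fix-to-$1$ on every off-path leg) realises your intended ``absorb the third leg'' behaviour and yields the $p^L\times p^L$ identity block. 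So your all-ones contraction idea is essentially the right endpoint, but it presupposes the $\delta$-compatibility of the Menger paths, and supplying that compatibility is the substantive technical step you have not provided.
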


It is noteworthy, that lemma~\ref{lemma:powers_of_p} is stated similarly to claim~\ref{claim:upperbound}, with two differences: ${\boldsymbol{1)}}\min_C W_C$ appears as a lower bound on the rank of matricization of the convolutional weights tensor rather than an upper bound, and ${\boldsymbol{2)}}$  all of the channel numbers are restricted to powers of the same integer $p$.  That is to say, by proving this lemma we in fact show that the upper bound proven in claim~\ref{claim:upperbound} is tight for this quite general setting of channel numbers.

\medskip
\begin{proof}(of lemma~\ref{lemma:powers_of_p})

In the following, we provide an assignment of indices for the tensors in $\phi$ for which the rank of the matricization $\mat{\A^y}_{A,B}$ is at least: $\min_C W_C$. In accordance with claim~\ref{claim:rank_everywhere}, this will satisfy the lemma as it implies this rank is achieved for all configurations of the ConvAC network weights but a set of Lebesgue measure zero.

The proof of lemma~\ref{lemma:powers_of_p} is organized as follows. We begin with the construction of the TN $\phi^*$ presented in fig.~\ref{fig:phi_star} from the original network $\phi$, and the show that it suffices to analyse $\phi^*$ for our purposes. Next, we elaborate on the form that the $\delta$ tensors in $\phi$ assume when constructed in $\phi^*$. We then use this form to define the concept of $\delta$ restricted edge disjoint paths, which morally are paths from $A$ to $B$ that are guaranteed to be compliant with the form of a $\delta$ tensor when passing through it. Finally, we use such paths in order to provide an assignment of the indices for the tensors in $\phi^*$ which upholds the required condition.

\medskip
\underline{\emph{$\phi^*$ and the Equivalence of Ranks:}}
\medskip

\begin{figure}
\centering
\includegraphics[width=\linewidth]{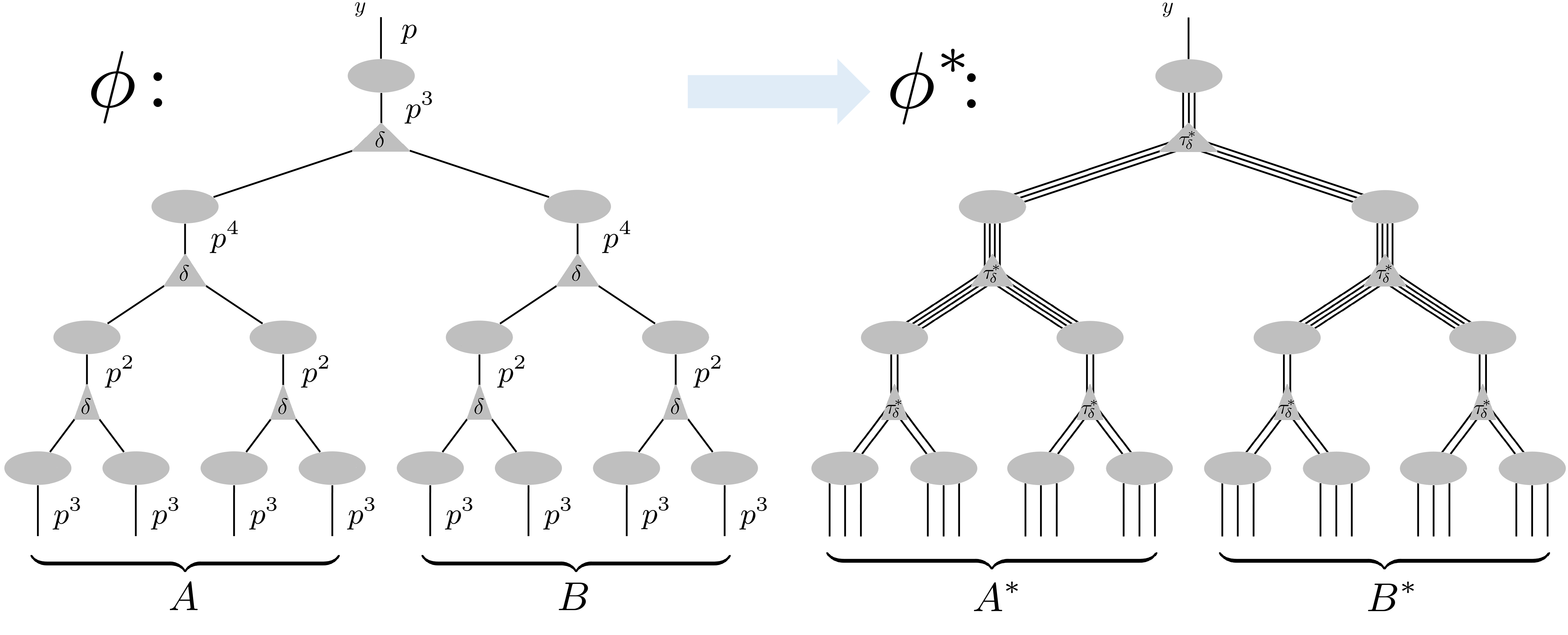}
\caption{An example for the construction of the TN $\phi^*$ out of the original TN $\phi$ which represents a deep ConvAC (section.~\ref{sec:translations:deep}), in the case where all of the bond dimensions are powers of some integer number $p$. $n_e$ edges with bond dimension $p$ are placed in $\phi^*$ in the position of each edge $e$ in $\phi$ that has a bond dimension $p^{n_e}$. This construction preserves the value of the minimal multiplicative cut between any two groups of external indices, $(A,B)$ in $\phi$ (here chosen as the left-right partition for example) which correspond to $(A^*,B^*)$ in $\phi^*$.}
\label{fig:phi_star}
\end{figure}
For the given partition $(A,B)$, denote $A=\{a_1,...,a_{|A|}\}~,~B=\{b_1,...,b_{|B|}\}$ and let $I_A \cupdot I_B=\{d_{1},\ldots,d_{N}\}$ be the corresponding partition of external indices, where $I_A=\{d_{a_1},...,d_{a_{|A|}}\}$ and $I_B=\{d_{b_1},...,d_{b_{|B|}}\}$. Let ${\cal H} ^A$ and ${\cal H} ^B$ with dimensions obeying eq.~\ref{totalExternalDimensions} be the spaces corresponding to the different configurations of the indices in $I_A$ and $I_B$, respectively.
We construct a TN $\phi^*$ with a graph $G^*(V^*,E^*)$ and a bond dimensions function $c^*~:~E^*\rightarrow\N$ for which there is a one-to-one correspondence between the tensor assignments in $\phi$ and tensor assignments in $\phi^*$, such that the resulting linear maps between ${\cal H}^A$ and ${\cal H}^B$ have the same rank. For each edge $e\in E$, denote $n_e := \log_p c(e)$. By the conditions of the lemma, $\forall e~:~n_e\in\N$ as $c(e)$ is a power of $p$ for all edges in $E$. The graph $G^*$ of the network $\phi^*$ is constructed as follows. Starting with $G^* = (V,\emptyset)$, for each edge $e = (u,v)\in E$ we insert $n_e$ parallel edges connecting $u$ to $v$ in $G^*$, to form the edge set $E^*$. Additionally, we define the bond dimensions function of the network $\phi^*$ to assign the value of $p$ to all of the added edges, \ie~ $\forall e^*\in E^*~:~c^*(e^*)=p$. In fig.~\ref{fig:phi_star} an example for such a construction of $\phi^*$ is shown for an $N=8$ ConvAC TN.

In the paragraphs below, we derive eq.~\ref{equiv_ranks} which shows that an analysis of $\phi^*$ suffices for our purposes. This result is intuitive in some sense, as the construction of $\phi^*$ keeps intact the architecture of the network and the distribution of the degrees of freedom to some extent. As it is the key to our proof, we formulate this argument hereinafter.

As each edge $e\in E$ was translated into $n_e$ edges in $E^*$, there are $N^*:=\log_p ({\textrm {dim}}({\cal H} ^A)\cdot{\textrm {dim}}({\cal H} ^B))$ external edges in $\phi^*$. Let $\A^*$ be the order $N^*$ tensor obtained by the contraction of the TN $\phi^*$. We denote by $(A^*,B^*)$ the partition of $[N^*]$ which corresponds to the partition $(A,B)$ of $[N]$. This means that an index number in $\A^*$ corresponding to an edge $e^*\in E^*$ would be in $A^*$ (resp. $B^*$) if the edge $e\in E$ from which it originated corresponded to an index number $\A$ that was in $A$ (resp. $B$). This is easily understood pictorially, see fig.~\ref{fig:phi_star}. Accordingly denote the corresponding partition of the degree $1$ vertices in $G^*$ by $(V^{A*},V^{B*})$. We will now show that the rank of the matricization of  $\A$ w.r.t. the partition $(A,B)$ is equal to the rank of the matricization of  $\A^*$ w.r.t. the partition $(A^*,B^*)$.

We denote by $\tau_v$ the tensors corresponding to a vertex $v\in V$ in the network $\phi$, and by $\tau^*_v$ the tensors corresponding to the same vertex $v$ in the network $\phi^*$. Let $z$ be the order of $\tau_v$, and denote the set of edges in $E$ incident to $v$ by $\{e_{k_1},...,e_{k_z}\}$ where $k_1,...,k_z$ are the corresponding indices. For every index $k_j,~j\in[z]$, let $K^{*j}=\{k^{*j}_1,...,k^{*j}_{n_{e_{k_j}}}\}$ be the indices corresponding to the edges which were added to $\phi^*$ in the location of $e_{k_j}$ in $\phi$. According to the construction above, there is a one-to-one correspondence between the elements in $K^{*j}$ and $k_j$, that can be written as:
\be
\label{index_trans}
k_j=h(K^{*j}):=1+\sum_{t=1}^{n_{e_{k_j}}} p^{t-1}(k^{*j}_t-1),
\ee
where $h~:~[p]^{\otimes n_{e_{k_j}}}\rightarrow[p^{n_{e_{k_j}}}]$.
Thus, if one has the entries of the tensors in $\phi^*$, the following assignment to the entries of the tensors in $\phi$:
\be
\label{phi_star_to_phi}
(\tau_v)_{k_1...k_z}= (\tau^*_v)_{h(K^{*1})...h(K^{*z})}
\ee
would ensure :
\be
\label{equiv_ranks}
~{\textrm {rank}} (\mat{\A}_{A,B})={\textrm {rank}} (\mat{\A^*}_{A^*,B^*}).
\ee
Effectively, we have shown that the claim to be proved regarding ${\textrm {rank}} (\mat{\A}_{A,B})$ can be equivalently proved for ${\textrm {rank}} (\mat{\A^*}_{A^*,B^*})$.

\medskip
\underline{\emph{The Form of the $\delta$ Tensor in $\phi^*$:}}
\medskip

It is worthwhile to elaborate on the form of a tensor in $\phi^*$ which corresponds to an order $3$ $\delta$ tensor in $\phi$. We denote by $\tau^v_\delta=\delta$ a $\delta$ tensor in $\phi$, and by $\tau^{*v}_\delta$ the corresponding tensor in $\phi^*$. Fig.~\hyperref[fig:app_B_supp]{~\ref{fig:app_B_supp}(a)} shows an example for a transformation to $\phi^*$ of an order $3$ $\delta$ tensor in $\phi$, all edges of which uphold $n_{e}=2$. From eqs.~\ref{index_trans} and~\ref{phi_star_to_phi}, and from the form of the $\delta$ tensor given in eq.~\ref{eq:deltadef}, it is evident that in this case an entry is non-zero in $\tau^{*v}_\delta$ only when $k^{*1}_1=k^{*2}_1=k^{*3}_1~{\it{and}}~k^{*1}_2=k^{*2}_2=k^{*3}_2$. In the general case, the condition for an entry of $1$ in $\tau^{*v}_\delta$ is:
\be
~\forall t\in[n_{e}]:~k^{*1}_t=k^{*2}_{t}=k^{*3}_{t},
\ee
where $n_{e}=\log_p c(e)$ for any edge $e$ incident to $v$ in $G$. Hence, a tensor $\tau^{*v}_\delta$ in $\phi^*$ which corresponds to a $\delta$ tensor in $\phi$ can be written as:
\be
\label{delta_star}
\tau^{*v}_\delta=\delta_{k^{*1}_1k^{*2}_1k^{*3}_1}\delta_{k^{*1}_2k^{*2}_2k^{*3}_2}...\delta_{k^{*1}_{n_{e}}k^{*2}_{n_{e}} k^{*3}_{n_{e}}}.
\ee

\medskip
\underline{\emph{$\delta$ Restricted Edge Disjoint Paths}}
\medskip

Consider an edge-cut set in $G$ that achieves the minimal multiplicative weight over all cuts w.r.t the partition $(V^A,V^B)$ in the graph $G$: $~C_{min}\in \argmin_C W_C$, and consider the corresponding edge-cut set $~C^*_{min}$ in $G^*$ s.t. for each edge $e\in ~C_{min}$, the $n_e$ edges constructed from it are in $~C^*_{min}$. By the construction of $G^*$, there are exactly $L:=\log_p(\min_C W_C)$ edges in $~C^*_{min}$ and their multiplicative weight upholds $W_{C^*_{min}}=W_{C_{min}}=p^L$.

A search for a minimal multiplicative cut, can be generally viewed as a classical min-cut problem when defining a maximum capacity for each edge that is a logarithm of its bond dimension. Then, a min-cut/max-flow value can be obtained classically in a graph with additive capacities and a final exponentiation of the result provides the minimal multiplicative value of the min-cut. Since all of the bond dimensions in $\phi^*$ are equal to $p$, such a process results in a network with all of its edges assigned capacity $1$. From the application of theorem~\ref{theorem:menger} on such a graph, it follows that the maximal number of \emph{edge disjoint paths} between $V^{A*}$ and $V^{B*}$ in the graph $G^*$, which are paths between $V^{A*}$ and $V^{B*}$ that are allowed to share vertices but are not allowed to share edges, is equal to the cardinality of the minimum edge-cut set $~C^*_{min}$ . In our case, this number is $L$, as argued above. Denote these edge disjoint paths by $q_1,...,q_L$.

In accordance with the form of $\tau^{*v}_\delta$, the tensors in $\phi^*$ corresponding to $\delta$ tensors in $\phi$ given in eq.~\ref{delta_star}, we introduce the concept of \emph{$\delta$ restricted edge disjoint paths} between $V^{A*}$ and $V^{B*}$ in the graph $G^*$, which besides being allowed to share vertices but not to share edges, uphold the following restriction. For every $\delta$ tensor $\tau^v_\delta$ of order $3$ in the graph $G$, with $e\in E$ a representative edge incident to $v$ in $G$, a maximum of  $n_e$ such paths can pass through $v$ in $G^*$, each assigned with a different number $t\in[n_e]$. The paths uphold that when passing through $v$ in $G^*$ each path enters through an edge with index $k^{*j_{in}}_{t_{in}}$ and leaves through an edge with index $k^{*j_{out}}_{t_{out}}$ only if $~j_{in}\neq j_{out}~:~j_{in},j_{out}\in[3]$ and $~t_{in}=t_{out}=t$, where no two paths can have the same $t$. This restriction imposed on the indices of $\tau^{*v}_\delta$ in $\phi^*$, to be called hereinafter the $\delta$ restriction, is easily understood pictorially, e.g. in fig.~\hyperref[fig:app_B_supp]{~\ref{fig:app_B_supp}(a)} the paths crossing the $\tau^{*v}_\delta$ tensor must only contain edges of the same color in order to uphold the $\delta$ restriction.

We set out to show, that for the network in question one can choose the $L$ edge disjoint paths to uphold the $\delta$ restriction. Then, a weight assignment compliant with the $\delta$ tensors in the network can be guaranteed to uphold the requirements of the lemma, despite the fact that most of the entries in the  $\delta$ tensors are equal to zero.

\begin{figure}
\centering
\includegraphics[width=\linewidth]{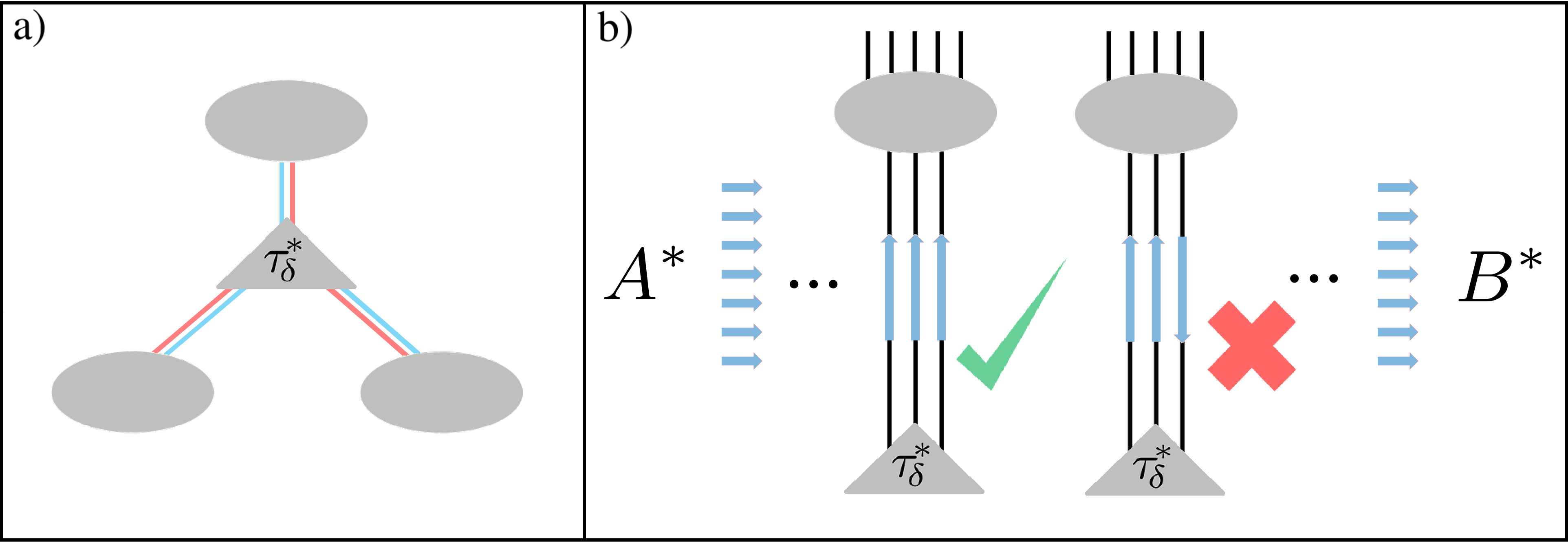}
\caption{a) An example for the tensor in $\phi^*$ which corresponds to a $\delta$ tensor $\tau^{v}_\delta\in\R^{p^2\times p^2\times p^2}$ in $\phi$. According to the construction of $\phi^*$ presented in fig.~\ref{fig:phi_star}, each edge is split into $n_e=2$ edges of bond dimension $p$. The $\delta$ tensor structure in $\phi$ translates into this $\tau^{*v}_\delta$ tensor holding a non-zero entry only when the indices corresponding to all of the edges that are marked by the same color are equal to each other (eq.~\ref{delta_star}). Additionally, paths crossing this $\tau^{*v}_\delta$ tensor must only contain edges of the same color in order to be called $\delta$ restricted edge disjoint paths. b) There are $L$ guaranteed edge disjoint paths between $V^{A*}$ and $V^{B*}$. In a flow directed from $V^{A*}$ to $V^{B*}$ (w.l.o.g), we argue that one can choose these paths such that they have the same flow direction in all edges in $\phi^*$ that originate from a certain edge in $\phi$.}
\label{fig:app_B_supp}
\end{figure}
Denote the set of $n_e$ edges in $G^*$ that originated from a certain edge $e$ in $G$, by $X^*_e\subset E^*$. We first show that one can choose the $L$ edge disjoint paths s.t. in a flow directed from $V^{A*}$ to $V^{B*}$ w.l.o.g, there is no set of edges $X^*_e$ that corresponds to any $e\in E$ for which two edges $e^*_i,e^*_j\in X^*_e\subset E^*$ belong to paths $q_i,q_j$ which flow in opposite directions. Fig.~\hyperref[fig:app_B_supp]{~\ref{fig:app_B_supp}(b)} clarifies this claim.

We observe the classical max-flow in the graph $G$, \ie~when assigning a maximum capacity for each edge $e$ that is equal to $n_e:=\log_pc(e)$, a maximum flow of $L$ is possible between $V^A$ and $V^B$ in $G$. Observing the paths in $G$ that flow w.l.o.g. from $V^A$ to $V^B$, together they can transfer a maximum capacity of $L$. Note that in $G$, these paths most certainly do not need to be edge disjoint paths. We argue that one can choose such paths from $V^A$ to $V^B$ in $G$ such that on each edge $e$ there is an integer capacity transferred. The existence of such paths in $G$ follows directly from the integral flow theorem (\cite{dantzig1956min}), which states that if each edge has integral capacity, then there exists such an integral maximal flow. Note, that these paths must also uphold the basic rule that the sum of capacities transferred on a certain edge $e\in E$, even if this is performed via several paths, is less than the edge maximum capacity $n_e$.

One can now construct $L$ paths in $G^*$ in a recursive manner, if a maximum additive capacity for each edge $e*\in E^*$ is similarly defined to be $\log_pc^*(e^*)=\log_pp:=1$. Starting with a single quanta of flow along some path in $G$, construct a single path in the corresponding position in $G^*$. Each edge that is part of this path in $G^*$ will transfer exactly one quanta of flow, as that is their maximum capacity that is chosen to be saturated in order to transfer the same amount of capacity that is transferred in $G$. Now, remove the full edges in $G^*$ and reduce the capacities of all edges along the original path in $G$ by one. Repeating this process until a capacity of $L$ is transferred in both graphs, since $n_e$ is the number of new edges added to $G^*$ in the place of each edge $e$, and it is also an upper bound on the \emph{integer} capacity this path transfers in $G$, it follows that in $G^*$ one finds $L$ paths between $V^{A*}$ and $V^{B*}$ that correspond exactly to the paths transferring integer capacity in $G$ guaranteed by integral flow theorem. These paths are edge disjoint since the edges of each path were removed from the graph when constructed. Choosing precisely these edge disjoint paths in $G^*$, one is guaranteed that the flow from $V^{A*}$ to $V^{B*}$ in all of the edges in $X^*_e$ that belong to these paths would be in the same direction, as they originated in the same edge $e$ in $G$ that had a flow in that single specific direction from $A$ to $B$. Pictorially, since the different edges in $X^*_e$ all originate from one single edge that obviously cannot have two opposite directions of net flow, they can all be chosen to transfer flow in the same direction.

Observing an order $3$ $\delta$ tensor $\tau^{v}_\delta$ in $\phi$, denote the three edges incident to $v$ in $G$ by $e_{1},e_{2},e_{3}\in E$, and denote $n_e:=n_{e_{1}}=n_{e_{2}}=n_{e_{3}}$. Now that we have asserted that all of the $L$ edge disjoint paths may uphold the above condition, we choose the paths as such, \ie~under this choice all of the edges in each respective set pass flow from $V^{A*}$ to $V^{B*}$ in the same direction. In this case, a maximum of $n_e$ paths can pass through the delta tensor. This can be easily understood by the following argument. Denote a set $X^*_{e_{i}}$ by `I' if the paths passing through its edges are incoming to the $\delta$ tensor in a flow from $V^{A*}$ to $V^{B*}$, and by `O' if they are outgoing from the $\delta$ tensor in such a flow. W.l.o.g. we assume that $X^*_{e_{1}},X^*_{e_{2}}$ are denoted by `I' and $X^*_{e_{3}}$ is denoted by `O'. in this case, only $n_e$ such edge disjoint paths can flow \emph{out} of the $\delta$ tensor. In the opposite case, where two groups of edges out of the three are denoted by `O' and only one group is denoted by `I', only $n_e$ such edge disjoint paths can flow \emph{into} the $\delta$ tensor.  The contrary, \ie~if more than $n_e$ such paths were to cross the $\delta$ tensor, would imply a cross flow of edge disjoint paths in at least one of the sets $X^*_{e_{1}},X^*_{e_{2}},X^*_{e_{3}}$, in contradiction to this choice of paths.

This provides us with the ability to distribute the paths in the following manner, that upholds the $\delta$ restriction described above. Assume w.l.o.g that $X^*_{e_{1}}$ is the set for which the most edges are in the chosen edge disjoint paths. Denote by $q_1,...,q_{N_2}$ the paths that include edges in $X^*_{e_{1}}$ and $X^*_{e_{2}}$, and by $q_{N_2+1},...,q_{N_2+N_3}$ the paths that include edges in $X^*_{e_{1}}$ and $X^*_{e_{3}}$. Finally, assign the index $t$ to the path $q_{t}$. From the statement above, it is guaranteed that $N_2+N_3\leq n_e$. Therefore, this choice of paths is guaranteed to uphold the delta restriction defined above, which states that each path must received a \emph{different} value $t\in [n_e]$. Specifically, this implies that the maximal number of $\delta$ restricted edge disjoint paths between $V^{A*}$ and $V^{B*}$ in the graph $G^*$ is $L$.

\medskip
\underline{\emph{The Assignment of Weights:}}
\medskip

We give below explicit tensor assignments for all the tensors in $\phi^*$ so that ${\textrm {rank}} (\mat{\A^*}_{A^*,B^*})={\textrm {min}}~W_C$, which in accordance with eq.~\ref{equiv_ranks} implies that ${\textrm {rank}} (\mat{\A}_{A,B})$ upholds this relation. Together with the translation from $\phi^*$ to $\phi$ given in eq.~\ref{phi_star_to_phi}, this will constitute a specific example of an overall assignment to the TN representing the ConvAC which achieves the lower bound shown in the lemma.

Observing the form of $\phi^*$, an example for which is shown in fig.~\ref{fig:phi_star}, we see that it is composed of tensors that correspond to $\delta$ tensors in $\phi$, denoted by $\tau^{*v}_\delta$, and of tensors that correspond to weight matrices in $\phi$, denoted by $A^{*(l,j)}$. Recalling that the entries of the former are dictated by construction and obey eq.~\ref{delta_star}, we are left with providing in assignment of all the tensors $A^{*(l,j)}$. The weight matrices in a ConvAC TN uphold $A^{(l,j)}\in\R^{r_l\times r_{l-1}}$, thus the corresponding tensors $A^{*(l,j)}$ are of order $\log_p (r_l\cdot r_{l-1})$ by construction, with $\log_p r_l$ edges directed upwards in the network and $\log_p r_{l-1}$ edges directed downwards.  For clarity, we omit the superscript from $A^{*(l,j)}$ and write the indices of such a weights tensor as:
\be
A^{k_1...k_{\log_p r_l}}_{k_{\log_p r_l+1}...k_{\log_p (r_l\cdot r_{l-1})}}.
\ee

We choose $L$ paths between $V^{A*}$ and $V^{B*}$ in the graph $G^*$ which are $\delta$ restricted edge disjoint paths, denoted by $q_1,...,q_{L}$. We are guaranteed to have this amount of $\delta$ restricted edge disjoint paths by the arguments made in the previous subsection. For any weights tensor $A^{*(l,j)}$ in $\phi^*$, let $n\in [\textrm {min}(L,\left \lfloor{\frac{1}{2}\log_p (r_l\cdot r_{l-1})}\right \rfloor)]\vee\{0\}$ be the number of $\delta$ restricted edge disjoint paths crossing it.

Let $\{g_{1\alpha},g_{1\beta},...,g_{n\alpha},g_{n\beta}\}$ with $g_{ix}\in [\log_p (r_l\cdot r_{l-1})],~i\in[n],~x\in\{\alpha,\beta\}$ be the numbers representing indices of $A^{*(l,j)}$ which correspond to edges that belong to any path $q_j~,~j\in[L]$,~\ie~the set of such indices is $\{k_{g_{1\alpha}},k_{g_{1\beta}},...,k_{g_{n\alpha}},k_{g_{n\beta}}\}$. Denoting $\bar{n}:=log_p (r_l\cdot r_{l-1})-2n$, let $\{f_1,...,f_{\bar{n}}\}$ with $f_i\in [\log_p (r_l\cdot r_{l-1})],~i\in[\bar{n}]$ be the numbers representing the remaining indices of $A^{*(l,j)}$,~\ie~the indices which correspond to edges that are not on any path $q_j~,~j\in[L]$. The set of such indices is $\{k_{f_1},...,k_{f_{\bar{n}}}\}$. Finally, the assignment of the entries of $A^{*(l,j)}$ is given by:

\be
\label{assignment}
A^{k_1...k_{\log_p r_l}}_{k_{\log_p r_l+1}...k_{\log_p (r_l\cdot r_{l-1})}}=
\delta_{k_{g_{1\alpha}}k_{g_{1\beta}}}\cdots
\delta_{k_{g_{n\alpha}}k_{g_{n\beta}}}
\delta_{1k_{f_1}}\cdots
\delta_{1k_{f_{\bar{n}}}}.
\ee

Effectively, the assignment in eq.~\ref{assignment} for the weights tensors ensures that their indices which correspond to two edges that are adjacent in one of the paths $q_i$, must be equal in order for the term not to vanish in the contraction of the entire TN $\phi$ . Since the paths $q_i$ are $\delta$ restricted, the $\tau^{*v}_\delta$ tensors in $\phi^*$ which corresponds to a $\delta$ tensor in $\phi$ are also able uphold this rule a priori. By this assignment, in accordance with the form of $\tau^{*v}_\delta$ given in eq.~\ref{delta_star}, the indices corresponding to all of the edges in a path $q_i$ are in fact enforced to receive the same value, ranging in $[p]$, in order for the term not to vanish in the contraction. An additional result of this assignment, is that all of the indices in the network which correspond to edges that do not belong to any path $q_i$, must be equal to $1$ in order for the term not to vanish (\ie~when they receive values of $2,...,p$ the term vanishes upon contraction).

according to the rules of TNs introduced in sec.~\ref{sec:TensorNetworks:intro}, the overall tensor $\A^*$ represented by the network $\phi^*$ is calculated by a global contraction which is a summation over all of the internal indices. Under the assignment in eq.~\ref{assignment}, upon a simple rearrangement of the tensor modes s.t. indices corresponding to $A^*$ are in the left, indices corresponding to $B^*$ are in the right and the indices corresponding to paths are first in their respective regions\footnote{This does not affect the rank of matricization as it is still preformed w.r.t. the partition $(A^*,B^*)$, and switching rows or columns leaves the rank unchanged.}, the only non-zero entries of $\A^*$ are:
\be
\text{{\Large $\A^*$}}{d_{q_1}...d_{q_{L}}{\overbrace{1~~.~~.~~.~~1}^{\tiny |A^*|-L}}d_{q_1}...d_{q_{L}}{\overbrace{1~~.~~.~~.~~1}^{\tiny |B^*|-L}}}=\text{{\large 1}},
\ee
where $d_{q_1},...,d_{q_{L}}\in[p]$ are the indices corresponding to the paths $q_1,...,q_{L}$, respectively. Upon matricization of $\A^*$ w.r.t. the partition $(A^*,B^*)$, it is clear that a matrix of rank $p^{L}=\min_C W_C$ with  $I_{p^{L}\times p^{L}}$ on its upper left block and zeros otherwise is received, and the lemma follows.
\end{proof}
With all the preliminaries in place, the proof of theorem \ref{theorem:lowerbound} readily follows:
\medskip
\begin{proof}(of theorem \ref{theorem:lowerbound})

For a specific $p$, consider the network $\phi^p$ such as defined in theorem \ref{theorem:lowerbound}, \ie~a TN with the same connectivity as $\phi$, where all of the bond dimensions are modified to be equal the closest power of $p$ to their value in $\phi$ from below. Let $(\A^p)^y$ be the weights tensor represented by $\phi^p$. According to lemma~\ref{lemma:powers_of_p}, such a network upholds that the rank of the matricization $\mat{(\A^p)^y}_{A,B}$ is at least: $~\min_C W^p_C$ almost always. According to lemma~\ref{lemma:phi_p_to phi}, a specific assignment for the weights of the tensors in $\phi^p$ that achieves this value suffices to imply that $\mat{\A^y}_{A,B}$ is at least: $\min_C W^p_C$ almost always, e.g. the assignment given in lemma~\ref{lemma:powers_of_p}. Specifically, this holds for $\phi^p$ with $p\in\argmax_p \min_C W^p_C$, satisfying the theorem.
\end{proof}

\section{Upper Bound for a General Pooling Window Case} \label{app:gen_pool}

In order to apply similar considerations to the ConvAC with general sized pooling windows, such as the one presented in fig.~\ref{fig:original_convac}, one must consider more closely the restrictions imposed by the $\delta$ tensors. To this end, we define the object underlying a ConvAC-weights TN with general sized pooling windows $\phi$ to be composed of the following three:
\begin{itemize}
\vspace{-2mm}
\item An undirected graph $G(V,E)$, with a set of vertices $V$ and a set of edges $E$. The set of nodes is divided into two subsets $V=V^{\textrm {tn}}\cupdot V^{\textrm {inputs}}$, where $V^{\textrm {inputs}}$ are the $N$ degree-$1$ virtual vertices and $V^{\textrm {tn}}$ corresponds to tensors of the TN.
\vspace{-2mm}
\item A function $f~:~E\rightarrow[b+N]$, where $b$ is the number of $\delta$ tensors in the network. If we label each $\delta$ tensor in the network by a number $i\in [b]$, this function upholds $f(e)=i$ for $e\in E$ that is incident to a vertex which represents the $i^{th}$ delta tensor in the ConvAC TN. For each edge $e\in E$ incident to a degree $1$ vertex, this function  assigns a different number $f(e)=i$ for $i\in b+[N]$. Such an edge is an external edge in the ConvAC TN, which according to the construction presented in sec.~\ref{sec:translations} is the only type of edge not incident to any $\delta$ tensor. In words, the function $f$ divides al the edges in $E$ into $b+N$ groups, where edges are in the same group if they are incident to the same vertex which represents a certain $\delta$ tensor in the network.
\vspace{-2mm}
\item A function $c~:~[b+N]\rightarrow\N$, associating a bond dimension $r\in\N$ with each different group of edges defined by the set: $E_i=\{e\in E: f(e)=i\}$.

\vspace{-2mm}

\end{itemize}

Observing an edge-cut set with respect to the partition $(A,B)$ and the corresponding set $G^C=\{f(e): e\in C\}$. We denote the elements of $G^C$ by $g^C_i, i\in[|G_C|]$. These elements represent the different groups that the edges in $C$ belong to (by the definition of $f$, edges incident to the same delta tensor belong to the same group). We define the modified weight of such an edge-cut set $C$ as:
\be
\tilde{W}_C=\prod\nolimits_{i=1}^{\abs{G_C}} c(g^C_i).
\label{CutWeightModified}
\ee
The weight definition given in eq.~\ref{CutWeightModified} can be intuitively viewed as a simple multiplication of the bond dimensions of all the edges in a cut, with a single restriction: the bond dimension of edges in the cut which are connected to a certain $\delta$ tensor, will only be multiplied once (such edges have equal bond dimensions by definition, see eq.~\ref{eq:deltadef}). An example of this modified weight can be seen in  fig.~\ref{fig:mincutmod}, where the replacement of a general tensor by a $\delta$ tensor results in a change in the minimal cut, due to the rules defined above. In the following claim, we provide an upper bound on the ability of a ConvAC with a general pooling window to model correlations of its inputs, as measured by the Schmidt entanglement measure (see sec.~\ref{sec:CorrelationEntanglement:quantumentanglement})

\begin{figure}
\centering
\includegraphics[width=\linewidth]{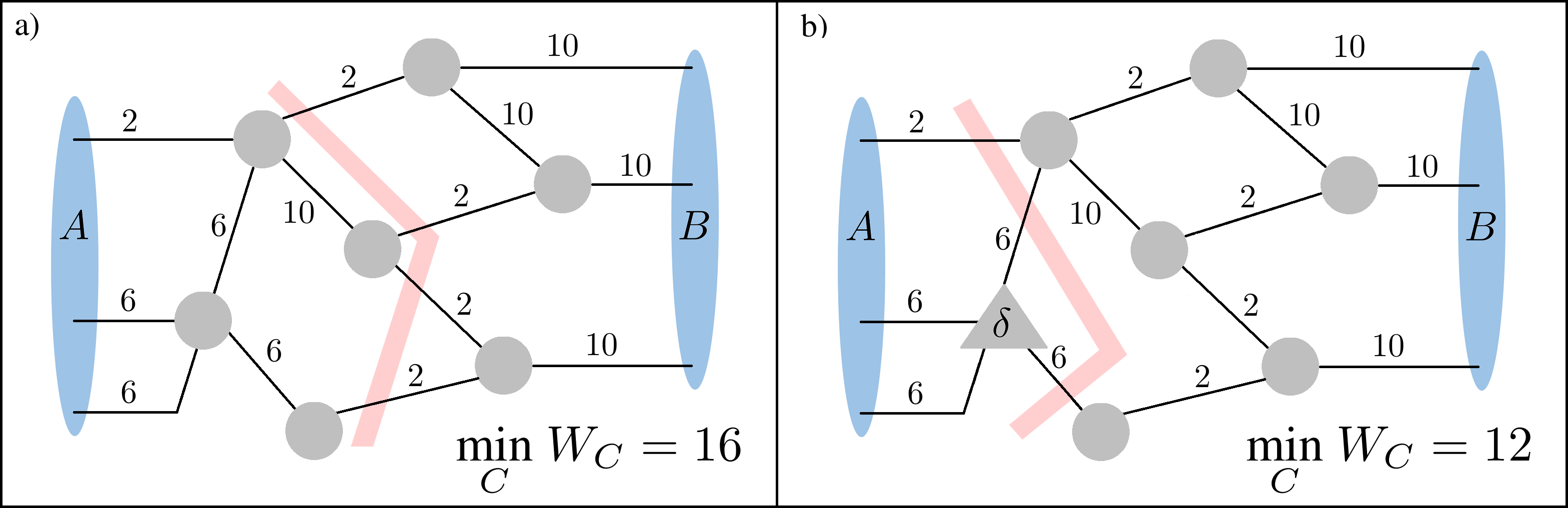}
\caption{An example for the effect that a $\delta$ tensor has on the upper bound on the rank of the matricization of the overall tensor represented by a TN. $~\min_C \tilde{W}_C$ is defined in eq.~\ref{CutWeightModified} and shown in claim~\ref{claim:upperbound_deltas} to be the upper bound on the rank of the matricization of the convolutional weights tensor of a ConvAC represented by a TN. In this example, the upper bound is reduced upon changing a single general tensor in the TN shown in a) (identical to fig.~\ref{fig:mincut} in the main text), whose entries are free to be equal any value, with a $\delta$ tensor in the TN shown in b) which obeys the constraint given in eq.~\ref{eq:deltadef}. The centrality of the $\delta$ tensor in the TN compliant with a shallow ConvAC (that is depicted in fig.~\hyperref[fig:CPExample]{~\ref{fig:CPExample}(b)}), is in effect the element which limits the expressiveness of the shallow network, as is discussed in sec.~\ref{sec:mincutclaim:implications_depth}}.
\label{fig:mincutmod}
\end{figure}

\begin{claim}\label{claim:upperbound_deltas}
Let $(A,B)$ be a partition of $[N]$, and $\mat{\A^y}_{A,B}$ be the matricization \wrt~$(A,B)$ of a convolutional weights tensor $\A^y$ (eq.~\ref{eq:convac}) realized by a ConvAC depicted in fig.~\ref{fig:original_convac} with a general pooling window. Let $G(V,E,f,c)$ represent the TN $\phi$ corresponding to the ConvAC network, and let $(V^A,V^B)$ be the vertices partition of $V^{\textrm {inputs}}$ in $G$ corresponding to $(A,B)$. Then, the rank of the matricization $\mat{\A^y}_{A,B}$ is no greater than: $~\min_C \tilde{W}_C$, where $C$ represents a cut w.r.t $(V^A,V^B)$ and $\tilde{W}_C$ is the modified multiplicative weight defined by eq.~\ref{CutWeightModified}.
\end{claim}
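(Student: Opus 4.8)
The plan is to reuse the proof of Claim~\ref{claim:upperbound} (appendix~\ref{app:Proofs:Upperbound}) essentially unchanged up to its central matrix factorization, and then inject one new observation exploiting the super-diagonal structure of the $\delta$ tensors. First I would fix an arbitrary cut $C$ with respect to $(V^A,V^B)$, which induces a vertex bipartition $\tilde{V}^A\cupdot\tilde{V}^B=V$ with $V^A\subset\tilde{V}^A$, $V^B\subset\tilde{V}^B$; arrange the graph so that $A$ is on the left and $B$ on the right as in fig.~\hyperref[fig:upper_bound]{~\ref{fig:upper_bound}(a)}; contract all tensors in $\tilde{V}^A$ into a single tensor $\X$ and all tensors in $\tilde{V}^B$ into a single tensor $\Y$. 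Exactly as in appendix~\ref{app:Proofs:Upperbound} this produces $\mat{\A^y}_{A,B}=\mat{\X}_{A,C}\cdot\mat{\Y}_{C,B}$, a product of two matrices sharing an index $m$ that ranges over all configurations of the cut indices $I_C=\{k_1,\ldots,k_{\abs{C}}\}$. With no further structure the inner dimension is $W_C=\prod_i c(e_{k_i})$, the bound of Claim~\ref{claim:upperbound}; the task is to show the $\delta$ tensors shrink the effective inner dimension to $\tilde{W}_C$.

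The key step concerns a single $\delta$ node. Since $C$ is an edge-cut, every vertex of $G$ — in particular every vertex carrying a $\delta$ tensor — lies entirely in $\tilde{V}^A$ or in $\tilde{V}^B$, and is therefore absorbed into either $\X$ or $\Y$; moreover all cut legs of such a node go to the opposite side. Consider a $\delta$ node $v$ with at least one incident edge in $C$, and suppose w.l.o.g.\ $v\in\tilde{V}^A$, so that it is contracted into $\X$. By eq.~\ref{eq:deltadef} the tensor $\delta$ is non-zero only on its super-diagonal, forcing all of its legs — the cut legs and the internal left legs alike — to carry equal index values; after summing out the internal legs, $\X$ is supported only on configurations in which all cut legs incident to $v$ are mutually equal. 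By the construction of sec.~\ref{sec:translations}, the cut edges incident to a common $\delta$ node form exactly one group $E_i=\{e:f(e)=i\}$ of \emph{equal} bond dimension, so these legs jointly range over a single value in $[c(i)]$ rather than over the full product of their bond dimensions. External cut edges, being incident to no $\delta$ node, remain free and each forms its own singleton group.

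Combining these constraints, $\mat{\X}_{A,C}$ is supported on the subspace $\mathcal{S}_A$ of cut configurations respecting the tie constraints of the left-side $\delta$ nodes, and symmetrically $\mat{\Y}_{C,B}$ on the subspace $\mathcal{S}_B$ of the right-side ones. Because each internal edge is incident to exactly one $\delta$ node — hence each cut edge's group is tied by at most one side, or by neither if external — the tie constraints from the two subspaces act on disjoint edge groups, so $\dim(\mathcal{S}_A\cap\mathcal{S}_B)=\prod_{i\in G_C}c(g^C_i)=\tilde{W}_C$ as in eq.~\ref{CutWeightModified}. Writing $P$ for the $0/1$ coordinate projection of the shared index $m$ onto $\mathcal{S}_A\cap\mathcal{S}_B$, we have $\mat{\X}_{A,C}\mat{\Y}_{C,B}=\mat{\X}_{A,C}\,P\,\mat{\Y}_{C,B}$ (the inserted $P$ changes nothing, since $\mat{\X}_{A,C}$ already vanishes off $\mathcal{S}_A$ and $\mat{\Y}_{C,B}$ off $\mathcal{S}_B$), whence ${\textrm {rank}}(\mat{\A^y}_{A,B})\leq{\textrm {rank}}(P)=\tilde{W}_C$. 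As $C$ was arbitrary, minimizing over cuts gives ${\textrm {rank}}(\mat{\A^y}_{A,B})\leq\min_C\tilde{W}_C$, as claimed.

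I expect the main obstacle to be the bookkeeping that converts the per-$\delta$ equality constraints into the clean count $\tilde{W}_C$. One must verify that each internal edge is incident to exactly one $\delta$ tensor, so that the groups $E_i$ partition the non-external cut edges and their tie constraints do not interact; that every group appearing in $C$ carries a single well-defined bond dimension, which is guaranteed by eq.~\ref{eq:deltadef} forcing all modes of a $\delta$ tensor to share one dimension; and that a $\delta$ node with only a single cut leg imposes no constraint and therefore contributes its bond dimension exactly once — precisely matching the definition of $\tilde{W}_C$ as a product over the set $G_C$ of groups met by $C$ rather than over individual cut edges. Everything else is a verbatim reuse of the contraction-and-factorization argument already established for Claim~\ref{claim:upperbound}.
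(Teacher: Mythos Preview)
Your proposal is correct and follows essentially the same approach as the paper's own proof: reuse the factorization $\mat{\A^y}_{A,B}=\mat{\X}_{A,C}\mat{\Y}_{C,B}$ from Claim~\ref{claim:upperbound}, then observe that cut indices incident to a common $\delta$ tensor are forced equal, collapsing the effective inner dimension from $W_C$ to $\tilde{W}_C$. Your version is slightly more explicit---tracking which side of the cut each $\delta$ node lies on and phrasing the collapse via a coordinate projection $P$ onto $\mathcal{S}_A\cap\mathcal{S}_B$---whereas the paper simply replaces all tied cut indices by a single representative index $k^\alpha\in[q]$ and redefines $m\in[\tilde{W}_C]$; but these are two packagings of the same argument.
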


Having seen the proof of the claim~\ref{claim:upperbound} above and its accompanying graphics, the proof of the upper bound presented in claim~\ref{claim:upperbound_deltas} can be readily attained. The only difference between the two lies in the introduction of the $\delta$ tensors to the network, which allows us to derive the tighter lower bound shown in claim~\ref{claim:upperbound_deltas}.
\medskip
\begin{proof}(of claim~\ref{claim:upperbound_deltas})

The modification to the above proof of claim~\ref{claim:upperbound}  focuses on the coalescence of the cut indices $I_C$ into a single index $m\in [\prod_{i=1}^{|C|} c_{k_i}]$. Assume that any two indices of multiplicands in this product, denoted by $k_i$ and $k_j$, are connected to the same $\delta$ tensor that has some bond dimension $q:=c_{k_i}=c_{k_j}$. Upon contraction of the TN in fig.~\hyperref[fig:upper_bound]{~\ref{fig:upper_bound}(b)}, the cut indices are internal indices that are to be summed upon. However, whenever $k_i\in [q]$ and $k_j\in [q]$ are different, by the constraint imposed in the $\delta$ tensor definition (eq.~\ref{eq:deltadef}), the entire term vanishes and there is no contribution to the final value of $~\A_{d_1...d_N}$ calculated by this contraction. Thus, $k_i$, $k_j$ and any other index connected to the same $\delta$ tensor can be replaced by a representative index $k^\alpha \in [q]$ whenever they appear in the summation. $\alpha\in G^C$  upholding $c(\alpha)=q$, is the group index of the $\delta$ tensor, given by $\alpha=f(e_{k_i})=f(e_{k_j})$ with $e_{k_i}$ and $e_{k_j}$ the edges corresponding to the indices $k_i$ and $k_j$ in the network. Thus, the single index $m$ achieved by coalescing all of the cut indices can be defined in the range $m\in [\tilde{W}_C]$, with $\tilde{W}_C$ defined by eq.~\ref{CutWeightModified} upholding $\tilde{W}_C\leq\prod_{i=1}^{|C|} c_{k_i}$, where the equality is satisfied when no two edges in the cut are incident to the same $\delta$ tensor. Finally, the matricization $\mat{\A}_{A,B}$ can be written as a multiplication of two matrices as portrayed in fig.~\hyperref[fig:upper_bound]{~\ref{fig:upper_bound}(c)}:
\be
(\mat{\A}_{A,B})_{lr}=\sum_{m=1}^{\tilde{W}_C}(\mat{\X}_{A,C})_{lm}(\mat{\Y}_{C,B})_{mr},
\ee
$~l\in[{\textrm {dim}}({\cal H} ^A)],~r\in[{\textrm {dim}}({\cal H} ^B)]$. Recalling that the edge-cut set may include the external edges, we attain:
\be
{\textrm {rank}}(\mat{\A}_{A,B})\leq ~\underset{C}{\textrm {min}}~\tilde{W}_C.
\ee
\end{proof}

The result shown in claim~\ref{claim:upperbound_deltas} effectively reproduces the exponential depth efficiency result shown in \cite{cohen2016expressive}, as is explained in sec.~\ref{sec:mincutclaim:implications_depth}. In the deep network with pooling windows of size $2$ we were able to avoid such an influence of the $\delta$ tensor as $~\min_C \tilde{W}_C=\min_C W_C$ there. This is because the $\delta$ tensors in that network are only three legged (see fig.~\ref{fig:HT_TN}), which implies that a cut containing more than one $\delta$ tensor leg can be replaced by an equivalent cut containing only one leg of that $\delta$ tensor. This interestingly implies that for pooling windows of size $2$, the restriction imposed by the $\delta$ tensors, which we have shown to be effectively the same channel pooling restriction, does not harm the expressivity. In this special case, we have effectively shown that choosing a seemingly more elaborate pooling scheme which mixes all channels would not have benefitted the expressivity of the network.

\section{Upper Bound Deviations Simulation} \label{app:Simulation}

In this section, we describe simulations performed on an $N=16$ deep ConvAC TN (with pooling windows of size $2$), which are aimed at quantifying the prevalence of deviations from the upper bound on the ranks of the matricization of convolutional weights tensor presented in claim~\ref{claim:upperbound}. In appendix~\ref{app:Proofs:Lowerbound} we proved theorem~\ref{theorem:lowerbound}, showing in effect that this upper bound is tight when all of the channel numbers are powers of some integer $p$, and guaranteeing a positive result in all cases. However, for the general setting of channel numbers there is no theoretical guarantee that the upper bound is tight. Indeed, \cite{cui2016quantum} show a counter example where the matricization rank is effectively lower the minimal multiplicative cut for a general TN (that has no $\delta$ tensors such as in the ConvAC TN). There is no reason to believe that the upper bound is tight for the TN representing a ConvAC for a general setting of channel numbers, and indeed our simulations below show deviations from it. However, as is indicated below such deviations are negligible in prevalence and low in value. A theoretical  formulation of this is left for future work.

The experiments were preformed in matlab, and tensor contractions were computed using a function introduced by \cite{pfeifer2014ncon}. An $N=16$ with $M=2$ ConvAC TN was constructed (see figs.~\ref{fig:HT_TN} and~\ref{fig:HT_TN_Building_Block}), with the entries of the weights matrices randomized according to a normal distribution. The bond dimensions of layers $0$ through $3$ were drawn from the set of the first $6$ prime numbers: $[2,3,5,7,11,13]$, to a total of $360$ different arrangements of bond dimensions. This was done in order to resemble a situation as distinct as possible from the case where all of the bond dimensions are powers of the same integer $p$, for which the tight upper bound is guaranteed. Per bond dimension arrangement, all of the $\frac{1}{2}\cdot {16\choose8}=6435$ different partitions were checked, for a total of $360\cdot6435=2.3166\cdot 10^6$ different configurations. As argued in appendix~\ref{app:Proofs:Lowerbound}, the logarithm of the upper bound on the rank of the convolutional weights tensor matricization that is shown in claim~\ref{claim:upperbound}, is actually the max-flow in a network with the same connectivity that has edge capacities which are equal to the logarithm of the respective bond dimensions. Therefore, a configuration for which the rank of matricization is equal to the exponentiation of the max-flow through such a corresponding network, effectively reaches the upper bound. We calculated the max-flow independently for each configuration using the Ford-Fulkerson algorithm (\cite{ford1956maximal}), and set out to search for deviations from such an equivalence.

The results of the above described simulations are as follows. Only $1300$ configurations, which constitute a negligible fraction of the $2.3166$ million configurations that were checked, failed to reach the upper bound and uphold the min-cut max-flow equivalence described above. Moreover, in those rare occasions that a deviation occurred, the percentage of deviations from the upper bound did not exceed $10\%$ of the value of the upper bound. This check was performed on a bond setting that is furthest away from all channel numbers being powers of the same integer, yet the tightness of the upper bound emerges as quite robust, justifying experimentally our general view of the minimal weight over all cuts in the network, $~\min_C W_C$, as the effective indication for the matricization rank of the convolutional weights tensor w.r.t. the partition of interest. A caveat to be stated with this  conclusion is that we checked only up to $N=16$, and the discrepancies that were revealed here might become more substantial for larger networks. As mentioned above, this is left for future theoretical analysis, however the lower bound shown in theorem~\ref{theorem:lowerbound} guarantees a positive result regarding the rank of the matricization of the convolutional weights tensor in all of the cases.\end{document}